\DeclareMathOperator*{\argmax}{argmax}
\DeclareMathOperator*{\sat}{sat}
\DeclarePairedDelimiter{\ceil}{\lceil}{\rceil}
\begin{document}
\begin{sloppy}

\title{PAC Guarantees for Cooperative Multi-Agent Reinforcement Learning with Restricted Communication}

\author{\name Or Raveh \email sravehor@campus.technion.ac.il \\
       \addr Viterbi Faculty of Electrical Engineering\\
       Technion, Israel
       \AND
       \name Ron Meir \email rmeir@ee.technion.ac.il \\
       \addr Viterbi Faculty of Electrical Engineering\\
       Technion, Israel
       }
\editor{Aryeh Kontorovich and Gergely Neu}

\maketitle

\begin{abstract}
%
Cooperative multi-agent systems have obvious advantages over single-agent systems due to shared information. However, an important limitation is the presence of restrictions over the communication channels between different agents, ubiquitous in real-world systems. The effects of such restrictions on the performance of exploration in multi-agent Reinforcement Learning have not been investigated thoroughly theoretically. We develop a theoretically motivated algorithm based on model-free Reinforcement Learning for cooperative multi-agent systems, in a framework that allows noisy and resource limited sparse communication between agents. By analyzing this algorithm, we provide the first (to the best of our knowledge) PAC bounds for multi-agent Reinforcement Learning in this setting. Our algorithm optimally combines information from the resource-limited agents, thereby analyzing the tradeoff between noise and communication constraints and quantifying their effects on the overall system performance.
\end{abstract}

\begin{keywords}
  Reinforcement Learning, Cooperative Multi Agent Systems, Restricted Communication
\end{keywords}

\section{Introduction}\label{sec:Introduction}

In recent years, there has been an increased interest in distributed artificial intelligence, and particularly in multi-agent systems (MAS) learning, such as Learning in Games \citep{MAReview}. MAS further divides into competitive MAS, where agents have contradicting goals \citep{silver2018general,bansal2017emergent}, and cooperative MAS in which agents cooperate in order to perform some task, such as information gathering \citep{viseras2018distributed}, traffic light control \citep{srinivasan2006cooperative} and navigation \citep{robinson2002using}. Cooperative MAS have obvious advantages over single-agent approaches, as the multiple agents can share information and assume designated roles that achieve the goal in a more effective manner \citep{panait2005cooperative}. It can be either centralized, in which case there is a single entity performing learning \citep{Pazis2016}, or decentralized where there is more than one learner \citep{omidshafiei2017deep}. While there are some environments in which a decentralized approach is preferable, there are also cases where the centralized approach is advantageous, due to the fact that the single learner has information regarding the whole system \citep{panait2005cooperative}. 

While cooperative MAS have obvious advantages over single-agent systems due to shared information, an important limitation is the presence of restrictions over the communication between different agents which, in some cases, can even harm  the system's performance \citep{NoisyControl}. Such constraints can result from various reasons, such as thermal noise \citep{baran2010modeling}, quantization error \citep{walden1999analog}, latency \citep{roth2007execution}, communication failure \citep{akyildiz2002wireless} etc. While some theoretical results for cooperative MAS disregard such restrictions, these appear  to be not only an important, but also a necessary consideration, as it has been argued that multi-agent systems with perfect communication are analogous to a single agent system with multiple effectors \citep{stone2000multiagent}. In order to resolve this issue, many works assume a sparse or local communication setting, such as gossip algorithms \citep{korda2016distributed} and neighbors-only communication \citep{martinez2018decentralized}, in the hope of helping to reduce the burden over the communication network \citep{gupta2000capacity}. However, the effects of sparse communication on the performance of RL algorithms and the relation to the communication noise strength has not been investigated thoroughly theoretically, and a fundamental question is in which cases sparse communication is indeed advantageous to solving the task. 

While there are various learning methods that can be used in cooperative MAS systems, we focus on Reinforcement Learning (RL) which has demonstrated impressive recent success in many domains, notably game playing \citep{silver2018general}, and, more specifically, on Multi Agent RL (MARL). In standard Model Free (MF) RL, a learner aims to develop an optimal policy based on interaction with the environment through state observations and action selection. While there have been many heuristic algorithms suggested for cooperative MARL \citep{Dimakopoulou2018a, Dimakopoulou2018, foerster2016learning}, theoretical results are lacking, and there are even fewer of them in the restricted communication setting. Works analyzing the exploration efficiency of RL algorithms usually use either Probably Approximately Correct (PAC) performance measures \citep{Pazis2016, Dann2017, Lattimore2014}, or regret bounds \citep{Osband2014}. In this paper we focus on the former, in which there are a few results known for the unlimited communication MARL case \citep{Guo2015,Pazis2016a}, but no results for the restricted communication case have been established to the best of our knowledge. 

More concretely, we focus on cooperative MARL with a centralized learner, see Figure \ref{fig:model_diagram}. Consider an agent learning to act in a stochastic environment. Unlike the standard RL setting, the learner has direct access to the environment only through a set of $N$ agents, acting concurrently. Based on the state/action/reward information received from the agents, the learner constructs a $Q$-function representing its estimate for the value of the states and actions, and tries to learn an optimal policy that maximizes its expected cumulative reward. In addition to the centralized learner-agents interaction, the agents themselves interact through a (fully connected or sparse) connectivity graph. Motivated by the ubiquity of communication constraints in real-world problems discussed earlier, we allow learner-agent and agent-agent communications to be limited by channel noise and bandwidth restrictions. While in the special case of perfect communication there is no need for inter-agent communication given a single learner, we show in this work that, under restricted communication, inter-agent communication enables improvement of performance even in the case of is a single learner.

The main contributions of the present work are the following.  \emph{(i)} We provide the first (to the best of our knowledge) PAC bounds for MARL in the noisy communication setting. \emph{(ii)}  We suggest an algorithm that optimally combines information from the agents in order to reduce the effects of communication noise, and demonstrate its performance on a simple problem. \emph{(iii)} We study the effects of limited communication resources on performance, and highlight the tradeoff between specific kinds of communication noise and the sparsity of the communication network, and provide provably effective means to balance them.

The special case of perfect communication cooperative MARL was theoretically analyzed in PAC terms in \citep{Guo2015,Pazis2016}. More recently, empirical work by  \citep{Dimakopoulou2018a, Dimakopoulou2018} suggested necessary properties for effective cooperative MAS exploration, specifically commitment, diversity and adaptivity, and demonstrated that exploiting them improves exploration. Other interesting work focuses on adaptively learning how to perform communication between agents, in addition to learning the desired task \citep{Sukhbaatar2016,Mordatch2017,Jiang2018}. It was also recently demonstrated that single-agent meta-learning can be improved by using parallel agents which coordinate their exploration \citep{Parisotto2019}. Another recent theoretical work \citep{Doan2019} demonstrated finite time convergence of the TD(0) algorithm for the distributed multi-agent case. 

\section{Background, model, and definitions}
\label{sec:Background, model, and definitions}
Consider a set of $N$ agents acting concurrently in the same environment and transmitting their state-action information to a learner. For each agent, the setup is modeled as a $5$-tuple $(S,A,P,R,\gamma)$. Here $S$ is the finite state space, $A$ is the finite action space, $P$ is the dynamics probability law (given state $s$ and action $a$, the probability to traverse to state $s'$ is $p(s'|s,a)$), $R(s,a,s')$ is the reward function, and $\gamma$ is the discount factor\footnote{The results in this paper hold, with appropriate modifications, for a a continuous state space, but we focus for simplicity on the finite state space.}. This is a specific case of a dynamic Stochastic Game \citep{MAReview} in which agents are cooperative (having the same reward function) and have the same set of actions. The $Q$-function under policy $\pi$ for some agent is defined as $Q^{\pi}(s,a)=E^{\pi}\left[\sum_{t=0}^{\infty} \gamma^{t}R(s_{t},a_{t},s_{t+1})|s_{0}=s,a_{0}=a\right]$, and the optimal $Q$-function is $Q^{*}(s,a)=\max_{\pi}Q^{\pi}(s,a)$. The Bellman operator for $\pi$ is $B^{\pi}Q(s,a)=\sum_{s'} p(s'|s,a)[R(s,a,s')+\gamma Q(s',\pi(s'))]$. Similarly, the value function for $\pi$ is defined as $V^{\pi}(s)=E^{\pi}[\sum_{t=0}^{\infty} \gamma^{t}R(s_{t},a_{t},s_{t+1})\allowbreak|s_{0}=s]$. 
We also assume that all $Q$-functions are bounded in the interval $[0,Q_{\max}]$, where $Q_{\max}=R_{\max}/(1-\gamma)$. 
\begin{figure}
\includegraphics[width=0.5\textwidth]{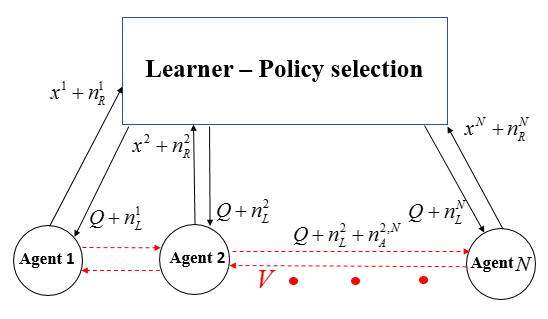}
  \centering
  \caption{\label{fig:model_diagram}A diagram representing the model. Agents collect world observations $x^{i}_{t}\triangleq (s^{i}_{t},a^{i}_{t},R^{i}_{t},s^{',i}_{t})$, and send them to a common learner. The learner performs value iteration and returns to the agents $Q$, a representing $Q$-function table. The agents perform a greedy policy over the received $Q$-function. In Our model, the $Q$-function $Q$ received from the learner by agent $i$ is noisy with some learner-agent noise $n_{L}^{i}$. $Q$-function table based communication between agents is allowed for directly connected agents on graph $\Gamma$, with an addition of agent-agent noise termed $n_{A}^{j,i}$ for agent $j$ to agent $i$. The sample $x_{i}$ sent to the learner by agent $i$ is also noisy, with a reward noise termed $n^{i}_{R}$.}
\end{figure}

As shown in Figure \ref{fig:model_diagram}, Each of the $N$ agents interacts with a MDP composed of \emph{unknown} transition probabilities $P$ and rewards $R$, and acts according to some policy within that environment. The agents send their observations to a joint learner who performs value iteration, and produces a $Q$-function table $Q$ with value $Q(s,a)$ for each state-action. The learner sends each of the agents a copy of its $Q$-function table, over which they perform a greedy policy.\footnote{It is enough to only send agent $i$ its $Q$-function value, and values for the other agents directly connected to it on $\Gamma$.} As stressed in Section \ref{sec:Introduction} a more realistic scenario is that in which communication between components is restricted or noisy. We consider a setting where communication restrictions modeled as additive noise are present (for more discussion on the types of noise this model holds for see Section \ref{sec:Communication noise exploration bounds}). 
Agent-learner noise termed $n_{R}^{i}$ is added to the rewards being sent from agent $i$ to the learner, where for a given reward $R^{i}(s,a,s')$ collected by agent $i$,  
\begin{equation}
\label{eq: reward noise}
   \widetilde{R}^{i}(s,a,s') = R^{i}(s,a,s')+n_{R}^{i}(s,a) \quad\forall (s,a) 
\end{equation} 
The learner has access to previously agent-collected samples which are used for value iteration. For each state-action $(s,a)$, the \emph{sample set} $u(s,a)$ denotes a collection of previously collected noisy rewards and next-state values, sent by agents that passed through $(s,a)$. In the opposite direction, learner-agent noise termed $n^{i}_{L}$ is present and adds to the $Q$-function being sent from the learner to agent $i$, resulting in a $Q$-function termed $\widetilde{Q}_{L}^{i}$. We also allow direct communication between pairs of agents, assuming that the agents lie on some given graph $\Gamma$ represented by a $0-1$ adjacency matrix $E$, so that every pair of directly connected agents $i,j$ such that $e_{ij}=1$ can send each other their own copy of the $Q$-function received from the learner, with an added agent-agent noise termed $n^{j,i}_{A}$ from agent $j$ to $i$. Such $Q$-functions are termed $\{\widetilde{Q}_{A}^{j,i}\}_{j \neq i}$. We denote the degree of agent $i$ on this (undirected) graph as $d_{i}$. Since the agent-agent communication channel adds further noise to the $Q$-function, there is a tradeoff between using the more precise $Q$-function sample received from the learner, and  the less precise $Q$ values received from the other agents. In Section \ref{sec:Estimation methods} an estimation method is introduced that allows each agent to obtain an improved \emph{estimated $Q$-function} with an effective noise term $n^i(s,a)$, denoted $\hat{Q}^{i}$, mitigating the inter-agent communication noise. Next, we formally define the different $Q$ functions used. 
%
\begin{equation}
    \begin{aligned}
    \label{eq: learner and agent noise}
      \widetilde{Q}^{i}_{L}(s,a) &= Q(s,a)+n^{i}_{L}(s,a) \\  \widetilde{Q}^{j,i}_{A}(s,a) &= Q(s,a)+n^{j}_{L}(s,a)+n^{j,i}_{A}(s,a) \quad \forall i,j \neq i \\
      \hat Q^i(s,a) &= Q(s,a)+n^{i}(s,a) 
    \end{aligned}
\end{equation}
In order to quantify the sample complexity of the algorithm, we use the \emph{Total Cost of Exploration} (TCE), Definition \ref{def:TCE}. This is the difference between the value function of the optimal policy and that of the policy used by the algorithm, summed over all times steps, and can easily be shown to dominate the number of sub-optimal steps. A single-agent algorithm is  \emph{efficient PAC-MDP} if its sample complexity, space complexity and computational complexity are all bounded by some polynomial in the quantities of the problem (see Definition \ref{def:PAC}), and a multi-agent algorithm is efficient-PAC MDP if the above holds uniformly for all agents. Other definitions we use throughout this work are introduced in Appendix \ref{sec:Appendix A: Efficient PAC proofs}. 
%
\section{Communication noise tolerant PAC Exploration}
\label{sec:Communication noise tolerant PAC Exploration}

We begin with an intuitive overview of the algorithm, described in pseudo-code in Algorithms \ref{alg:Communication noise tolerant PAC Exploration} and \ref{alg: Advance N agents}. The learner maintains a list of currently collected tuples of (state, action, reward, next state), from all agents,  as well as its $Q$ function estimate based on value iteration. The agents receive a noisy version of the learner's $Q$ function, as well as noisy $Q$ function values from other agents \eqref{eq: learner and agent noise}, construct an estimated $Q$ function, update their tuples and transmit them to the learner. 
\begin{algorithm}[t]
	\caption{Communication-noise tolerant PAC Exploration}\label{alg:Communication noise tolerant PAC Exploration}
	\begin{algorithmic}[1]
		\State \textbf{Initialization} Initialize learner's list to empty, parameters $k$,$k_{m}$,$\epsilon_{a}$, set $Q(s,a)=Q_{max}$ for every $(s,a)$, and sample sets $u(s,a),u^{tmp}(s,a)$ to empty, transmit $Q$ function to agents and set update to true. 
		\For {$t=1,2,\ldots$}
		   \State Advance $N$ agents using Algorithm $2$.
		   \For {agent $i=1,2,\ldots,N$}
			    \Comment{The learner updates the sample sets}
			\State Remove $(s_{i,t},a_{i,t},\widetilde{R}_{i,t},s_{i,t+1})\triangleq (s,a,\widetilde{R},s')$ from the list 
			\If { $|u(s,a)|<k$}
			    \State Add the current sample to $u^{tmp}(s,a)$ 
			    \If {$|u^{tmp}(s,a)|>|u(s,a)|$ \textbf{and} $|u^{tmp}(s,a)| = 2^{p}k_{m}$  where $p \in \mathbb{N}$}
		            \State Set $u(s,a) \gets u^{tmp}(s,a)$ and $u^{tmp}(s,a) \gets \varnothing$
		            \State set update to true 
			    \EndIf
			\EndIf
			\EndFor
			\If {update is true}
			    \While {$\exists (s,a)$  s.t $|\widetilde{B}Q(s,a)-Q(s,a)|> \epsilon_{a}$} \Comment{Value iteration}
				    \State Set $Q \gets \widetilde{B}Q$   
				\EndWhile
				\State Transmit $Q$ function to agents.  
			\EndIf					
	    \EndFor
	\end{algorithmic}
\end{algorithm} 
The contribution of this algorithm is the introduction of an effective cooperative multi-agent exploration scheme in the presence of restricted communication between different system components (manifested through the estimation of an improved $Q$ function). 

In Algorithm \ref{alg: Advance N agents}, $N$ agents receive noisy $Q$-function samples from the learner $\{\widetilde{Q}_{L}^{i}\}$ and adjacent agents  $\{\widetilde{Q}_{A}^{j,i}\}_{j \neq i}$, and linearly estimate the $Q$-function using some given weight matrix $W$ as in line $5$. Each agent then performs a greedy step on the estimated $Q$-function, and sends its collected sample of the environment with a noisy reward to the learner. In Section \ref{sec:Estimation methods} we state the estimation weights resulting in an optimal sample complexity bound.  

In Algorithm \ref{alg:Communication noise tolerant PAC Exploration},  in lines $4-13$ the algorithm updates the sample sets with the newly collected samples awaiting in list $Z$. In order to maintain the collected samples independent, for each state-action, the algorithm first waits for $2^{p}k_{m}$ new samples to be gathered for some parameter $k_{m}$ and integer $p$, before replacing the old sample set $u(s,a)$ with the new samples. To keep the computational and space complexity bounded, we limit the size of the sample set to no more than $k$, where $k/k_{m}$ is some power of $2$. Given that there was an update of some sample set, the algorithm performs value iteration on lines $15-17$ with convergence up to a factor of $\epsilon_{a}$, and then publishes noisy copies of the $Q$-function for all agents. The approximate Bellman operator $\widetilde{B}$ uses the median of means trick to estimate the mean-value for the collected samples \citep{Pazis2016a,Catoni,Sub-Gaussian}, and is defined in Appendix \ref{sec:Appendix A: Efficient PAC proofs}. In this regard, $k_{m}$ is the number of groups we divide the samples to in a sample set, in order to calculate the median. Note that if the learner does not update the $Q$-function on a given time-step, it does not re-publish it, and the agents continue using the latest published table. Recently, it was shown that in order to perform efficient cooperative exploration by multiple agents, a given agent should not change its policy at every time-step, but `commit' to a fixed policy expressing the exploration of a given part of the world for a limited amount of time \citep{Dimakopoulou2018a}, which agrees with our algorithm. 
\section{Communication noise exploration bounds}
\label{sec:Communication noise exploration bounds}
\begin{algorithm}
	\caption{Advance $N$ agents}\label{alg: Advance N agents}
	\begin{algorithmic}[1]
	\State Initialize: Adjacency matrix $E=\left[e_{ij}\right]$ and Weight matrix $W=\left[w_{ij}\right]$
        \For{$i\gets 1$ to $N$, from state s}
                	\If {update is true}{ $\forall (s,a)$} \Comment{Linear estimation of the $Q$-function}
                	    \State Receive $\widetilde{Q}_{L}^{i}(s,a)$ from the learner, and $d_{i}$ samples $\{\widetilde{Q}_{A}^{j,i}(s,a)\}_{j \neq i}$ from agents.
				        \State set $\hat{Q}^{i}(s,a) \gets w_{ii}\widetilde{Q}_{L}^{i}(s,a) + \sum_{j \neq i}^{N}e_{ji}w_{ji}\widetilde{Q}_{A}^{j,i}(s,a)$
				        \State set $\hat{Q}^{i}_{sat}(s,a) \gets \max\{Q_{max},\min\{0,\hat{Q}^{i}(s,a)\}\}$ 
				        \State set update to false.
			        \EndIf
			        \State set update to false. 
				    \State Perform action $a=\argmax_{\widetilde{a}}\hat{Q}_{sat}^{i}(s,\widetilde{a})$.
				    \State Receive reward $R$ and transition to state $s'$.
			\State Transmit a copy of $(s,a,\widetilde{R},s')$ to the a list $Z$ of the learner. \Comment{Reward is noisy}
            \EndFor
	\end{algorithmic}
\end{algorithm} 
The main contribution of this section is to establish PAC guarantees for the noisy communication setting, which we believe to be the first result for this case. Space and computational complexity guarantees and detailed proofs can be found in the Appendix. 
Before presenting the main sample complexity theorem, we describe the noise setup. In this work we focus on two sources of noise - additive noise and quantization noise. A white Gaussian additive noise is usually used to describe the channel noise resulting from thermal fluctuations in the receiver's circuits, and is the main limitation in space and satellite communication \citep{baran2010modeling,walden1999analog}. Quantization noise models analog to digital quantization performed at the receiver, and in some cases can be modeled as an orthogonal noise term which is bounded by the interval between quantization levels 
\citep{walden1999analog, marco2005validity}. Although there are many other sources of communication limitations in distributed systems, such as latency \citep{roth2007execution} and communication failures \citep{akyildiz2002wireless}, in this work we focus on additive noise and quantization noise in order to stress resulting theoretical phenomena resulting from communication constraints, and do not necessarily view our algorithm as intended for some specific application. Each agent $i$ receives a noisy version of the $Q$-function as in \eqref{eq: learner and agent noise} every time the learner updates its real $Q$-function, such that the communication noise $n^{i}(s,a)$ is composed of an additive noise term $b$ and a quantization term $m$, %
\begin{equation}
\begin{aligned}
\label{eq: noise decomposition}
        n^{i}(s,a) &= b^{i}(s,a)+m^{i}(s,a)~. 
\end{aligned}
\end{equation}
Furthermore, the learner receives samples from the agents in which the reward is noisy with an additive noise as in \eqref{eq: reward noise}. In this work we make the following two assumptions. Assumption \ref{assump: Noise_assumptions1} regards independence of the noise terms. \begin{restatable}[]{myassump}{x}
\label{assump: Noise_assumptions1}
The noise term $n^{i}$ in \eqref{eq: noise decomposition} is independent of $Q$ for any state-action $(s,a)$, and independent of any samples collected so far by the agents. The reward noise terms $n^{i}_{R}(s,a)$ are mutually independent for different times, agents and next-state $s'$ variables given $(s,a)$. Furthermore, the reward noise terms $n^{i}_{R}(s,a)$ are zero-mean with a variance no larger than $\sigma_{R}^{2}$. This is true for all noise terms corresponding to $Q$-functions, rewards, state-actions and agents that can be encountered during the run of the algorithm.\footnote{Note that these relations holds for a noise term given a state action $(s,a)$, and does not include the probability of encountering $(s,a)$ itself} 
\end{restatable}
Note that since the noise term $n^{i}$ results from a linear estimation of $Q$-function samples sent from the learner and from different agents, assumption \ref{assump: Noise_assumptions1} results from the fact that the noise terms on the agent-learner and agent-agent channels each obey the independence conditions separately. The reward-noise can model both additive noise or quantization noise, but in this work we focus on an additive noise for simplicity. Assumption \ref{assump: Noise_assumptions2} defines the properties of the noise terms \eqref{eq: noise decomposition} more specifically. 
\begin{restatable}[]{myassump}{x}
\label{assump: Noise_assumptions2}
The noise term $b^{i}(s,a)$ in \eqref{eq: noise decomposition} is sub-Gaussian with mean $0$ and parameter no larger than $\sigma^{i}_{c}$, and $m^{i}(s,a)$ is a bounded random variable with mean $0$ such that $|m^{i}(s,a)|\leq \Delta Q^{i}_{c} $ for some positive numbers $\sigma^{i}_{c}$ and $\Delta Q^{i}_{c}$. This is true for all noise terms corresponding to $Q$-functions, rewards, state-actions and agents that can be encountered during the run of the algorithm.
\end{restatable}
We note that the Sub-Gaussian assumption can be dismissed at the cost of weaker bounds, but this does not affect the derivation of Section \ref{sec:Estimation methods}. In Assumption \ref{assump: Noise_assumptions2}, we model the quantization effect as bounded mean zero additive noise \citep{marco2005validity}. 

Theorem \ref{theo:sample complexity2} below claims that our algorithm is multi-agent efficient-PAC MDP by Definition \ref{def:MAPAC} with $\epsilon=\max_{i}\{\epsilon_{i}^{eff}\}$ and $\delta$, where the $TCE$ of agent $i$ is defined with a parameter $\epsilon_{i}^{eff}$ (see Definition \ref{def:TCE}). As seen in \eqref{eq:VminusV}, this means that the policy produced by the algorithm is close to the optimal policy up to an error of $\epsilon$ for all time steps and all agents, and an additional error (the TCE in \eqref{eq:TCE_thoerem}) which is bounded by terms dependent on the environment and the noise.  We now briefly describe the different sources of error in the bound. \emph{(i)} $\epsilon_{s}$ is the error caused by the finite sample (at most $k$) used to estimate the $Q$-function.  \emph{(ii)} $\epsilon_{a}$  is the error caused by the fact that we may not converge to a fixed point of $\widetilde{B}$ during value iteration, but up to a distance of $\epsilon_{a}$ from it. \emph{(iii)} $\sigma_{R}, \sigma^{i}_{c}, \Delta Q^{i}_{c}$ represent the strength of the two communication noises introduced here, and show that exploration becomes more challenging the larger the noise is. 
\emph{(iv)} $\epsilon_{e}^{i}(t)$ is the error caused by the fact that at time $t$ there may exist state-actions with fewer than $k$ visits. 
\begin{restatable}[]{myassump}{y}\label{assump:main}
Let $(s_{1,i},s_{2,i},...,)$ for $i \in \{1,...,N\}$ be the random paths generated by the $N$ concurrent agents on some execution of Algorithm \ref{alg:Communication noise tolerant PAC Exploration}, and let $\widetilde{\pi}_{i}$ be the (non-stationary) policy followed by agent $i$ in this algorithm.  Let $\epsilon_{b}^{2}=4k_{m}(\sigma^{2}+\sigma_{R}^{2})$ (where $\sigma$ is the variance of the Bellman operator introduced in Definition \ref{def: sigma}), $k_{m}=\ceil{5.6\ln8N\ceil{1+\log_{2}\frac{k}{k_{m}}}(SA)^{2}/\delta}$, $k \geq \ceil{\epsilon_{b}^{2}/((1-\gamma)^2\epsilon_{s}^2)}$, and $\epsilon_{a}$ is defined as in Algorithm \ref{alg:Communication noise tolerant PAC Exploration}. Furthermore assume  $(2\ceil{\frac{1}{1-\gamma}\ln\frac{Q_{\max}}{\epsilon_{s}}}/NSA)\ln(2\ceil{1+\log_{2}(k/k_m)}/\delta<1$.
\end{restatable}
\begin{restatable}[]{mytheo}{samplecomplexityvar}
\label{theo:sample complexity2}
Under Assumptions \ref{assump: Noise_assumptions1}, \ref{assump: Noise_assumptions2}, \ref{assump:main}, define $f=f(N,k,k_{m},\delta,SA) \triangleq \sqrt{2\ln\left(24N\ceil{1+\log_{2}\frac{k}{k_{m}}}(SA)^{3}/\delta\right)}$. Then with probability at least $1-\delta$, for all $t$ and $i$, 
\begin{equation} \label{eq:VminusV}
       V^{*}(s_{t,i})-V^{\widetilde{\pi}_{i}}(s_{t,i}) \leq  \frac{2\epsilon_{a}+2(1+3\gamma)(\Delta Q^{i}_{c}+\sigma_{c}^{i}f)}{1-\gamma}+3\epsilon_{s} +\epsilon_{e}^{i}(t) \triangleq \epsilon^{eff}_{i}+\epsilon_{e}^{i}(t)~,
\end{equation}
where
\begin{equation}
\label{eq:TCE_thoerem}
    \begin{gathered}
       TCE\triangleq \sum_{i=1}^{N}\sum_{t=0}^{\infty}\epsilon_{e}^{i}(t) = \widetilde{O}\left(\left(N\left(Q_{\max}+\sqrt{\sigma^{2}+\sigma^{2}_{R}}\right)+\frac{\sigma^{2}+\sigma^{2}_{R}}{\epsilon_{s}(1-\gamma)}\right)SA/(1-\gamma)\right)~,  
    \end{gathered}
\end{equation}
and $\widetilde{O}$ stands for a big-O up to logarithmic terms.
\end{restatable}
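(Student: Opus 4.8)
The argument follows the optimistic ``known state--action MDP'' template for PAC-MDP analysis (as in \citep{Pazis2016a, Guo2015}), enlarged to absorb the three independent sources of stochasticity that distinguish this setting: the reward noise $n^i_R$ entering the learner's samples, the median-of-means approximation of the Bellman operator, and the communication noise $n^i$ corrupting the $Q$-table each agent acts on. I would first carve out a single ``good event'' of probability at least $1-\delta$ on which everything is deterministic, then run the classical explore-or-exploit machinery on that event, and finally count the exploration steps.

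\emph{Step 1: a good event for the Bellman estimates.} On an event of probability at least $1-\delta/2$, I want $|\widetilde{B}Q(s,a)-BQ(s,a)|\le(1-\gamma)\epsilon_s$ for \emph{every} $(s,a)$ with a full sample set and \emph{every} table $Q$ the learner ever forms during value iteration. The batching rule of Algorithm~\ref{alg:Communication noise tolerant PAC Exploration} --- accumulate in $u^{tmp}$ and swap into $u$ only when $|u^{tmp}|=2^pk_m$ --- is exactly what guarantees that the samples currently in $u(s,a)$ are independent of the table being iterated, so $\widetilde{B}Q(s,a)$ is a genuine median-of-$k_m$-means estimator of a quantity with mean $BQ(s,a)$ and variance at most $\sigma^2+\sigma^2_R$ (the $\sigma^2_R$ from the reward noise of Assumption~\ref{assump: Noise_assumptions1}). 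The median-of-means tail bound then delivers the stated deviation once $k\ge\epsilon_b^2/((1-\gamma)^2\epsilon_s^2)$ with $\epsilon_b^2=4k_m(\sigma^2+\sigma^2_R)$, and a union bound over the $SA$ state--actions, the at most $\lceil 1+\log_2(k/k_m)\rceil$ distinct sizes each set can pass through, and the agents that can trigger updates pins down $k_m$ as in Assumption~\ref{assump:main}.

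\emph{Step 2: a good event for the communication noise.} On a further event of probability at least $1-\delta/2$, I would bound $|n^i(s,a)|\le\Delta Q^i_c+\sigma^i_c f$ simultaneously over all agents $i$, all $(s,a)$ and all tables the learner publishes: decompose $n^i=b^i+m^i$ by \eqref{eq: noise decomposition}, use $|m^i|\le\Delta Q^i_c$ from Assumption~\ref{assump: Noise_assumptions2}, and control the sub-Gaussian part via $\Pr(|b^i(s,a)|>\sigma^i_c\sqrt{2\ln(1/\delta')})\le 2\delta'$. Choosing $\delta'$ to match the number of published tables (bounded by $SA\lceil 1+\log_2(k/k_m)\rceil$, since every sample-set update is a swap), times the $SA$ entries per table, times the $N$ agents, reproduces $f=\sqrt{2\ln(24N\lceil 1+\log_2(k/k_m)\rceil(SA)^3/\delta)}$. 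Intersecting with Step~1 gives the combined good event. Then: since value iteration halts only when $\|Q-\widetilde{B}Q\|_\infty\le\epsilon_a$, on $K_t$ (the full-sample-set set at time $t$) one gets $\|Q-BQ\|_\infty\le\epsilon_a+(1-\gamma)\epsilon_s$, while off $K_t$ the table equals $Q_{\max}$ and is optimistic; a standard contraction argument then makes $Q$ an optimistic $O(\epsilon_a/(1-\gamma)+\epsilon_s)$-approximation of the optimal $Q$-function of the MDP obtained by freezing dynamics outside $K_t$.

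\emph{Step 3: from near-greedy policies to \eqref{eq:VminusV} and \eqref{eq:TCE_thoerem}.} On the good event, agent $i$ acts greedily w.r.t.\ $\hat{Q}^i_{sat}$, which differs from $Q$ by at most $\xi_i:=\Delta Q^i_c+\sigma^i_c f$ in sup-norm (clamping to $[0,Q_{\max}]$ only helps, since $Q\in[0,Q_{\max}]$), so $\widetilde{\pi}_i$ is $2\xi_i$-greedy w.r.t.\ $Q$. Feeding this and the optimism estimate of Step~2 into the usual value-difference lemma --- threading the noise through both the action-selection step and the one-step lookahead, which is what produces the $(1+3\gamma)$ coefficient --- yields $V^*(s_{t,i})-V^{\widetilde{\pi}_i}(s_{t,i})\le (2\epsilon_a+2(1+3\gamma)\xi_i)/(1-\gamma)+3\epsilon_s$ whenever the relevant $H$-step future stays inside $K_t$, where $H=\lceil\frac{1}{1-\gamma}\ln\frac{Q_{\max}}{\epsilon_s}\rceil$; I then define $\epsilon^i_e(t)$ to be the extra slack (the ``escape'' contribution) in the complementary case. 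Summing $\epsilon^i_e(t)$ over $i$ and $t$ reduces to counting escapes: each of the $SA$ sample sets needs at most $2k$ fresh samples to fill (the factor $2$ from the geometric batching), so $O(kSA)$ useful exploration steps, spread over $N$ concurrent agents in $H$-length committed windows each charging $O(Q_{\max}+\sqrt{\sigma^2+\sigma^2_R})$; substituting $k=O(k_m(\sigma^2+\sigma^2_R)/((1-\gamma)^2\epsilon_s^2))$ and $H$, and absorbing the logarithmic factors into $\widetilde{O}(\cdot)$, gives \eqref{eq:TCE_thoerem}.

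\textbf{Main obstacle.} I expect the real work to be the bookkeeping in Steps~1--3 rather than any single deep idea: proving that the $2^pk_m$ batching genuinely decouples the stored samples from the evolving $Q$-table so that median-of-means concentration applies conditionally; getting the union bound to run over \emph{all} tables the learner ever publishes, a count that must itself be bounded by the algorithm's own update structure; threading the communication noise through the value-difference lemma in both the greedy step and the Bellman backup to get the $(1+3\gamma)$ constant exactly; and, in the concurrent multi-agent setting, charging escape events to the shared sample budget without double-counting when several agents explore the same region at once.
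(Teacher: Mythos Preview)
Your proposal is essentially the paper's argument, and Steps~1--2 match it closely: the paper indeed controls the median-of-means Bellman estimator via Cantelli/McDiarmid (your Step~1), then handles the communication noise by the sub-Gaussian tail bound plus the deterministic $|m^i|\le\Delta Q^i_c$, union-bounded over $N$ agents $\times$ $SA$ entries $\times$ at most $SA\lceil 1+\log_2(k/k_m)\rceil$ published tables, which is exactly how the constant $f$ arises (your Step~2). Organizationally the paper first proves an intermediate theorem under an abstract concentration assumption on $n^i$ and then specializes to the sub-Gaussian-plus-bounded model, whereas you fold the specialization in directly; that is cosmetic.

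The one place your sketch would not reproduce the stated TCE is the escape accounting in Step~3. You work with a binary partition ``inside $K_t$ / outside $K_t$'' and charge every escape $O(Q_{\max})$; with $k\asymp(\sigma^2+\sigma_R^2)/((1-\gamma)^2\epsilon_s^2)$ that yields a TCE of order $Q_{\max}(\sigma^2+\sigma_R^2)SA/((1-\gamma)^2\epsilon_s^2)$, which is a factor $Q_{\max}/\epsilon_s$ worse than \eqref{eq:TCE_thoerem}. The paper instead stratifies state--actions by their current sample-set size $k_a\in\{k_m,2k_m,\dots,k\}$: on the good event the Bellman error at level $k_a$ is $\epsilon_a+(1+3\gamma)\xi_i+O(\sqrt{k_m(\sigma^2+\sigma_R^2)/k_a})$ rather than $Q_{\max}$, and the value-difference lemma is applied with this multi-level partition. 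Summing the escape contribution level by level then produces $\sum_{k_a}\sqrt{(\sigma^2+\sigma_R^2)/k_a}\cdot(k_a+N)$, whose geometric sum over $k_a$ is what delivers the two terms $N\sqrt{\sigma^2+\sigma_R^2}$ and $(\sigma^2+\sigma_R^2)/(\epsilon_s(1-\gamma))$ in \eqref{eq:TCE_thoerem}. So to land on the claimed bound you need to replace the binary known/unknown split by this graded one; everything else in your plan goes through.
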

\begin{proof}
We only discuss the proof scheme, and refer the reader to the Appendix for details. The non-stationary policy of each of the agents can be broken up into fixed-policy segments, in which we follow an approximated $Q$-function greedily, where the noise can be shown to be concentrated around its mean. Given that the Bellman error for each such segment is acceptably small, we know that the value function of the greedy policy for that segment has a bounded error with respect to the optimal policy value. We  then use the union bound to show that with a high probability, the Bellman errors of all $Q$-functions for all agents during the run of the algorithm are bounded, and then combine it with with a statement that bounds the number of times we can encounter state-action pairs for which we haven't collected enough samples yet. Combining these results, we can show that the algorithm follows a non-stationary policy with an acceptable error compared to the optimal policy.     
\end{proof}
Notice that while the $Q$-function noise terms only affect $\epsilon_{eff}$, and the reward noise term affects the TCE. While we can sacrifice computational and space complexity to lower $\epsilon_{a},\epsilon_{s}$ to effectively zero, the lowest possible $\epsilon^{eff}_i$ is dictated by the $Q$-function noise terms of the environment. In the next section, we will use the sample complexity bound of Theorem \ref{theo:sample complexity2} to derive estimation methods and algorithms for various cases depending on the nature of the communication noise.
\section{Estimation and optimal agent weighting}
\label{sec:Estimation methods}
In this section, we use the efficient-PAC bound introduced in Theorem \ref{theo:sample complexity2} to derive various algorithms for the noisy communication setting described in Algorithm \ref{alg:Communication noise tolerant PAC Exploration}. In this section we focus on the $Q$-function noise, and so we pose no further assumptions on the reward-noise. Assumption \ref{assumption:quantization noise} treats the various noise terms introduced in the communication channels and essentially guarantees that assumptions \ref{assump: Noise_assumptions1}, \ref{assump: Noise_assumptions2} hold for the effective noise term \eqref{eq: noise decomposition} resulting from the estimation process. 
\begin{restatable}[]{myassump}{defquantizationenoise}
\label{assumption:quantization noise}
For agent $i$, let 
\begin{equation}
    \begin{aligned}
    \label{eq:LA noise decomposition}
      n^{i}_{L}(s,a) &= b^{i}_{L}(s,a)+m^{i}_{L}(s,a) \\
      n^{j,i}_{A}(s,a) &= b^{j,i}_{A}(s,a)+m^{j,i}_{A}(s,a)\quad \forall i,j \neq i
    \end{aligned}
\end{equation}
We assume that the various noise terms $\{b^{j}_{L}(s,a),b^{j,i}_{A}(s,a),m^{j}_{L}(s,a),m^{j,i}_{A}(s,a)\}_{j=1}^{N}$ are independent of $Q$ for any state-action $(s,a)$, independent of any samples collected so far by the agents, and are mutually independent of each other. Furthermore, we assume that the noise terms $\{b^{j}_{L}(s,a),b^{j,i}_{A}(s,a)\}_{j=1}^{N}$ are all sub-Gaussian with mean 0. The terms $\{b^{j}_{L}(s,a)\}$ have a parameter $\sigma^{j}_{L}$, while the terms $\{b^{j,i}_{A}(s,a)\}$ have a parameter $\sigma^{j,i}_{A}$ for all ${i,j}$. 
In addition, all quantization noise terms have mean $0$ and are bounded by the maximal quantization interval, 
\begin{equation*}
    \begin{gathered}
      |m^{i}_{L}(s,a)| \leq \Delta Q_{L}^{i} \quad ;\quad  |m^{j,i}_{L}(s,a)| \leq \Delta Q_{L}^{j,i}\quad \forall i,j \neq i~. 
    \end{gathered}
\end{equation*}
\end{restatable}
In Assumption \ref{assumption:quantization noise}, $b$ represents additive noise and $m$ represents quantization effects, and we allow the parameter of the sub-Gaussian noise to depend on the sending and receiving agents (namely, on the communication channel between agents). We set $\Delta Q_L^i$ to be the quantization bin size for agent $i$ receiving the learners' $Q$-function, and similarly for $\Delta Q^{j,i}_L$.  
\subsection{Additive noise model}
\label{subsec:Additive noise model}
In this subsection we only deal with the additive noise $n^{i}_{L}(s,a) =b^{i}_{L}(s,a),
n^{j,i}_{A}(s,a)=b^{j,i}_{A}(s,a)$, assuming that the quantization terms $m^{i}$ and $m^{j,i}$ in (\eqref{eq: learner and agent noise}-\eqref{eq:LA noise decomposition}) are absent. For each agent there is an inherent tradeoff between using the less noisy $Q$-function received from the learner and the set of individually noisier $Q$-functions from the other agents, that can be potentially mitigated by combining these approximations. The following theorem demonstrates this effect. Each agent estimates the $Q$-function $\hat{Q}^{i}$ as a weighted linear sum
\begin{equation}\label{eq:ApproxSet}
    \begin{gathered}
      \hat{Q}^{i}(s,a) \triangleq w_{ii}\widetilde{Q}^{i}_{L}(s,a) + \sum_{j \neq i}^{N}e_{ji}w_{ji}\widetilde{Q}^{j,i}_{A}(s,a) \qquad ; \qquad \sum_{j=1}^{N}e_{ji}w_{ji}=1~.
    \end{gathered}
\end{equation}
The next theorem provides the agent weighting scheme that minimizes the TCE, thereby optimizing the bound.
\begin{restatable}[]{mytheo}{nonidenticaladditivenoise}
\label{theo: nonidentical additive noise}
Without loss of generality, assume that the agents directly connected to agent $i$ are $\{1,..,d_{i}\}$ not including agent $i$ itself. Under Assumptions \ref{assump:main} and \ref{assumption:quantization noise} with zero quantization noise, at time step $t$ following an update of the $Q$-function $Q$, each agent $i$ estimates the $Q$-function by \eqref{eq:ApproxSet}. Denote
\begin{align*}
         A &\triangleq \sqrt{\boldsymbol{\sigma}_{L}\cdot\boldsymbol{\sigma}_{L}^{T}}+2 \mathrm{diag}\left((\sigma^{1,i}_{A})^{2},...,(\sigma^{d_{i},i}_{A})^{2}\right) +2(\sigma^{i}_{L})^{2}\mathrm{diag}(1,...,1)
        \\
        \mathbf{w} &\triangleq\left(w_{11},...,w_{(d_{i})i}\right)^{T} \quad ; \quad 
        \boldsymbol{\sigma}_{L} \triangleq 2(\sigma^{i}_{L})^{2}\left(1,...,1\right)^{T}
\end{align*}
Then Theorem \ref{theo:sample complexity2} holds. The upper bound is minimized for $\mathbf{w}^{*}=A^{-1}\boldsymbol{\sigma}_{L},w^{*}_{ii}=1-\sum_{j=1}^{d_{i}}w_{ji}^{*}$ and $\sigma^{i*}_{c} = \sigma^{i}_{L}\sqrt{w_{ii}^{*}}$. 
\end{restatable}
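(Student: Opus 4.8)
The plan is to reduce the claim to a finite-dimensional, strictly convex quadratic minimization and read off the solution, after first checking that Theorem~\ref{theo:sample complexity2} is even applicable to the estimator \eqref{eq:ApproxSet}. First I would compute the effective noise: subtracting $Q(s,a)$ in \eqref{eq:ApproxSet} and using \eqref{eq: learner and agent noise} together with the additive-only reduction $n^i_L=b^i_L$, $n^{j,i}_A=b^{j,i}_A$ gives
\begin{equation*}
 n^i(s,a) \;=\; w_{ii}\,b^i_L(s,a) \;+\; \sum_{j=1}^{d_i} w_{ji}\bigl(b^j_L(s,a)+b^{j,i}_A(s,a)\bigr).
\end{equation*}
By Assumption~\ref{assumption:quantization noise} the summands are mean zero, sub-Gaussian, mutually independent, and independent of $Q$ and of all samples gathered so far; a fixed (deterministic) linear combination of such variables is again mean zero, independent of $Q$ and of the collected samples, and sub-Gaussian with parameter equal to the weighted $\ell_2$ combination of the per-channel parameters,
\begin{equation*}
 \sigma^i_c \;=\; \Bigl(w_{ii}^2(\sigma^i_L)^2 + \sum_{j=1}^{d_i} w_{ji}^2\bigl((\sigma^j_L)^2+(\sigma^{j,i}_A)^2\bigr)\Bigr)^{1/2},
\end{equation*}
with no bounded component, i.e. $\Delta Q^i_c=0$. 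This is exactly what Assumptions~\ref{assump: Noise_assumptions1} and \ref{assump: Noise_assumptions2} demand of $n^i$ (the reward-noise clauses are untouched and Assumption~\ref{assump:main} is inherited), so Theorem~\ref{theo:sample complexity2} applies verbatim with these $\sigma^i_c,\Delta Q^i_c$; the saturation step in Algorithm~\ref{alg: Advance N agents} only projects $\hat Q^i$ onto the interval $[0,Q_{\max}]$ that contains $Q$, which cannot enlarge the quantities controlled in that theorem.

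Next I would note that, since $\Delta Q^i_c=0$ and the factor $f$ is independent of the weights, the right-hand side of \eqref{eq:VminusV} is, term by term, nondecreasing in $\sigma^i_c$ (indeed $\epsilon^{eff}_i$ is strictly increasing in it, while $\epsilon_e^i(t)$ does not depend on the weights), so optimizing the bound over $\{w_{ji}\}$ subject to the unbiasedness constraint $\sum_{j=1}^N e_{ji}w_{ji}=1$, i.e. $w_{ii}+\sum_{j=1}^{d_i}w_{ji}=1$, is equivalent to minimizing $(\sigma^i_c)^2$. Writing $\mathbf w=(w_{1i},\dots,w_{d_i i})^T$, $\mathbf 1=(1,\dots,1)^T$ and eliminating $w_{ii}=1-\mathbf 1^T\mathbf w$, a direct expansion identifies
\begin{equation*}
 2(\sigma^i_c)^2 \;=\; \mathbf w^T A\,\mathbf w \;-\; 2\,\boldsymbol{\sigma}_L^T\mathbf w \;+\; 2(\sigma^i_L)^2 ,
\end{equation*}
with $A$ and $\boldsymbol{\sigma}_L$ as in the statement: the rank-one block $\sqrt{\boldsymbol{\sigma}_L\boldsymbol{\sigma}_L^T}=2(\sigma^i_L)^2\mathbf 1\mathbf 1^T$ collects the $(\mathbf 1^T\mathbf w)^2$ coming from $w_{ii}^2$, the term $2(\sigma^i_L)^2 I$ collects $\|\mathbf w\|^2$ from $w_{ii}^2$ together with the neighbours' learner-channel contribution, and the diagonal collects the agent--agent channels (here one uses the statement's convention of writing the learner-to-agent parameter as $\sigma^i_L$; with channel-dependent $\sigma^j_L$ the same identity holds after the obvious re-indexing of $A$ and $\boldsymbol{\sigma}_L$).

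Finally, $A$ is positive definite -- a PSD rank-one matrix plus a PSD diagonal plus $2(\sigma^i_L)^2 I\succ 0$ -- so the quadratic is strictly convex with unique (unconstrained, hence constrained after the substitution) minimizer given by the first-order condition $A\mathbf w-\boldsymbol{\sigma}_L=0$, i.e. $\mathbf w^{*}=A^{-1}\boldsymbol{\sigma}_L$ and $w^{*}_{ii}=1-\sum_{j=1}^{d_i}w^{*}_{ji}$; evaluating the objective there and using $\boldsymbol{\sigma}_L^T\mathbf w^{*}=\boldsymbol{\sigma}_L^T A^{-1}\boldsymbol{\sigma}_L=2(\sigma^i_L)^2\mathbf 1^T\mathbf w^{*}=2(\sigma^i_L)^2(1-w^{*}_{ii})$ gives $2(\sigma^{i*}_c)^2=2(\sigma^i_L)^2-2(\sigma^i_L)^2(1-w^{*}_{ii})=2(\sigma^i_L)^2 w^{*}_{ii}$, whence $\sigma^{i*}_c=\sigma^i_L\sqrt{w^{*}_{ii}}$. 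I expect the optimization itself to be routine; the real work, and the main obstacle, is the first step -- carefully propagating the independence and sub-Gaussianity structure of Assumption~\ref{assumption:quantization noise} through the linear estimator and the saturation so that Theorem~\ref{theo:sample complexity2} genuinely applies, and matching the resulting variance proxy to the precise $A,\boldsymbol{\sigma}_L$ in the statement (including the factor-of-two and indexing conventions).
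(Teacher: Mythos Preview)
Your proposal is correct and follows essentially the same route as the paper: compute the effective sub-Gaussian parameter of the linear estimator, invoke Theorem~\ref{theo:sample complexity2}, and then minimize the resulting quadratic in $\mathbf{w}$ via the first-order condition $A\mathbf{w}=\boldsymbol{\sigma}_L$. The only cosmetic differences are that the paper certifies positive definiteness of $A$ via the matrix determinant lemma and Sylvester's criterion (your ``PSD rank-one $+$ PSD diagonal $+$ positive multiple of $I$'' argument is more direct), and it reads off $\sigma^{i*}_c=\sigma^i_L\sqrt{w^{*}_{ii}}$ by evaluating the quadratic at $\mathbf{w}=\mathbf{0}$ rather than by your substitution $\boldsymbol{\sigma}_L^T\mathbf{w}^{*}=2(\sigma^i_L)^2(1-w^{*}_{ii})$.
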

\textbf{Proof sketch} A detailed explanation and proofs appear in the Appendix. The idea is to substitute the weight-dependent equivalent noise into the sample complexity bound of Theorem \ref{theo:sample complexity2}, and minimize over the weights. We can show that the optimal weights all have $w^{*}_{ji} \in [0,1]~\forall j$, and therefore we indeed have that $\sigma^{i*}_{c}\leq \sigma^{i}_{L}$, so that it is always worthwhile averaging over different $Q$-functions instead of using only the less noisy $Q$-function from the learner. We also show in the Appendix that the optimal weights of noise terms with a larger parameter $\sigma^{j,i}_{L}$ have a smaller weight, which agrees with intuition. Theorem \ref{theo: identical additive noise}, which is a special case  of Theorem \ref{theo: nonidentical additive noise}, assumes parameters to have one of two possible values,  and thus enables us to see the above-mentioned tradeoff more clearly.
\begin{restatable}[]{mytheo}{identicaladditivenoise}
\label{theo: identical additive noise}
Under the assumptions of Theorem \ref{theo: nonidentical additive noise}, let us further assume that all noise terms for a communication channel have the same parameter $\sigma^{j}_{L}=\sigma_{L},\sigma^{j,i}_{A}=\sigma_{A}$ for all $i$ and $j$, and that the graph $\Gamma$ is $d$-regular (all vertices have exactly $d$ agents). Then: (1) The optimal weights for agent $i$ are
\begin{equation*}
    \begin{gathered}
      w_{ii}^{*} = \frac{\sigma^{2}_{L}+\sigma^{2}_{A}}{(d+1)\sigma^{2}_{L}+\sigma^{2}_{A}}=\frac{1}{1+\frac{d}{1+\sigma^{2}_{A}/\sigma^{2}_{L}}}
      \quad ; \quad 
       w^{*}_{ji} =\frac{1-w_{ii}^{*}}{d}\mathbf{}{1}\{d>0\} \quad \forall j \neq i ~.
    \end{gathered}
\end{equation*}
(2) A uniform weighting of the noisy $Q$-functions is preferable to using $\widetilde{Q}^i_L$ (the received learner's $Q$-function) if and only if $d+1 \geq \sigma^{2}_{A}/\sigma^{2}_{L}$.
\end{restatable}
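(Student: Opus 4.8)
The plan is to obtain Theorem~\ref{theo: identical additive noise} as a direct specialization of Theorem~\ref{theo: nonidentical additive noise}, followed by one elementary variance comparison for part~(2). First I would substitute the uniform‑parameter, $d$‑regular hypotheses into the matrix $A$ and vector $\boldsymbol{\sigma}_{L}$ of Theorem~\ref{theo: nonidentical additive noise}. Since every agent has exactly $d$ neighbours and $\sigma^{j}_{L}=\sigma_{L}$, $\sigma^{j,i}_{A}=\sigma_{A}$ for all $i,j$, the rank‑one term collapses to $\sqrt{\boldsymbol{\sigma}_{L}\cdot\boldsymbol{\sigma}_{L}^{T}}=2\sigma_{L}^{2}\,\mathbf{1}\mathbf{1}^{T}$ and the diagonal terms to $2(\sigma_{L}^{2}+\sigma_{A}^{2})I_{d}$, so that $A=2\sigma_{L}^{2}\,\mathbf{1}\mathbf{1}^{T}+2(\sigma_{L}^{2}+\sigma_{A}^{2})I_{d}$ and $\boldsymbol{\sigma}_{L}=2\sigma_{L}^{2}\,\mathbf{1}$, with $\mathbf{1}\in\mathbb{R}^{d}$ the all‑ones vector.

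Next I would compute $\mathbf{w}^{*}=A^{-1}\boldsymbol{\sigma}_{L}$. Because $A\mathbf{1}=2\big((d+1)\sigma_{L}^{2}+\sigma_{A}^{2}\big)\mathbf{1}$, the vector $\mathbf{1}$ is an eigenvector of $A$, hence of $A^{-1}$, and therefore $\mathbf{w}^{*}=\frac{\sigma_{L}^{2}}{(d+1)\sigma_{L}^{2}+\sigma_{A}^{2}}\,\mathbf{1}$ (equivalently one may apply the Sherman--Morrison formula to $A$). This gives $w^{*}_{ji}=\sigma_{L}^{2}/((d+1)\sigma_{L}^{2}+\sigma_{A}^{2})$ for each neighbour $j$, and then $w^{*}_{ii}=1-\sum_{j=1}^{d}w^{*}_{ji}=(\sigma_{L}^{2}+\sigma_{A}^{2})/((d+1)\sigma_{L}^{2}+\sigma_{A}^{2})$; dividing numerator and denominator by $\sigma_{L}^{2}+\sigma_{A}^{2}$ recovers the second displayed form $1/(1+d/(1+\sigma_{A}^{2}/\sigma_{L}^{2}))$, and comparing with the value of $w^{*}_{ji}$ just obtained shows $w^{*}_{ji}=(1-w^{*}_{ii})/d$, with the degenerate case $d=0$ (no neighbours) handled by the indicator. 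This establishes part~(1).

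For part~(2) I would write out the effective noise $n^{i}=w_{ii}n^{i}_{L}+\sum_{j\neq i}e_{ji}w_{ji}(n^{j}_{L}+n^{j,i}_{A})$ produced by \eqref{eq:ApproxSet}: by Assumption~\ref{assumption:quantization noise} the terms $n^{j}_{L}=b^{j}_{L}$ and $n^{j,i}_{A}=b^{j,i}_{A}$ are mutually independent and sub‑Gaussian with parameters $\sigma_{L}$ and $\sigma_{A}$, so $n^{i}$ is sub‑Gaussian with parameter squared $(\sigma^{i}_{c})^{2}=w_{ii}^{2}\sigma_{L}^{2}+\sum_{j=1}^{d}w_{ji}^{2}(\sigma_{L}^{2}+\sigma_{A}^{2})$. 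Evaluating this at the learner‑only choice $w_{ii}=1,\ w_{ji}=0$ gives $\sigma_{L}^{2}$, and at the uniform choice $w_{ii}=w_{ji}=1/(d+1)$ gives $((d+1)\sigma_{L}^{2}+d\sigma_{A}^{2})/(d+1)^{2}$. Since the bound of Theorem~\ref{theo:sample complexity2} is increasing in $\sigma^{i}_{c}$ and $\Delta Q^{i}_{c}=0$ here, ``preferable'' means the smaller effective parameter, and $((d+1)\sigma_{L}^{2}+d\sigma_{A}^{2})/(d+1)^{2}\le\sigma_{L}^{2}$ simplifies, after clearing denominators and cancelling a factor $d$, to $\sigma_{A}^{2}\le(d+1)\sigma_{L}^{2}$, i.e.\ $d+1\ge\sigma_{A}^{2}/\sigma_{L}^{2}$, as claimed (with equality at $d=0$).

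I do not expect a genuine obstacle here: the whole argument is a specialization of an already‑proven theorem together with a single scalar inequality. The only points that need care are the correct collapse of $A$ under $d$‑regularity and identical parameters, keeping the constraint $w_{ii}=1-\sum_{j}w_{ji}$ consistent throughout the substitution, and pinning down ``preferable'' as monotonicity of the TCE bound of Theorem~\ref{theo:sample complexity2} in $\sigma^{i}_{c}$ while not overlooking the trivial $d=0$ case.
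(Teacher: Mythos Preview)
Your proposal is correct and follows essentially the same approach as the paper: specialize Theorem~\ref{theo: nonidentical additive noise} to identical parameters and a $d$-regular graph for part~(1), then compare the effective sub-Gaussian parameter of the uniform weighting against $\sigma_L^2$ for part~(2). The only cosmetic difference is that the paper invokes the symmetry of $\sigma_c^2(\mathbf{w})$ to conclude the neighbour weights coincide and then substitutes, whereas you make the same point via the explicit eigenvector computation $A\mathbf{1}=2\big((d+1)\sigma_L^2+\sigma_A^2\big)\mathbf{1}$; both routes are equivalent and yield the same formulas.
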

\noindent From the optimal weights, we can learn that the larger is the fraction $\sigma^{2}_{A}/\sigma^{2}_{L}$ and the smaller is the degree of the graph $d$, the more weight in the averaging will be given to the $Q$-function received from the learner. Similarly, when only a uniform average is possible to the agent, it is worthwhile to do so only if the degree $d$ is large enough compared to the noise parameters ratio. Although it is known that certain communication restrictions, such as collisions, become more dominant with increased connectivity \citep{gallager1985perspective,gupta2000capacity}, we show here that for additive noise the opposite is true, and the performance improves with increased connectivity. 
\begin{figure}
    \includegraphics[width=0.8\textwidth]{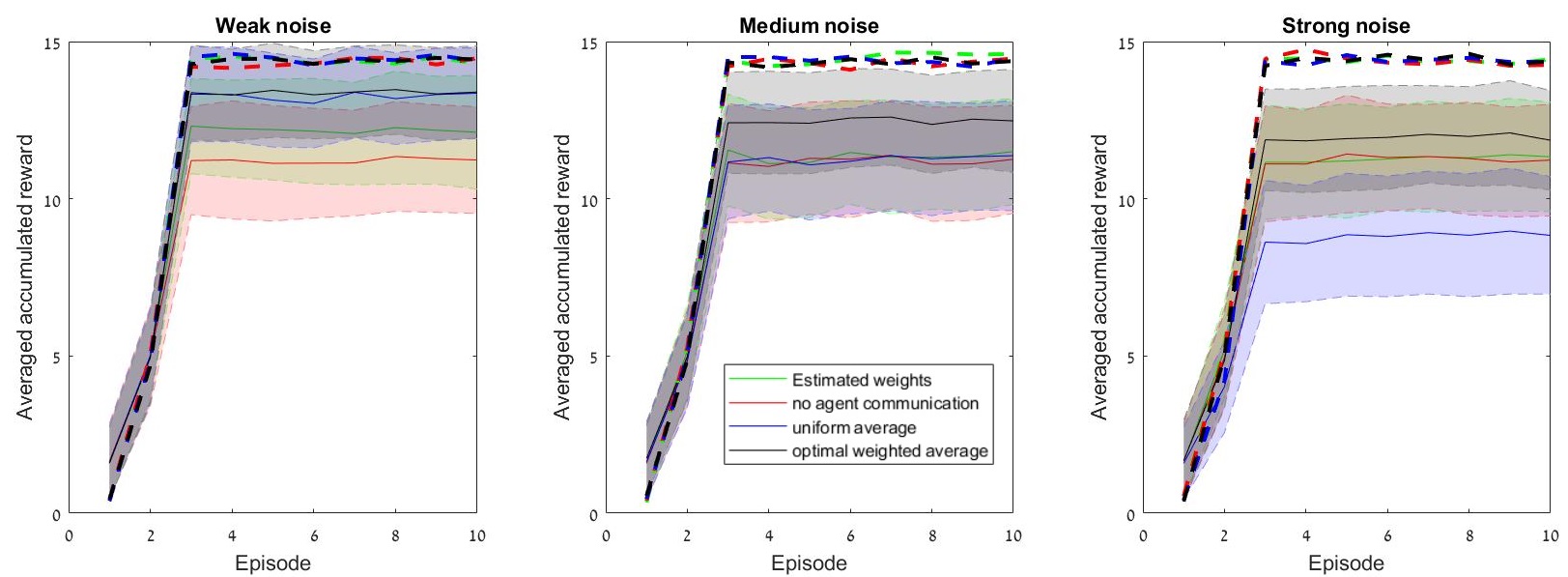}
      \centering
      \caption{\label{fig:simulation}Algorithm \ref{alg:Communication noise tolerant PAC Exploration} in a simple wrap-around $5\times 5$ grid world with $4$ fully-connected agents and additive noise parameters $\sigma^{j}_{L}=\sigma_{L},\sigma^{j,i}_{A}=\sigma_{A},\sigma_{R}=0$ $\forall i,j$. Each graph shows the agent-averaged accumulated reward during an episode, as a function of different episodes (continuous line), and the accumulated reward for a simulated agent with full information regarding the noise-free $Q$-function (dashed line). In green - weighted average with estimated weights, red - using only the $Q$-function sample received from the learner, blue - a uniform average, black - optimal weighted average. Shaded areas are standard deviation.  \textbf{A:} Weak noise $\sigma^{2}_{L}=\sigma^{2}_{A}=0.1$ \textbf{B:} Medium noise $\sigma^{2}_{L}=0.1, \sigma^{2}_{A}=0.4$. \textbf{C:} Strong noise $\sigma^{2}_{L}=0.1, \sigma^{2}_{A}=1$.}
\end{figure}

\noindent \textbf{Numerical demonstration} We compare four weighting schemes. (1) Uniform weights over all received samples. (2) All weight placed on the learners' sample (namely, no inter-agent communication). (3) Optimal weights from Theorem \ref{theo: identical additive noise}. (4) Optimal weights computed empirically (see below). A simple demonstration of the performance of Algorithm \ref{alg:Communication noise tolerant PAC Exploration} under the identical Gaussian additive noise conditions of Theorem \ref{theo: identical additive noise} in a simple $5\times 5$ wrap-around grid world with a goal state, is shown in Figure \ref{fig:simulation} where the cumulative reward is plotted against episode number. The agents communicate via a fully-connected communication graph $\Gamma$, and there is no reward-noise. Further details are given in the Appendix. It is evident that although the exploration is successful in all cases (dashed graphs), the more noisy the communication is, the harder it is for the concurrent agents to get to the goal state. As stated in Theorem \ref{theo: identical additive noise}, it is evident that when the ratio $\sigma^{2}_{A}/\sigma^{2}_L$ is small, a uniform averaging is preferable to using only the less noisy sample, and that the opposite is true when $\sigma^{2}_{A}/\sigma^{2}_L$ increases. Furthermore, the optimal weighting scheme suggested in Theorem \ref{theo: identical additive noise} performs better than both approaches. We also demonstrate the more practical case in which the algorithm has no knowledge about the noise parameters, and instead estimates them on the run, and uses the estimated version for the weights. As can be seen, although this approach is not as good as in the full information case, it performs better on a range of cases compared to using a constant weight.
\subsection{Quantization noise model}
\label{subsec:Quantization noise model}
In Section \ref{subsec:Additive noise model} we assumed that the agents interact by transmitting their noise-corrupted $Q$-function values. Here we add the quantization effects defined in Assumption \ref{assumption:quantization noise}.  
Theorem \ref{theo: quantization noise} presents a bound over the sample complexity in such a case, and an estimation technique for the simpler case of identical quantization levels for each agent.  
\begin{restatable}[]{mytheo}{quannoise}
\label{theo: quantization noise}
Without loss of generality, assume that the agents directly connected to agent $i$ are $\{1,..,d_{i}\}$ excluding agent $i$ itself. Under Assumptions \ref{assumption:quantization noise} and \ref{assump:main}, at time step $t$ following an update of the $Q$-function $Q$, each agent $i$ estimates the $Q$-function $\hat{Q}^{i}$ as a weighted linear sum as in \eqref{eq:ApproxSet}. Then Theorem \ref{theo:sample complexity2} holds with $\Delta Q_{c}^{i} \leq \Delta Q^{i}|w_{ii}|+ \sum_{j = 1}^{d_{i}}\left(\Delta Q^{j}+\Delta Q^{j,i}\right)|w_{ji}|$ and 
\begin{equation*}
	\sigma_{c}(\mathbf{w}) = \sum_{j=1}^{d_{i}}w_{ji}^{2}\left((\sigma^{j}_{L})^{2}+(\sigma^{ji}_{A})^{2}\right)+\left(1-\sum_{j=1}^{d_{i}} w_{ji}\right)^{2}(\sigma^{i}_{L})^{2}~.
\end{equation*}
By further assuming identical agents, $\Delta Q_{L}^{i}=\Delta Q_{L}^{j,i} = \Delta Q,\sigma^{j}_{L}=\sigma^{j,i}_{A}=\sigma_{L}$ for all $i,j$, the optimal weights are 
\begin{equation*}
    \begin{aligned}
         w_{ii}^{*} &= \begin{cases}
               \frac{2}{d+2}\left(1+\frac{d}{\sqrt{(d+2)(2\sigma_{L}^{2}f^{2}/\Delta Q)^{2}-2d}}\right)              & f\sigma_{L} > \Delta Q\\
               1               & f\sigma_{L} \leq \Delta Q\\
         \end{cases}
    \end{aligned}
\end{equation*}
where $w^{*}_{ji} =(1-w^*_{ii})/d, 1 \leq j \in\{1,...,d\}$, and
$f=f(N,k,k_{m},\delta,SA)$ is defined in Theorem \ref{theo:sample complexity2}. 
\end{restatable}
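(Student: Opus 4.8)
The plan is to obtain the theorem entirely from Theorem~\ref{theo:sample complexity2}: first identify the effective noise $n^{i}(s,a)=\hat Q^{i}(s,a)-Q(s,a)$ produced by the linear estimator \eqref{eq:ApproxSet}, verify it obeys Assumptions~\ref{assump: Noise_assumptions1} and \ref{assump: Noise_assumptions2}, read off the effective parameters $\Delta Q^{i}_{c}$ and $\sigma^{i}_{c}$, and then optimize the resulting bound over the weights in the identical-agent case. Substituting \eqref{eq: learner and agent noise} into \eqref{eq:ApproxSet} and using the normalization $\sum_{j}e_{ji}w_{ji}=1$, the true $Q$-terms cancel and
\[
 n^{i}(s,a)=w_{ii}\,n^{i}_{L}(s,a)+\sum_{j=1}^{d_{i}}w_{ji}\bigl(n^{j}_{L}(s,a)+n^{j,i}_{A}(s,a)\bigr).
\]
Decomposing each channel noise through \eqref{eq:LA noise decomposition} into a sub-Gaussian part $b$ and a bounded part $m$ gives $n^{i}=b^{i}+m^{i}$ with $b^{i}=w_{ii}b^{i}_{L}+\sum_{j}w_{ji}(b^{j}_{L}+b^{j,i}_{A})$ and $m^{i}=w_{ii}m^{i}_{L}+\sum_{j}w_{ji}(m^{j}_{L}+m^{j,i}_{A})$. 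By Assumption~\ref{assumption:quantization noise} all the component noises are mutually independent, mean zero, independent of $Q$, and independent of the samples collected so far; these properties survive linear combination, so $n^{i}$ satisfies what Assumptions~\ref{assump: Noise_assumptions1}--\ref{assump: Noise_assumptions2} require of the $Q$-noise (the reward-noise part being unchanged and inherited from \eqref{eq: reward noise}), and Theorem~\ref{theo:sample complexity2} applies once the effective parameters are supplied.

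Those parameters come from two elementary estimates. A triangle inequality with $|m^{i}_{L}|\le\Delta Q^{i}$, $|m^{j}_{L}|\le\Delta Q^{j}$, $|m^{j,i}_{A}|\le\Delta Q^{j,i}$ gives $|m^{i}|\le\Delta Q^{i}|w_{ii}|+\sum_{j=1}^{d_{i}}(\Delta Q^{j}+\Delta Q^{j,i})|w_{ji}|$, which is the stated $\Delta Q^{i}_{c}$. Since $b^{i}$ is a linear combination of independent mean-zero sub-Gaussian variables, it is sub-Gaussian with squared parameter $w_{ii}^{2}(\sigma^{i}_{L})^{2}+\sum_{j=1}^{d_{i}}w_{ji}^{2}\bigl((\sigma^{j}_{L})^{2}+(\sigma^{j,i}_{A})^{2}\bigr)$, and eliminating $w_{ii}=1-\sum_{j}w_{ji}$ turns this into the $\sigma_{c}(\mathbf{w})$ of the statement. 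These are exactly the quantities entering $\epsilon^{eff}_{i}$ in \eqref{eq:VminusV}, which proves the first assertion.

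For the identical-agent case ($\Delta Q^{i}=\Delta Q^{j}=\Delta Q^{j,i}=\Delta Q$, $\sigma^{j}_{L}=\sigma^{j,i}_{A}=\sigma_{L}$, $\Gamma$ $d$-regular) the only weight-dependent part of \eqref{eq:VminusV} is $\Delta Q^{i}_{c}+f\sigma^{i}_{c}$, a sum of absolute values of affine functions of $\mathbf{w}$ plus a Euclidean norm of an affine image of $\mathbf{w}$, hence convex; it is also invariant under permutations of the $d$ neighbours, so it admits a minimizer with $w_{ji}\equiv w$ and $w_{ii}=1-dw$. On the admissible range $0\le w\le 1/d$ (to be confirmed a posteriori) this reduces to minimizing $g(w)=(1+dw)\Delta Q+f\sigma_{L}\sqrt{2dw^{2}+(1-dw)^{2}}$ over $[0,1/d]$. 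Computing $g'$, the stationarity condition $g'(w)=0$ forces $w<1/(d+2)$ and, after squaring, becomes a quadratic in $w$ whose root in $(0,1/(d+2))$, re-expressed through $w_{ii}=1-dw$, is the displayed formula. Since $g'(0)=d(\Delta Q-f\sigma_{L})$, this interior optimum exists and lies in $[0,1/d]$ (so the sign assumptions are consistent and the quadratic's discriminant is positive) exactly when $f\sigma_{L}>\Delta Q$; when $f\sigma_{L}\le\Delta Q$ one checks $g'>0$ on $[0,1/d]$, so the optimum is the boundary point $w=0$, i.e.\ $w^{*}_{ii}=1$.

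The genuine obstacle is the optimization, not the reduction: one must (i) justify passing to a single scalar $w$, which rests on the convexity of $\Delta Q^{i}_{c}+f\sigma^{i}_{c}$ in $\mathbf{w}$ together with the permutation symmetry of the identical-agent instance; (ii) check that the unconstrained stationary point really falls in $[0,1/d]$, so the absolute values in $\Delta Q^{i}_{c}$ resolve with the signs used to derive $g$, and correctly isolate the boundary regime $f\sigma_{L}\le\Delta Q$ in which no inter-agent averaging helps; and (iii) carry the quadratic algebra so that the square root in $w^{*}_{ii}$ is well defined precisely in the averaging regime (which holds, since $f\sigma_{L}>\Delta Q$ makes $\sigma_{L}^{2}f^{2}/\Delta Q^{2}>1$). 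Everything else is substitution into Theorem~\ref{theo:sample complexity2} plus the standard sub-Gaussian-of-a-linear-combination and triangle-inequality facts.
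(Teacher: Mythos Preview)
Your proposal is correct and follows essentially the same approach as the paper. The first part (decomposing $n^{i}$ into $b^{i}+m^{i}$, invoking the sub-Gaussian-of-independent-linear-combination fact and the triangle inequality) is identical; for the optimization the paper parametrizes by the self-weight $w=w_{ii}$ rather than the neighbour weight, and simply asserts the symmetry reduction ``due to the symmetry of the problem'' before locating the minimum in $[2/(d+2),1]$ via the same one-variable calculus, whereas your convexity-plus-permutation-invariance justification for that reduction is more explicit.
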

Due to quantization noise, the optimal weight $w^{*}_{ii}$ is larger than that obtained for additive noise alone (which is $2/(d+2)$). Moreover, for quantization noise too large compared to $\sigma_{L}f$, the optimal choice is to use only the less noisy $Q$-function from the learner, i.e., to ignore the other agents. For vanishing quantization noise, we recover Theorem \ref{theo: identical additive noise}. 

Although it is possible to derive an estimation rule for the most general case, we have chosen to assume that the additive noise properties are similar for both the agent-agent noise and the agent-learner noise, in order to stress the tradeoff between additive noise and quantization noise and for simplicity of the resulting expression. 
\section{Closing remarks}
\label{sec:Closing remarks}
We have provided PAC performance guarantees for cooperative MARL under noisy and resource limited communication conditions, and suggested efficient algorithms for that case. By doing so, we have emphasized the tradeoff between the advantages of cross-agent communication and the disadvantages of noisy and restricted communication channels. These results open the door to many future extensions including adaptive learning algorithms for optimal communication protocols, location and agent dependent coordinated exploration, and self-learning by agents. It is also worthwhile to examine the effects of different types of communication restrictions such as latency and collisions that should lead to opposite dependency over the sparsity of the communication network, and to combine them with the effects of additive and quantization noise studied here. 

\vskip 0.2in
\bibliography{Bibliography}
\newpage

\appendix
\section{Efficient PAC proofs}
\label{sec:Appendix A: Efficient PAC proofs}
In this section we provide proofs for the Efficient-PAC theorems presented in Section \ref{sec:Communication noise exploration bounds}. We also note that in this work we use the median-of-means estimate for the sample-mean (Definition \ref{def: F operator}), instead of using the more standard empirical-mean estimate. It was recently demonstrated that the former has improved convergence results over the latter in the absence of information about the underlying distribution \citep{Pazis2016, Catoni, Sub-Gaussian}.
\subsection{Definitions}
\label{subsec:Definitions}
First, we define the sample set as a collection of previously agent-collected samples. We note that only the learner has access to the full sample set, which is used for value iteration. 
\begin{restatable}[]{mydef}{sample_set}
\label{def:sample_set}
For a state-action $(s,a)$, the sample set $u(s,a)$ is a set of up to $k$ tuple samples $x_{l}\triangleq (s,a,\widetilde{R}_{n(l),t(l)},s_{n(l),t(l)+1})$, where $l$ is the sample's index, $n(l)$ is the collecting agent and $t(l)$ is the time at which the sample was collected. $\widetilde{R}_{n(l),t(l)}$ is the corresponding noisy reward, and $s_{n(l),t(l)+1}$ is the value of the next state. 
\end{restatable}
For simplicity of representation, we define the saturated $Q$-function to be the restriction of some $Q$-function to the set $[0,Q_{max}]$ for each state-action, for some given $Q$ and $Q_{max}$.
\begin{restatable}[]{mydef}{Q^{sat}}
\label{def:Qsat}
Given a function $Q:S\times A \rightarrow \mathbb{R} $ and a positive value $Q_{max}$, the \emph{saturated $Q$-function} is, 
\begin{equation*}
    Q_{\sat}(s,a)=\max\{0,\min\{Q_{\max},Q(s,a))\}\}
\end{equation*}
\end{restatable}
The next operator $F$ is used to compute the approximate Bellman operator in Algorithm \ref{alg:Communication noise tolerant PAC Exploration}, and is essentially an empiric estimate of the Bellman operator with an additive Upper Confidence Bound (UCB) term (as stated before, we use the median-of-means estimate for the sample mean corresponding to $R(s,a,s')+\gamma Q(s',\pi(s'))$ instead of the empirical average). For given integers $k,k_{m}$ such that $k/k_{m}$ is some power of $2$, given that the number of samples in the sample set $u(s,a)$ is $2^{p}k_{m}$ with some integer $p$, we divide the group arbitrarily to $k_{m}$ groups with $2^{p}$ samples each, calculate the sample average resulting from this group as the operator $G$, and then define $F$ to be the median of the various $G$ operators, with an additive UCB term. 
\begin{restatable}[]{mydef}{Foperator}
\label{def: F operator}
Let $k_{m} \geq 1$ be an integer parameter, $\pi$ some fixed policy, $\epsilon_{b}$ a real value and $u(s,a)$ be a sample set with $|u(s,a)| \triangleq 2^{p}k_{m}$ samples for some integer $p \geq 0$. The operator $F^{\pi}(Q,u(s,a))$ is defined as
\begin{equation}
\label{eq:DefineF}
    \begin{gathered}
            F^{\pi}(Q,u(s,a)) \triangleq \frac{\epsilon_{b}}{\sqrt{|u(s,a)|}}+\mathrm{median}\{G^{\pi}(Q,u(s,a),1),...,G^{\pi}(Q,u(s,a),k_{m})\}~, 
    \end{gathered}
\end{equation}
where
\begin{equation}
\label{eq:DefineG}
    \begin{gathered}
            G^{\pi}(Q,u(s,a),j) \triangleq \frac{k_{m}}{|u(s,a)|}\sum^{j\frac{|u(s,a)|}{k_{m}}}_{l=1+(j-1)\frac{|u(s,a)|}{k_{m}}}\left(\widetilde{R}_{n(l),t(l)}+\gamma Q(s_{n(l),t(l)+1},\pi(s_{n(l),t(l)+1}))\right)~,  
    \end{gathered}
\end{equation}
and $(\widetilde{R}_{n(l),t(l)},s_{n(l),t(l)+1})$ is the $l$-th noisy sample in $u(s,a)$ collected by agent $n(l)$ at time $t(l)$. We will use $\widetilde{F}$ to denote $\widetilde{F}^{\pi^{Q}}$, where $\pi^{Q}$ is the greedy policy over $Q$.
\end{restatable}
The approximate Bellman operator $\widetilde{B}$ from Algorithm \ref{alg:Communication noise tolerant PAC Exploration} is defined as a mere saturation.
\begin{restatable}[]{mydef}{Boperator}
\label{def: B operator}
For a state-action $(s,a)$, the approximate Bellman operator $\widetilde{B}^{\pi}$  for policy ${\pi}$ is 
\begin{equation*}
    \begin{gathered}
            \widetilde{B}^{\pi}Q(s,a) \triangleq \max\{0,\min\{Q_{\max},F^{\pi}(Q,u(s,a))\}\}
    \end{gathered}
\end{equation*}
We will use $\widetilde{B}$ to denote $\widetilde{B}^{\pi^{Q}}$. When $|u(s,a)|=0$ we set $\widetilde{B}^{\pi}Q(s,a) \triangleq Q_{\max}$.
\end{restatable}
The variance of the Bellman operator $\sigma$ is defined to an upper bound on the possible values of the variance for all terms that can be encountered by the algorithm. 
\begin{restatable}[]{mydef}{sig} \label{def: sigma}
The variance of the Bellman operator $\sigma$ is defined to be the minimal constant satisfying
\begin{equation*}
    \sum_{s^{'}}p(s'|s,a)\left(R(s,a,s')+\gamma  Q(s^{'},\pi^{Q}(s^{'}))-B^{\pi^{Q}}Q(s,a)\right)^{2} \leq \sigma^{2} \quad \forall (s,a,\pi^{Q},Q),
\end{equation*}
where $Q$ can be any $Q$-function produced by the learner during the run of the algorithm, and $\pi^{Q}$ is the corresponding greedy policy. 
\end{restatable}
An efficient exploration algorithm is defined as an algorithm for which the computational complexity, the space complexity and the sample complexity are bounded. Intuitively, the sample complexity can be thought of as the number of samples that need to be collected before the algorithm achieves the exploratory goal, and is commonly defined as the number of sub-optimal steps in which the algorithm is `far' from its goal. We define a measure for the sample complexity termed the \emph{Total Cost of Exploration} (TCE) which is shown to dominate the number of sub-optimal steps measure \citep{Pazis2016a}. This measure takes into account how sub-optimal the steps are by summing the difference between the value function under the policy produced by the algorithm, and the value of the optimal policy. It can be easily shown that if the TCE of an algorithm is bounded by some term $L$, then the number of sub-optimal steps measure is bounded by $L/\epsilon$.
\begin{restatable}[]{mydef}{TCE}
\label{def:TCE}
Let $\pi$ be a possibly non-stationary and history dependent policy, $(s_{1},s_{2},...)$ a random path generated by $\pi$, $\epsilon$ a positive constant, $T$ the (possibly infinite) set of time steps for which $V^{\pi}(s_{t})<V^{*}(s_{t})-\epsilon$, and define
\begin{equation*}
    \begin{aligned}
            \epsilon_{e}(t) &=V^{*}(s_{t})-V^{\pi}(s_{t})-\epsilon,\quad \forall t \in T\\
            \epsilon_{e}(t) &=0,\quad \forall t \notin T
    \end{aligned}
\end{equation*}
The Total Cost of Exploration (TCE) for parameter $\epsilon$ is defined as the un-discounted infinite sum
$TCE_{\epsilon} \triangleq\sum_{t=0}^{\infty}\epsilon_{e}(t)$. To avoid encumbering the notation, we use $TCE$ instead of $TCE_{\epsilon}$ when the context is clear. 
\end{restatable}
A single-agent exploration algorithm is defined to be Probably Approximately Correct (PAC) if its sample complexity, computational complexity and space complexity are all bounded by some polynomial.
\begin{restatable}[]{mydef}{PAC}
\label{def:PAC}
A single agent algorithm is said to be efficient PAC-MDP, if for any $\varepsilon>0$ and $0<\delta<1$, its sample complexity, its per-time step computational complexity, and its space complexity, are less than some polynomial in the relevant quantities $(S,A,1/\varepsilon,1/\delta,1/1-\gamma)$, with probability of at least $1-\delta$. 
\end{restatable}
We will define a multi-agent algorithm to be efficient PAC-MDP, if Definition \ref{def:PAC} holds with regards to a sample complexity that is defined as the sum of sample complexities for all agents. By using the TCE measure, the TCE of a multi agent system is thus TCE$=\sum^{N}_{i=1}$TCE$(i)$ where $TCE(i)$ is the TCE for agent $i$ for $\epsilon$.
\begin{restatable}[]{mydef}{MAPAC}
\label{def:MAPAC}
A multi-agent algorithm is said to be efficient PAC-MDP, if for any $\varepsilon>0$ and $0<\delta<1$, its sample complexity TCE$=\sum^{N}_{i=1}$TCE$(i)$, per-time step computational complexity, and space complexity, are all less than some polynomial in the relevant quantities $(S,A,1/\varepsilon,1/\delta,1/1-\gamma)$, with probability of at least $1-\delta$. The sample complexity is the sum of sample complexities of all agents.  
\end{restatable}
A flow chart for the dependencies between the various lemmas and theorems we state here can be found in Figure \ref{fig:proof flow}.
\begin{figure}
    \includegraphics[width=0.9\textwidth]{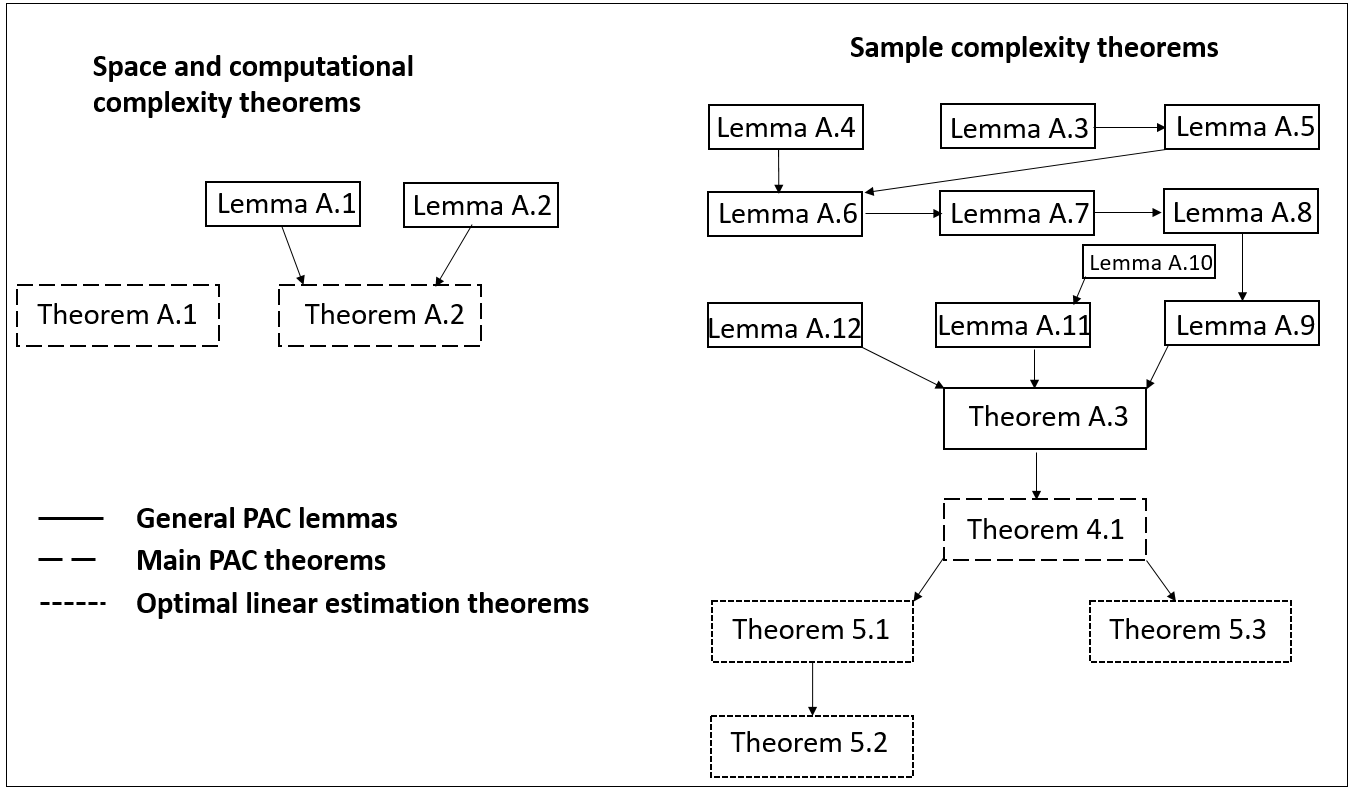}
      \centering
      \caption{\label{fig:proof flow}Dependencies between the various lemmas and proofs stated in this paper. Continuous black rectangles stand for general PAC lemmas, dashed squares indicate the main PAC theorems, and short-dash squares stand for theorems that use the optimal linear estimation methods of Section \ref{sec:Estimation methods}.}
\end{figure}
\subsection{Space and computational complexity bounds}
\label{subsec:Space and computational complexity bounds}
Since Algorithm \ref{alg: Advance N agents} is a sub-algorithm for Algorithm \ref{alg:Communication noise tolerant PAC Exploration}, in the following theorems we only state results for the latter.
\begin{restatable}[]{mytheo}{spacecomplexity}
\label{theo:space complexity}
The space complexity of Algorithm \ref{alg:Communication noise tolerant PAC Exploration} is $O\left((k+N)SA\right)$.
\end{restatable}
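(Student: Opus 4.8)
The plan is to itemize every object the algorithm holds in memory simultaneously, bound each contribution, and add them up. By inspection of Algorithm~\ref{alg:Communication noise tolerant PAC Exploration}, the learner stores: (a) the $Q$-function table $Q(\cdot,\cdot)$, of size $O(SA)$; (b) the sample sets $\{u(s,a)\}_{(s,a)}$, one per state-action, each of which by Definition~\ref{def:sample_set} and the explicit cap $|u(s,a)|<k$ guarding the append step contains at most $k$ tuples of the constant-size form $(s,a,\widetilde R,s')$, contributing $O(kSA)$; (c) the temporary sets $\{u^{tmp}(s,a)\}$; (d) the pending list $Z$ of samples uploaded by the agents; and (e) the scalar parameters $k,k_m,\epsilon_a$ and the update flag, which are $O(1)$. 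By the convention stated just before the theorem, Algorithm~\ref{alg: Advance N agents} is run as a subroutine, so I also count the per-agent storage it uses.

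For (c), the only bound that is not immediate, I would argue as follows. A set $u(s,a)$ is only ever reassigned to the current $u^{tmp}(s,a)$, which at the moment of reassignment has size exactly $2^{p}k_m$; hence $|u(s,a)|$ is always either $0$ or a power of two times $k_m$, and it never exceeds $k$. Samples are appended to $u^{tmp}(s,a)$ only while $|u(s,a)|<k$, i.e.\ while $|u(s,a)|\le k/2$ (using that $k/k_m$ is a power of two, by Assumption~\ref{assump:main}), and $u^{tmp}(s,a)$ is cleared the first time its size reaches a value of the form $2^{p}k_m$ exceeding $|u(s,a)|$, that is, by the time it reaches $\max(k_m,2|u(s,a)|)\le k$. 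Hence $|u^{tmp}(s,a)|\le k$ at all times, and $\sum_{(s,a)}|u^{tmp}(s,a)| = O(kSA)$.

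For (d): on every pass of the outer loop each of the $N$ agents appends exactly one tuple to $Z$ and the learner's inner loop over $i=1,\dots,N$ removes exactly those $N$ tuples, so $|Z|=O(N)$ is an invariant. For the agents: agent $i$ keeps $\hat Q^{i}$ and $\hat Q^{i}_{sat}$, each $O(SA)$, and can accumulate the weighted sum of line~5 of Algorithm~\ref{alg: Advance N agents} incrementally as the noisy tables $\widetilde Q_L^i$ and $\{\widetilde Q_A^{j,i}\}$ arrive, so a single extra $O(SA)$ receive buffer suffices (and even less under the localized-transmission remark in the footnotes of Section~\ref{sec:Background, model, and definitions}); summing over the $N$ agents gives $O(NSA)$. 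Adding (a)--(e) together with the agent cost yields $O(SA)+O(kSA)+O(kSA)+O(N)+O(NSA)=O((k+N)SA)$, as claimed. I expect step (c) to be the only one requiring a genuine argument; the rest is read directly off the pseudocode.
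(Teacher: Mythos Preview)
Your proof is correct and follows the same itemize-and-sum approach as the paper; the paper's own argument simply states that the learner stores a $Q$-table plus at most $O(k)$ samples per state-action (giving $O(kSA)$) and that each agent stores one $Q$-table (giving $O(NSA)$). Your treatment is actually more careful than the paper's: you explicitly bound $|u^{tmp}(s,a)|\le k$ and $|Z|=O(N)$, both of which the paper passes over in silence, so your version closes small gaps the published proof leaves open.
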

\begin{proof}
Each agent in Algorithm \ref{alg:Communication noise tolerant PAC Exploration} only needs access to the Q-function, and the estimated Q-function is calculated as a weighted average of $d_{i}+1$ samples (from the learner and from the other agents). Thus, only one value per state-action  has to be saved, leading to a term of $O\left(NSA\right)$ for all $N$ agents. The learner keeps a copy of the $Q$-function, and for each state-action saves up to $O(k)$ samples. This results in a term of $O\left(kSA\right)$. Overall, the space complexity is $O\left((k+N)SA\right)$
\end{proof}
In order to bound the computational complexity of the algorithm, we first quote Lemma 13.3 from \citep{Pazis2016a} which bounds the number of value iterations needed in order to converge to a fixed point given some confidence level. 
\begin{restatable}[]{mylemma}{B_conv}
\label{lemma:B_conv}
Let $\hat{B}$ be a $\gamma$-contraction with fixed point $\hat{Q}$, and $Q$ the output of $(1/(1-\gamma))\ln Q_{max}/\epsilon$
iterations of value iteration using $\hat{B}$. Then if $Q,Q_{0} \in [0,Q_{max}]$ where $Q_{0}$ is the initial Q-function, then
\begin{equation*}
    |Q(s,a)-\hat{B}Q(s,a)|\leq \epsilon \quad \forall (s,a)
\end{equation*}
\end{restatable}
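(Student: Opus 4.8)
The plan is to run the textbook geometric-convergence argument for contraction mappings, using only the extra fact that all value-iteration iterates live in the cube $[0,Q_{\max}]^{S\times A}$. Write $Q_n = \hat B^n Q_0$ for the $n$-th iterate, so the output is $Q = Q_N$ with $N = \ceil{\tfrac{1}{1-\gamma}\ln\tfrac{Q_{\max}}{\epsilon}}$, and the quantity to bound is $\|Q_N - \hat B Q_N\|_\infty = \|\hat B^N Q_0 - \hat B^{N+1}Q_0\|_\infty$.

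First I would peel off $N$ applications of the contraction: since $\hat B$ is a $\gamma$-contraction in the sup-norm, $\|\hat B^N Q_0 - \hat B^{N+1} Q_0\|_\infty \le \gamma^N \|Q_0 - \hat B Q_0\|_\infty$. (Equivalently one can route through the fixed point, $\|Q_N-\hat Q\|_\infty \le \gamma^N\|Q_0-\hat Q\|_\infty$ and $\|\hat B Q_N-\hat Q\|_\infty = \|\hat B Q_N-\hat B\hat Q\|_\infty \le \gamma^{N+1}\|Q_0-\hat Q\|_\infty$, then apply the triangle inequality; both routes work, the first is slightly tighter.) Next I would bound the base term: by hypothesis $Q_0 \in [0,Q_{\max}]$ componentwise and $\hat B Q_0 \in [0,Q_{\max}]$ as well (this is exactly the saturation in $\widetilde B$, Definition \ref{def: B operator}; in the abstract statement it is part of the standing assumption that the iterates stay in $[0,Q_{\max}]$), so the difference of two numbers in $[0,Q_{\max}]$ has absolute value at most $Q_{\max}$. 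Hence $\|Q_N - \hat B Q_N\|_\infty \le \gamma^N Q_{\max}$.

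Finally I would verify that the stated $N$ forces $\gamma^N Q_{\max} \le \epsilon$. Taking logarithms this requires $N \ge \ln(Q_{\max}/\epsilon)/\ln(1/\gamma)$, and since $\ln(1/\gamma) = -\ln\gamma \ge 1-\gamma$ for $\gamma \in (0,1)$, the choice $N = \ceil{\tfrac{1}{1-\gamma}\ln\tfrac{Q_{\max}}{\epsilon}}$ suffices; if $\epsilon \ge Q_{\max}$ the claim is trivial, so one may assume $\epsilon < Q_{\max}$ so that $\ln(Q_{\max}/\epsilon) > 0$. The only real obstacle, such as it is, is this last elementary logarithm inequality $1/\ln(1/\gamma) \le 1/(1-\gamma)$; everything else is the standard Banach-iteration estimate, and indeed the statement is quoted verbatim from \citep{Pazis2016a}.
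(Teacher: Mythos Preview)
Your argument is correct and is the standard geometric-convergence proof for contraction mappings; the paper itself does not prove this lemma at all but merely quotes it as Lemma~13.3 of \citep{Pazis2016a}, so there is nothing to compare against. The only minor remark is that your ``second route'' through the fixed point is in fact the one that uses the hypothesis $\hat Q \in [0,Q_{\max}]$ (giving $\|Q_0-\hat Q\|_\infty \le Q_{\max}$), whereas your ``first route'' needs $\hat B Q_0 \in [0,Q_{\max}]$, which is not literally part of the stated hypotheses but, as you note, holds automatically for the saturated operator $\widetilde B$ actually used downstream.
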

We now show that the operator $\widetilde{B}$ is indeed a $\gamma$-contraction.
\begin{restatable}[]{mylemma}{contraction}
\label{lemma:contraction}
$\widetilde{B}$ is a $\gamma$-contraction under the maximum norm.
\end{restatable}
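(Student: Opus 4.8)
The plan is to decompose $\widetilde{B}$ into its building blocks --- the truncation to $[0,Q_{\max}]$, the median, the within-group averages, and the greedy evaluation of $Q$ --- and to check that every block is non-expansive in the maximum norm except the within-group averages, which contract by exactly $\gamma$. First I would record the elementary fact that, writing $\bar{Q}(s) = \max_{a} Q(s,a)$, the map $Q \mapsto \bar{Q}$ is non-expansive: $|\bar{Q}_1(s) - \bar{Q}_2(s)| \le \max_a |Q_1(s,a) - Q_2(s,a)| \le \|Q_1 - Q_2\|_\infty$ for every $s$. The point of this observation is that $\widetilde{B}$ is really $\widetilde{B}^{\pi^{Q}}$ with $\pi^{Q}$ greedy with respect to $Q$, so the term $Q(s_{n(l),t(l)+1},\pi^{Q}(s_{n(l),t(l)+1}))$ appearing inside $G^{\pi^{Q}}$ equals $\bar{Q}(s_{n(l),t(l)+1})$; hence, when we compare the operator at two arguments $Q_1,Q_2$, each group average $G$ depends on its argument only through $\bar{Q}$ evaluated at the recorded next-states.

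Next I would bound the group averages. Fix a state-action $(s,a)$ with $|u(s,a)| = 2^{p}k_{m} > 0$ and an index $j$. In the difference $G^{\pi^{Q_1}}(Q_1,u(s,a),j) - G^{\pi^{Q_2}}(Q_2,u(s,a),j)$ the reward terms $\widetilde{R}_{n(l),t(l)}$ cancel, and what remains is $\gamma$ times a convex combination (coefficients $k_m/|u(s,a)|$, summing to one over the $|u(s,a)|/k_m$ indices of the group) of the quantities $\bar{Q}_1(s_{n(l),t(l)+1}) - \bar{Q}_2(s_{n(l),t(l)+1})$, so
\begin{equation*}
  \bigl|G^{\pi^{Q_1}}(Q_1,u(s,a),j) - G^{\pi^{Q_2}}(Q_2,u(s,a),j)\bigr| \le \gamma \|\bar{Q}_1 - \bar{Q}_2\|_\infty \le \gamma \|Q_1 - Q_2\|_\infty .
\end{equation*}
I would then use the order-statistic estimate that, for two lists $(a_j)$ and $(b_j)$ of equal length with $|a_j - b_j| \le c$ for all $j$, one has $|\mathrm{median}(a) - \mathrm{median}(b)| \le c$ (since at least half of the $a_j$ lie at or below $\mathrm{median}(b) + c$, and symmetrically). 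Applying this with $c = \gamma\|Q_1-Q_2\|_\infty$, and noting that the additive term $\epsilon_b/\sqrt{|u(s,a)|}$ in the definition of $F$ depends only on $|u(s,a)|$ and therefore cancels, gives $\bigl|F^{\pi^{Q_1}}(Q_1,u(s,a)) - F^{\pi^{Q_2}}(Q_2,u(s,a))\bigr| \le \gamma\|Q_1 - Q_2\|_\infty$.

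Finally I would note that $x \mapsto \max\{0,\min\{Q_{\max},x\}\}$ is the projection of $\mathbb{R}$ onto the interval $[0,Q_{\max}]$ and hence $1$-Lipschitz, so the bound passes through the saturation and yields $|\widetilde{B}Q_1(s,a) - \widetilde{B}Q_2(s,a)| \le \gamma\|Q_1 - Q_2\|_\infty$ for every $(s,a)$ with $|u(s,a)|>0$; when $|u(s,a)| = 0$ both operators return $Q_{\max}$ and the difference is zero, so the inequality holds trivially there too. Taking the supremum over $(s,a)$ gives $\|\widetilde{B}Q_1 - \widetilde{B}Q_2\|_\infty \le \gamma\|Q_1-Q_2\|_\infty$. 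The main obstacle I anticipate is exactly the coupling between the greedy (max-over-actions) policy and the median-of-means structure --- one must be careful that the policy moves with the argument of the operator --- but this dissolves once the greedy evaluation is rewritten as $\bar{Q}$, after which max, convex averaging, median and truncation are each either non-expansive or $\gamma$-contractive.
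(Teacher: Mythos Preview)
Your proposal is correct and follows essentially the same route as the paper: decompose $\widetilde{B}$ into the greedy evaluation, the group averages $G$, the median, and the saturation, and show each block is non-expansive (with the $\gamma$ appearing at the $G$ step). The only cosmetic difference is that the paper argues via one-sided inequalities ($G(Q_1)\le G(Q_2)+\gamma\epsilon$, then monotonicity of median and of the saturation, then symmetry), whereas you work directly with two-sided Lipschitz bounds; you also handle the $|u(s,a)|=0$ case explicitly, which the paper leaves implicit.
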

\begin{proof}
given some real parameter $x$, note that if we define the saturation function 
\begin{equation*}
    f_{sat}(x)= \max\{f_{min},\min\{f_{max},x\}\}
\end{equation*}
for some $f_{max} \geq f_{min}$, then given some $x_{0}\geq 0$, we have that $f_{sat}(x+x_{0}) \leq f_{sat}(x)+x_{0}$. It also holds that $f_{sat}$ is non-decreasing. Now define $||Q_{1}-Q_{2}||_{\infty}=\epsilon$. To prove the lemma, we must show that $||\widetilde{B}Q_{1}-\widetilde{B}Q_{2}||_{\infty}\leq \gamma\epsilon$. 

First, note that any function $G(Q,u(s,a,),j)$ (see Definition \ref{def: F operator}) for $1 \leq j\leq k_{m}$ is a simple average of terms containing maximization over the Q-function, and so under the definition of $\epsilon$ above
\begin{equation*}
    G(Q_{1},u(s,a),j) \leq G(Q_{2},u(s,a),j) + \gamma\epsilon
\end{equation*}
For all state-actions and values of $j$. Since the operator F from Definition \ref{def: F operator} contains a mere median function over the $G$ functions, the following also holds 
\begin{equation*}
    F(Q_{1},u(s,a)) \leq F(Q_{2},u(s,a)) + \gamma\epsilon
\end{equation*}
And now for the approximate Bellman operator we have 
\begin{equation*}
    \widetilde{B}Q_{1}(s,a) = f_{sat}(F(Q_{1},u(s,a))) \leq f_{sat}(F(Q_{2},u(s,a)))+\gamma\epsilon=\widetilde{B}Q_{2}(s,a)+\gamma\epsilon
\end{equation*}
Similarly we can show that $\widetilde{B}Q_{2}(s,a) \leq \widetilde{B}Q_{1}(s,a)+\gamma\epsilon$, which proves the theorem.
\end{proof}
We use the previous results to prove a bound over the computational complexity of the algorithm. 
\begin{restatable}[]{mytheo}{computationalcomplexity}
\label{theo:computational complexity}
The per step computational complexity of Algorithm \ref{alg:Communication noise tolerant PAC Exploration} is
\begin{equation*}
    \label{eq:computational complexity}
    O\left(NSAd_{max}+\frac{kSA^{2}}{1-\gamma}\ln\frac{Q_{max}}{\epsilon_{a}}\right)
\end{equation*}
where $d_{max}=\max_{i}d_{i}$.
\end{restatable}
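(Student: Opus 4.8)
The plan is to split the work done on one iteration of the main \texttt{for} loop of Algorithm~\ref{alg:Communication noise tolerant PAC Exploration} into three independent pieces — (i) the call to Algorithm~\ref{alg: Advance N agents} together with the learner's transmission of the $Q$-table on line~18, (ii) the learner's sample-set bookkeeping on lines~4--13, and (iii) the value-iteration \texttt{while} loop on lines~15--17 — bound each piece from the line-by-line cost of the two algorithms, and add. Throughout I take the worst case in which \texttt{update} is true on that step, since otherwise only the per-agent greedy maximisation on line~10 of Algorithm~\ref{alg: Advance N agents} survives and the cost is strictly smaller.

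For piece (i): when \texttt{update} is true each agent $i$ receives $\widetilde Q^i_L$ ($O(SA)$) and $d_i$ tables $\{\widetilde Q^{j,i}_A\}$ ($O(SA\,d_i)$), rebuilds $\hat Q^i$ over all $SA$ entries as a weighted sum of these $d_i+1$ values (line~5, cost $O(SA\,d_i)$), saturates it ($O(SA)$), performs the $O(A)$ greedy step at its current state, and transmits one tuple ($O(1)$); summing over agents and using $\sum_i d_i \le N d_{max}$ with $d_{max}=\max_i d_i$ gives $O(NSA\,d_{max})$, and the learner's transmission of the $Q$-table to the $N$ agents adds a subsumed $O(NSA)$. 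For piece (ii): for each of the $N$ tuples pulled from the list $Z$ the learner performs a constant number of set-size comparisons and at most one $O(1)$ pointer reassignment of $u(s,a)$ and $u^{tmp}(s,a)$ (the test $|u^{tmp}(s,a)|=2^p k_m$ is $O(1)$ with a stored counter), so this piece is $O(N)$ and is dominated by piece (i).

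Piece (iii) is the crux and the main obstacle. First, Lemma~\ref{lemma:contraction} shows that $\widetilde B$ is a $\gamma$-contraction under $\|\cdot\|_\infty$; since the initial table is $Q_0=Q_{max}\in[0,Q_{max}]$ and $\widetilde B$ maps into $[0,Q_{max}]$ by Definition~\ref{def: B operator} (so every iterate and the fixed point lie in that box), Lemma~\ref{lemma:B_conv} applies with $\epsilon=\epsilon_a$ and guarantees that after $O\!\big(\tfrac{1}{1-\gamma}\ln\tfrac{Q_{max}}{\epsilon_a}\big)$ sweeps the stopping predicate $|\widetilde B Q(s,a)-Q(s,a)|\le\epsilon_a$ holds for every $(s,a)$; hence the \texttt{while} loop runs at most that many times. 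Each sweep is one application of $\widetilde B$ to the whole table: for a fixed $(s,a)$, evaluating $F^{\pi^Q}(Q,u(s,a))$ (Definition~\ref{def: F operator}) requires, for each of the $|u(s,a)|\le k$ stored samples $(\widetilde R,s')$, computing $\widetilde R+\gamma\max_{a'}Q(s',a')$ at cost $O(A)$, accumulating these into the $k_m$ group means, and taking their median at cost $O(k_m)$, i.e. $O(kA)$ per state-action and $O(kSA\cdot A)=O(kSA^2)$ per sweep. Multiplying by the sweep count gives $O\!\big(\tfrac{kSA^2}{1-\gamma}\ln\tfrac{Q_{max}}{\epsilon_a}\big)$ for piece (iii), and adding (i)--(iii) yields the claimed bound. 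The delicate points are exactly the ones just flagged: verifying the hypotheses of Lemma~\ref{lemma:B_conv} (contraction and invariance of $[0,Q_{max}]$) so that the sweep count is legitimate, and accounting honestly for one sweep — the $A^2$ factor arises from recomputing $\max_{a'}Q(s',a')$ separately for each sample rather than caching the greedy action per state (caching would give the sharper but less clean $O(kSA)$ per sweep); everything else is routine arithmetic.
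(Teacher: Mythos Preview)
Your proposal is correct and follows essentially the same decomposition and argument as the paper's own proof: both split the per-step work into the agent-advance call, the learner's $O(N)$ bookkeeping, and the value-iteration loop, invoke Lemma~\ref{lemma:contraction} together with Lemma~\ref{lemma:B_conv} to bound the number of sweeps by $\tfrac{1}{1-\gamma}\ln\tfrac{Q_{max}}{\epsilon_a}$, and cost a single application of $\widetilde B$ at $O(kA)$ per state-action. Your write-up is in fact a bit more careful in places (you explicitly verify the $[0,Q_{max}]$-invariance hypothesis of Lemma~\ref{lemma:B_conv} and note the caching alternative that would shave the $A^2$ to $A$), but there is no substantive difference in approach.
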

\begin{proof}
Lines 3-12 in Algorithm \ref{alg: Advance N agents} are performed once for each of the $N$ agents. For each agent, we first calculate a weighted average over $d_{i}$ samples, and this is done $SA$ times - once per every state-action. We then use a greedy policy that requires a search over $A$ actions. This results in Algorithm \ref{alg: Advance N agents} having a computational complexity bounded by $O\left(NSAd_{max}\right)$ where $d_{max}\triangleq \max_{i}d_{i}$.

Next, we have that lines 5-12 in Algorithm \ref{alg:Communication noise tolerant PAC Exploration} are also performed $N$ times, once for each sample collected by an agent. Assuming that nullifying an array of samples can be performed in some constant time independent of the number of elements, we have a contribution of $O(N)$ to the computational complexity from lines 6-12. Since the operator $\widetilde{B}$ is a $\gamma$-contraction and since the $Q$-function satisfies $Q \in [0,Q_{max}]$ at all times by the definition of $\widetilde{B}$, we can use Lemma \ref{lemma:B_conv} and deduce that the value iteration in lines 15-17 will be performed at most $\frac{1}{1-\gamma}\ln\frac{Q_{max}}{\epsilon_{a}}$ times for each state-action. By definitions \ref{def: F operator},\ref{def: B operator}, A single execution of $\widetilde{B}$ for a given state-action requires $k_{m}$ calculations of an average at the cost of $\frac{k}{k_{m}}A$ operations each, and is thus performed $kA$ times. 

Thus, the overall computational complexity is
\begin{equation*}
O\left(NSAd_{max}+\frac{kSA^{2}}{1-\gamma}\ln\frac{Q_{max}}{\epsilon_{a}}\right)~. 
\end{equation*}
\end{proof}


\subsection{Sample complexity bounds}
\label{subsec:Sample complexity bounds}
We now introduce various lemmas that will help us prove a sample complexity bound in Theorem \ref{theo:sample complexity}, which will be used to prove Theorem \ref{theo:sample complexity2}. 
First, for completeness, we quote two concentration inequalities that we use in theorems \ref{lemma:GtoB} and \ref{lemma:F_lemma}. The first one is Cantelli's inequality \citep{cantelli1929sui}.
\begin{restatable}[]{mylemma}{Cantelli}
\label{lemma:Cantelli}
Let $X$ be a real valued random variable with a finite variance $\sigma^{2}$. Then for any $\lambda > 0 $
\begin{equation*}
    P(X-EX \geq \lambda) \leq \frac{\sigma^{2}}{\sigma^{2}+\lambda^{2}},\quad;\quad 
    P(X-EX \leq -\lambda) \leq \frac{\sigma^{2}}{\sigma^{2}+\lambda^{2}},
\end{equation*}
\end{restatable}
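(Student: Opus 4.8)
The plan is to run the standard one-sided Chebyshev argument: first reduce to the mean-zero case, then apply Markov's inequality to a shifted squared variable and optimize the shift. Concretely, I would set $Y = X - \mathbb{E}X$, so that $\mathbb{E}Y = 0$ and $\mathrm{Var}(Y) = \sigma^2$, and it suffices to prove $P(Y \geq \lambda) \leq \sigma^2/(\sigma^2 + \lambda^2)$.

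The key step is the following chain: for any real $u \geq 0$, since $\lambda + u > 0$ the event $\{Y \geq \lambda\}$ is contained in $\{(Y+u)^2 \geq (\lambda+u)^2\}$, hence by Markov's inequality applied to the nonnegative variable $(Y+u)^2$,
\begin{equation*}
   P(Y \geq \lambda) \;\leq\; \frac{\mathbb{E}\big[(Y+u)^2\big]}{(\lambda+u)^2} \;=\; \frac{\sigma^2 + u^2}{(\lambda+u)^2}~,
\end{equation*}
using $\mathbb{E}[(Y+u)^2] = \mathbb{E}[Y^2] + 2u\,\mathbb{E}[Y] + u^2 = \sigma^2 + u^2$. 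I would then minimize the right-hand side over $u \geq 0$; differentiating shows the minimizer is $u = \sigma^2/\lambda$ (which is indeed $\geq 0$), and substituting this value collapses the bound to $\sigma^2/(\sigma^2 + \lambda^2)$ after routine algebra. The second inequality in the statement follows immediately by applying the first to $-X$, since $\mathrm{Var}(-X) = \sigma^2$ and $P(X - \mathbb{E}X \leq -\lambda) = P((-X) - \mathbb{E}(-X) \geq \lambda)$.

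There is essentially no serious obstacle here: the only point requiring a line of care is verifying that the optimization over $u$ is legitimate (the objective is smooth on $u \in [0,\infty)$, tends to a finite value as $u \to \infty$, and has a unique interior critical point that is a minimum), and that $u = \sigma^2/\lambda \geq 0$ so the bound is actually attained within the admissible range. If $\sigma^2 = 0$ the statement is trivial (then $X = \mathbb{E}X$ a.s. and the left-hand side is $0$), so one may assume $\sigma^2 > 0$ throughout; otherwise the argument is purely a short computation.
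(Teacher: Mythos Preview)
Your proof is correct and is the classical Cantelli argument. Note that the paper does not actually prove this lemma: it is stated ``for completeness'' and attributed directly to \cite{cantelli1929sui}, with no proof given. Your Markov-on-a-shifted-square derivation with the optimal shift $u=\sigma^{2}/\lambda$ is exactly the standard textbook proof, so there is nothing to compare beyond observing that you have filled in what the paper merely cites.
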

The second one is McDiarmid's inequality \citep{mcdiarmid1989method}. 
\begin{restatable}[]{mylemma}{Mcdiarmid}
\label{lemma:Mcdiarmid}
Let $X_{1},...,X_{n} \in \mathbb{M}$ be independent random variables, and $f$ be a mapping $f:\mathbb{M}^{n} \rightarrow \mathbb{R}$. If, for all $i \in \{1,...,n\}$, and for all $x_{1},...,x_{n},x_{i^{'}} \in \mathbb{M}$, the function $f$ satisfies
\begin{equation*}
    |f(x_{1},...,x_{i-1},x_{i},x_{i+1},...,x_{n})-f(x_{1},...,x_{i-1},x_{i^{'}},x_{i+1},...,x_{n})| \leq c_{i}~, 
\end{equation*}
for some positive constants $c_{1},...,c_{n}$, then $\forall t \geq 0$
\begin{equation*}
    P(f(x_{1},...,x_{n})-Ef(x_{1},...,x_{n}) \geq t) \leq \exp \left(\frac{-2t^{2}}{\sum_{i=1}^{n}c_{i}^{2}}\right)
\end{equation*}
and 
\begin{equation*}
    P(f(x_{1},...,x_{n})-Ef(x_{1},...,x_{n}) \leq -t) \leq \exp \left(\frac{-2t^{2}}{\sum_{i=1}^{n}c_{i}^{2}}\right)
\end{equation*}
\end{restatable}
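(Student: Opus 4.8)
This is the classical bounded-differences (McDiarmid) inequality, and the plan is to prove it by the martingale method of Azuma and Hoeffding. First I would introduce the Doob martingale associated with $f$: let $\mathcal{F}_{0}$ be the trivial $\sigma$-algebra and $\mathcal{F}_{i}=\sigma(X_{1},\ldots,X_{i})$, and set $Z_{i}=E[f(X_{1},\ldots,X_{n})\mid\mathcal{F}_{i}]$, so that $Z_{0}=Ef(X_{1},\ldots,X_{n})$ and $Z_{n}=f(X_{1},\ldots,X_{n})$ almost surely. Writing $f(X_{1},\ldots,X_{n})-Ef(X_{1},\ldots,X_{n})=\sum_{i=1}^{n}D_{i}$ with $D_{i}=Z_{i}-Z_{i-1}$, the sequence $(D_{i})$ is a martingale difference sequence with respect to $(\mathcal{F}_{i})$.

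The key step is to show that each $D_{i}$ is, conditionally on $\mathcal{F}_{i-1}$, supported in an interval of length at most $c_{i}$. Here independence of the $X_{j}$ is essential: it lets us write $Z_{i}=g_{i}(X_{1},\ldots,X_{i})$ for the explicit function $g_{i}(x_{1},\ldots,x_{i})=E[f(x_{1},\ldots,x_{i},X_{i+1},\ldots,X_{n})]$ obtained by integrating out the remaining coordinates (with $g_{0}=Ef$ and $g_{n}=f$). Define the $\mathcal{F}_{i-1}$-measurable quantities $U_{i}=\sup_{x}\big(g_{i}(X_{1},\ldots,X_{i-1},x)-g_{i-1}(X_{1},\ldots,X_{i-1})\big)$ and $L_{i}=\inf_{x}\big(g_{i}(X_{1},\ldots,X_{i-1},x)-g_{i-1}(X_{1},\ldots,X_{i-1})\big)$. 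The bounded-differences hypothesis on $f$, carried through the expectation over $X_{i+1},\ldots,X_{n}$, gives $U_{i}-L_{i}\le c_{i}$, while by construction $L_{i}\le D_{i}\le U_{i}$. Applying Hoeffding's lemma conditionally on $\mathcal{F}_{i-1}$ then yields $E[e^{\lambda D_{i}}\mid\mathcal{F}_{i-1}]\le\exp(\lambda^{2}c_{i}^{2}/8)$ for every $\lambda\in\mathbb{R}$.

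From here the argument is routine. By the tower property and backward induction on $i$, $E[e^{\lambda(f-Ef)}]=E\big[e^{\lambda\sum_{j<n}D_{j}}\,E[e^{\lambda D_{n}}\mid\mathcal{F}_{n-1}]\big]\le\cdots\le\exp\big(\tfrac{\lambda^{2}}{8}\sum_{i=1}^{n}c_{i}^{2}\big)$. A Chernoff bound gives $P(f-Ef\ge t)\le\exp\big(-\lambda t+\tfrac{\lambda^{2}}{8}\sum_{i}c_{i}^{2}\big)$ for all $\lambda>0$, and optimizing with $\lambda=4t/\sum_{i}c_{i}^{2}$ produces the stated bound $\exp\big(-2t^{2}/\sum_{i}c_{i}^{2}\big)$. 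The lower-tail estimate follows by applying the upper-tail bound to $-f$, which satisfies the same bounded-differences condition with the same constants.

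The main obstacle is the middle step, namely verifying that the martingale increments have $\mathcal{F}_{i-1}$-conditional range at most $c_{i}$: one must be careful to take the supremum and infimum over the $i$-th coordinate pointwise in the remaining coordinates \emph{before} integrating them out, so that $U_{i}$ and $L_{i}$ stay measurable with respect to the past, and it is precisely here that independence of the coordinates (not merely the bounded-differences property of $f$) enters. Everything else, including Hoeffding's lemma and the Chernoff optimization, is standard.
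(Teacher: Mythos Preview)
Your proof is correct and is the standard martingale (Doob/Azuma--Hoeffding) derivation of McDiarmid's inequality. However, the paper does not actually prove this lemma at all: it merely quotes the statement and cites \cite{mcdiarmid1989method} as a known concentration inequality, used later as a black box in the proof of Lemma~\ref{lemma:F_lemma}. So there is nothing to compare against; you have supplied a complete proof where the paper gives only a reference.
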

In lemma \ref{lemma:GtoB} we show that the function $G$ from Definition \ref{def: F operator} is close to the true Bellman operator with some probability.  
\begin{restatable}[]{mylemma}{GtoB}
\label{lemma:GtoB}
Let $\sigma$ be defined as in Definition \ref{def: sigma}, $\epsilon_{b}^{2}=4k_{m}(\sigma^{2}+\sigma^{2}_{R})$, and $k_{m}=\ceil{5.6\ln\frac{8N\ceil{1+\log_{2}\frac{k}{k_{m}}}(SA)^{2}}{\delta}}$. Under assumption \ref{assump: Noise_assumptions1} given a fixed $Q$, a fixed state action $(s,a)$ and a fixed $j \in \{1,...,k_{m}\}$ we have
\begin{equation*}
   P\left(G^{\pi}(Q,u(s,a),j)-B^{\pi}Q(s,a) \leq -\frac{\epsilon_{b}}{\sqrt{|u(s,a)|}}\right) \leq \frac{1}{5}
\end{equation*}
and
\begin{equation*}
\begin{gathered}
   P\left(G^{\pi}(Q,u(s,a),j)-B^{\pi}Q(s,a) \geq \frac{\epsilon_{b}}{\sqrt{|u(s,a)|}}\right) \leq \frac{1}{5}
   \end{gathered}
\end{equation*}
\end{restatable}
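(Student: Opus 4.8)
The plan is to bound the deviation of $G^{\pi}(Q,u(s,a),j)$ from $B^{\pi}Q(s,a)$ by viewing $G$ as an empirical average of $2^{p}$ i.i.d.\ terms of the form $\widetilde{R}_{n(l),t(l)}+\gamma Q(s_{n(l),t(l)+1},\pi(s_{n(l),t(l)+1}))$. First I would check the mean: conditioned on $(s,a)$, each summand has expectation $\sum_{s'}p(s'|s,a)\bigl(R(s,a,s')+\gamma Q(s',\pi(s'))\bigr) = B^{\pi}Q(s,a)$, because the reward noise $n^{i}_{R}(s,a)$ is zero-mean by Assumption~\ref{assump: Noise_assumptions1} and independent of the next-state draw. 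Hence $E\,G^{\pi}(Q,u(s,a),j) = B^{\pi}Q(s,a)$ and the problem is a one-sided concentration statement for the sample mean around its expectation.

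Next I would compute the variance of a single summand. The term $R(s,a,s')+\gamma Q(s',\pi(s')) - B^{\pi}Q(s,a)$ has variance at most $\sigma^{2}$ by Definition~\ref{def: sigma}, and the independent additive reward noise contributes at most $\sigma_{R}^{2}$ by Assumption~\ref{assump: Noise_assumptions1}; since the two are independent, each summand has variance at most $\sigma^{2}+\sigma_{R}^{2}$. Averaging $2^{p} = |u(s,a)|/k_{m}$ such independent terms, the variance of $G^{\pi}(Q,u(s,a),j)$ is at most $k_{m}(\sigma^{2}+\sigma_{R}^{2})/|u(s,a)| = \epsilon_{b}^{2}/(4|u(s,a)|)$ by the choice $\epsilon_{b}^{2}=4k_{m}(\sigma^{2}+\sigma_{R}^{2})$. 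Now apply Cantelli's inequality (Lemma~\ref{lemma:Cantelli}) to the random variable $X = G^{\pi}(Q,u(s,a),j)$ with $\lambda = \epsilon_{b}/\sqrt{|u(s,a)|}$: the right-hand side becomes
\[
\frac{\mathrm{Var}(X)}{\mathrm{Var}(X)+\lambda^{2}} \;\le\; \frac{\epsilon_{b}^{2}/(4|u(s,a)|)}{\epsilon_{b}^{2}/(4|u(s,a)|)+\epsilon_{b}^{2}/|u(s,a)|} \;=\; \frac{1/4}{1/4+1} \;=\; \frac{1}{5}.
\]
The same bound applies to the lower-tail event by the second half of Cantelli's inequality, giving both displayed inequalities.

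The main thing to be careful about — rather than a deep obstacle — is the independence bookkeeping needed to treat the $2^{p}$ summands within group $j$ as genuinely independent with the stated mean and variance. This requires that the samples in $u(s,a)$ were collected at distinct agent/time combinations and that the algorithm's sample-set refresh mechanism (waiting for $2^{p}k_{m}$ fresh samples before replacing $u(s,a)$) indeed guarantees this; Assumption~\ref{assump: Noise_assumptions1} supplies mutual independence of the reward noise across times, agents and next states given $(s,a)$, and Definition~\ref{def: sigma} supplies the uniform variance bound over all $Q$ and policies the learner can produce, so conditioning on the fixed $Q$, fixed $(s,a)$ and fixed $j$ is exactly what makes the i.i.d.\ structure valid. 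Once that is in place, the variance computation and the single application of Cantelli's inequality finish the proof; the somewhat elaborate expression for $k_{m}$ plays no role here and is only needed downstream when the median-of-means step (Lemma~\ref{lemma:F_lemma}) and the union bound over all state-actions, agents and value-iteration rounds are invoked.
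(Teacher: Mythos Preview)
Your proposal is correct and matches the paper's proof essentially step for step: compute $EG^{\pi}=B^{\pi}Q$, bound $\mathrm{Var}\,G\le k_{m}(\sigma^{2}+\sigma_{R}^{2})/|u(s,a)|$ via independence of the $|u(s,a)|/k_{m}$ summands, and apply Cantelli's inequality with $\lambda=\epsilon_{b}/\sqrt{|u(s,a)|}$ to obtain the $1/5$ bound. Your observation that the explicit formula for $k_{m}$ is unused here and only enters in Lemma~\ref{lemma:F_lemma} is also accurate.
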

\begin{proof}
Using the definition of $G$ \eqref{eq:DefineG} we have 
\begin{equation*}
    \begin{gathered}
            G^{\pi}(Q,u(s,a),j) = \frac{k_{m}}{|u(s,a)|}\sum^{j\frac{|u(s,a)|}{k_{m}}}_{l=1+(j-1)\frac{|u(s,a)|}{k_{m}}}\left(R_{l}+n_{R,l}+\gamma Q(s^{'}_{l},\pi(s^{'}_{l}))\right) 
    \end{gathered}
\end{equation*}
Where for convenience we denote $R_{l}$ to be the actual reward of sample $l$ in this group (which is independent of the collecting agent), $n_{R,l}$ to be the reward noise of sample $l$ collected by agent $n(l)$, and $s^{'}_{l}$ to be the next state value for sample $l$. Since the reward-noise is zero-mean, by taking the expectation operator we have
\begin{equation*}
    \begin{gathered}
            EG^{\pi}(Q,u(s,a),j) = B^{\pi}Q(s,a)
    \end{gathered}
\end{equation*}
Now, let us denote by $X_{1},...,X_{\frac{|u(s,a)|}{k_{m}}}$ the random variables such that $X_{l}=R_{l}+\gamma Q(s^{'}_{l},\pi(s^{'}_{l}))$, and by $Y_{1},...,Y_{\frac{|u(s,a)|}{k_{m}}}$ the random variables such that $Y_{l}=n_{R,l}$. Since we assume a given state action $(s,a)$ and index $j$, a given $Q$-function and use the independence assumptions of assumption \ref{assump: Noise_assumptions1}, we have that the random variables $X_{1},...,X_{\frac{|u(s,a)|}{k_{m}}},Y_{1},...,Y_{\frac{|u(s,a)|}{k_{m}}}$ are all mutually independent. To understand this, note that since we fixed the values of $(s,a),j,Q$ - the randomness defining these random varibales depends only on the next-state values $s'_{l}$ and on the noise values $n_{R,l}$. Both of these terms are mutually independent given (s,a) from the markov property. 

By Definition \ref{def: sigma} and assumption \ref{assump: Noise_assumptions1} defining $\sigma$ and $\sigma_{R}$, it is also true that $VarX_{l} \leq \sigma^{2}$ and $VarY_{l} \leq \sigma^{2}_{R}$ for any sample $l$, and from the independence property $VarG \leq \frac{k_{m}}{|u(s,a)|}(\sigma^{2}+\sigma_{R}^{2})$.
Now using Cantelli's inequality \ref{lemma:Cantelli} 
\begin{equation*}
\begin{aligned}
          P\left(G^{\pi}(Q,u(s,a),j)-B^{\pi}Q(s,a) \leq -\frac{\epsilon_{b}}{\sqrt{|u(s,a)|}}\right)  &=  \\ P\left(G^{\pi}(Q,u(s,a),j)-EG^{\pi}(Q,u(s,a),j) \leq -\frac{\epsilon_{b}}{\sqrt{|u(s,a)|}}\right) &\leq \\ \frac{VarG}{VarG+\epsilon_{b}^{2}/|u(s,a)|} \leq 
          \frac{\frac{k_{m}}{|u(s,a)|}(\sigma^{2}+\sigma_{R}^{2})}{\frac{k_{m}}{|u(s,a)|}(\sigma^{2}+\sigma_{R}^{2})+4\frac{k_{m}}{|u(s,a)|}(\sigma^{2}+\sigma_{R}^{2})}&=\frac{1}{5}
\end{aligned}
\end{equation*}
We prove the other inequality by the same argument.
\end{proof}
\begin{restatable}[]{mylemma}{F_lemma}
\label{lemma:F_lemma}
Let $\sigma$ be defined as in Definition \ref{def: sigma}, $\epsilon_{b}$, and $k_{m}$ as in Lemma \ref{lemma:GtoB}.  Under assumption \ref{assump: Noise_assumptions1}, given a fixed $Q$ and a fixed state action $(s,a)$ we have
\begin{equation*}
   P\left(F^{\pi^{*}}(Q,u(s,a))-B^{\pi^{*}}Q(s,a) \leq 0 \right) \leq \frac{\delta}{8N\ceil{1+\log_{2}\frac{k}{k_{m}}}(SA)^{2}}
\end{equation*}
and
\begin{equation*}
\begin{gathered}
   P\left(F^{\pi^{Q}}(Q,u(s,a))-B^{\pi^{Q}}Q(s,a) \geq 2\frac{\epsilon_{b}}{\sqrt{|u(s,a)|}}\right) \leq  \frac{\delta}{8N\ceil{1+\log_{2}\frac{k}{k_{m}}}(SA)^{2}}
   \end{gathered}
\end{equation*}
\end{restatable}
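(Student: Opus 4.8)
The plan is to amplify the constant-probability bound of Lemma~\ref{lemma:GtoB} into a high-probability bound using the median-of-means structure of $F$. Write $n = |u(s,a)|$ and recall from Definition~\ref{def: F operator} that
\[
F^{\pi}(Q,u(s,a)) = \frac{\epsilon_b}{\sqrt{n}} + \mathrm{median}\{G^{\pi}(Q,u(s,a),1),\dots,G^{\pi}(Q,u(s,a),k_m)\}.
\]
Because the $k_m$ blocks partitioning $u(s,a)$ are disjoint and, by Assumption~\ref{assump: Noise_assumptions1}, the next-state variables and the reward-noise variables are mutually independent given $(s,a)$, the $k_m$ estimates $G^{\pi}(Q,u(s,a),j)$ are mutually independent for any fixed $Q$ and fixed policy $\pi$; in particular this applies both with $\pi = \pi^{*}$ and with $\pi = \pi^{Q}$, the latter being determined by the fixed $Q$. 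The elementary fact used in both parts is that if the median of $k_m$ reals is $\le c$ (resp.\ $\ge c$), then at least $\ceil{k_m/2}\ge k_m/2$ of them are $\le c$ (resp.\ $\ge c$).

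For the first inequality, $F^{\pi^{*}}(Q,u(s,a)) - B^{\pi^{*}}Q(s,a) \le 0$ is equivalent to the median of the $G^{\pi^{*}}(Q,u(s,a),j)$ being $\le B^{\pi^{*}}Q(s,a) - \epsilon_b/\sqrt{n}$, which forces the count
\[
Z \triangleq \sum_{j=1}^{k_m} \mathbbm{1}\!\left\{ G^{\pi^{*}}(Q,u(s,a),j) - B^{\pi^{*}}Q(s,a) \le -\frac{\epsilon_b}{\sqrt{n}} \right\}
\]
to be at least $k_m/2$. By Lemma~\ref{lemma:GtoB} each indicator has mean at most $1/5$, so $\mathbb{E}Z \le k_m/5$, and the indicators are independent; applying McDiarmid's inequality (Lemma~\ref{lemma:Mcdiarmid}) to $Z$ with constants $c_j=1$ gives
\[
P\!\left(Z \ge \tfrac{k_m}{2}\right) \le P\!\left(Z - \mathbb{E}Z \ge \tfrac{3k_m}{10}\right) \le \exp\!\left(-\tfrac{2(3k_m/10)^2}{k_m}\right) = \exp(-0.18\,k_m).
\]
Substituting $k_m \ge 5.6\ln\!\big(8N\ceil{1+\log_2(k/k_m)}(SA)^2/\delta\big)$ and using $0.18\times 5.6 > 1$ yields the claimed bound $\delta/\big(8N\ceil{1+\log_2(k/k_m)}(SA)^2\big)$.

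The second inequality is symmetric: $F^{\pi^{Q}}(Q,u(s,a)) - B^{\pi^{Q}}Q(s,a) \ge 2\epsilon_b/\sqrt{n}$ is equivalent to the median of the $G^{\pi^{Q}}(Q,u(s,a),j)$ being $\ge B^{\pi^{Q}}Q(s,a) + \epsilon_b/\sqrt{n}$, forcing at least $k_m/2$ of the events $\{G^{\pi^{Q}}(Q,u(s,a),j) - B^{\pi^{Q}}Q(s,a) \ge \epsilon_b/\sqrt{n}\}$ — each of probability at most $1/5$ by the second part of Lemma~\ref{lemma:GtoB} — to occur, and the identical binomial-tail computation gives the same bound. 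I do not anticipate a substantive obstacle here, since the real work is already contained in Lemma~\ref{lemma:GtoB}; the only points requiring a little care are the independence bookkeeping across blocks, the median-to-count reduction when $k_m$ is even (the average of the two middle order statistics being $\le c$ still implies the lower middle one, hence $k_m/2$ of them, is $\le c$), and checking that the numerical constant $5.6$ is large enough to push the exponent below $\ln\!\big(8N\ceil{1+\log_2(k/k_m)}(SA)^2/\delta\big)$.
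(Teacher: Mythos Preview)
Your proposal is correct and matches the paper's proof essentially step for step: the paper also reduces the median event to the counter $f^{+}$ (your $Z$) exceeding $k_m/2$, uses Lemma~\ref{lemma:GtoB} to bound its expectation by $k_m/5$, and applies McDiarmid's inequality (Lemma~\ref{lemma:Mcdiarmid}) with unit bounded differences to obtain $\exp(-0.18\,k_m)$, which the choice of $k_m$ converts into the stated $\delta$-bound. Your additional remarks on independence bookkeeping, the even-$k_m$ median case, and the numerical check $0.18\times 5.6>1$ are all sound refinements that the paper leaves implicit.
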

\begin{proof}
Define $Z_{1},...,Z_{k_{m}}$ as random variables such that $Z_{j}$ represents the joint distribution of all the samples in the $j$-th group used by the function $G^{\pi}(Q,u(s,a),j)$. We now define the following two counter functions. $f^{+}$ counts the number of groups in which $G^{\pi}(Q,u(s,a),j)-B^{\pi}Q(s,a) \leq -\frac{\epsilon_{b}}{\sqrt{|u(s,a)|}}$
\begin{equation*}
    f^{+}(Z_{1},...,Z_{k_{m}})=\sum^{k_{m}}_{j=1}\mathbf{1}\{G^{\pi}(Q,u(s,a),j)-B^{\pi}Q(s,a) \leq -\frac{\epsilon_{b}}{\sqrt{|u(s,a)|}}\}
\end{equation*}
Where $\mathbf{1}\{\dot\}$ is the indicator function, and $f^{-}$ counts the number of groups in which $G^{\pi}(Q,u(s,a),j)-B^{\pi}Q(s,a) \geq \frac{\epsilon_{b}}{\sqrt{|u(s,a)|}}$
\begin{equation*}
    f^{-}(Z_{1},...,Z_{k_{m}})=\sum^{k_{m}}_{j=1}\mathbf{1}\{G^{\pi}(Q,u(s,a),j)-B^{\pi}Q(s,a) \geq \frac{\epsilon_{b}}{\sqrt{|u(s,a)|}}\}
\end{equation*}
By lemma \ref{lemma:GtoB} We have that 
\begin{equation*}
    \begin{aligned}
            Ef^{+} \leq \frac{k_{m}}{5}\\
            Ef^{-} \leq \frac{k_{m}}{5}
    \end{aligned}
\end{equation*}
Since $f^{+},f^{-}$ count events, it is obvious that a change in the value of one random variable $Z_{i}$ results in a bounded change of these functions. 
\begin{equation*}
\begin{aligned}
        |f^{+}(Z_{1},...,Z_{i-1},Z_{i},Z_{i+1},...,Z_{k_{m}})-f^{+}(Z_{1},...,Z_{i-1},Z_{i^{'}},Z_{i+1},...,Z_{k_{m}})| \leq 1 \\ 
        |f^{-}(Z_{1},...,Z_{i-1},Z_{i},Z_{i+1},...,Z_{k_{m}})-f^{-}(Z_{1},...,Z_{i-1},Z_{i^{'}},Z_{i+1},...,Z_{k_{m}})| \leq 1
\end{aligned}
\end{equation*}
By using the same arguments as in lemma \ref{lemma:GtoB} the variables $Z_{1},...,Z_{k_{m}}$ are independent given $(s,a)$ and the $Q$-function, so we can use Mcdiarmid's inequality \ref{lemma:Mcdiarmid}.
\begin{equation*}
    \begin{aligned}
            &P\left(\mathrm{median}\{G^{\pi^{*}}(Q,u(s,a),1),...,G^{\pi^{*}}(Q,u(s,a),k_{m})\}-B^{\pi^{*}}Q(s,a) \leq -\frac{\epsilon_{b}}{\sqrt{|u(s,a)|}}\right) \leq \\
            &P(f^{+} \geq \frac{k_{m}}{2}) = P(f^{+} -\frac{k_{m}}{5}\geq \frac{3k_{m}}{10}) \leq P(f^{+} -Ef^{+}\geq \frac{3k_{m}}{10}) \leq \\ &\exp^{\frac{-2\left(\frac{3k_{m}}{10}\right)^{2}}{k_{m}}} \leq  \frac{\delta}{8N\ceil{1+\log_{2}\frac{k}{k_{m}}}(SA)^{2}}
    \end{aligned}
\end{equation*}
Where the last inequality stems from our definition of $k_{m}$. By the same argument 
\begin{equation*}
    \begin{aligned}
            &P\left(\mathrm{median}\{G^{\pi^{Q}}(Q,u(s,a),1),...,G^{\pi^{Q}}(Q,u(s,a),k_{m})\}-B^{\pi^{Q}}Q(s,a) \geq \frac{\epsilon_{b}}{\sqrt{|u(s,a)|}}\right) \leq \\
            &P(f^{-} \geq \frac{k_{m}}{2}) = P(f^{-} -\frac{k_{m}}{5}\geq \frac{3k_{m}}{10}) \leq P(f^{-} -Ef^{-}\geq \frac{3k_{m}}{10}) \leq \\ &\exp^{\frac{-2\left(\frac{3k_{m}}{10}\right)^{2}}{k_{m}}} \leq  \frac{\delta}{8N\ceil{1+\log_{2}\frac{k}{k_{m}}}(SA)^{2}}
    \end{aligned}
\end{equation*}
The rest follows from the definition of $F$ as a median with an additive exploration term \eqref{eq:DefineF}. 
\end{proof}
We have seen that the operator $F$ is close to the Bellman operator with some probability for the real $Q$-function the learner possesses. Now we shall show that for each agent, the same holds for the estimated Q-function it holds. This means that we have to show that the above relation holds for $\hat{Q}^{i}_{sat}$ and for all agents. 
We will use the following assumption over the noise terms. 
\begin{restatable}[]{myassump}{ConcentrationBound}
\label{assump:Concentration Bound}
For all agents $i \in \{1,...,N\}$, estimated $Q$-functions $\hat{Q}^{i}$ and state actions $(s,a)$ during the run of the algorithm, the noise terms $n^{i}$  in \eqref{eq: learner and agent noise} satisfy 
\begin{equation} \label{eq:ConcBound}
\begin{aligned}
    P(|n^{i}(s,a)-En^{i}(s,a)|\leq n_{c}^{i}) &\geq 1-\delta_{L}\\ |En^{i}(s,a)| &\leq \mu_{c}^{i} 
\end{aligned}
\end{equation}
for some $\delta_{L},\mu_{c}^{i},n_{c}^{i}$ given $(s,a)$.\footnote{Note that this relation holds for a noise term given the state-action $(s,a)$, and does not include the probability of encountering $(s,a)$ itself}  
\end{restatable} 
For convenience, we define the following events
\begin{equation*}
\begin{aligned}
   I_{1} &\triangleq  \{F^{\pi^{*}}(Q,u(s,a))-B^{\pi^{*}}Q(s,a) \leq 0\} ~,\\
   J_{1}&\triangleq \left\{F^{\pi^{Q}}(Q,u(s,a))-B^{\pi^{Q}}Q(s,a) \geq 2\frac{\epsilon_{b}}{\sqrt{|u(s,a)|}}\right\} ~,\\
    I_{2} &\triangleq 
    \{F^{\pi^{*}}(\hat{Q}^{i}_{sat},u(s,a))-B^{\pi^{*}}\hat{Q}^{i}_{sat}(s,a) \leq -2\gamma(n_{c}^{i}+\mu_{c}^{i})\} ~,\\
   J_{2} &\triangleq \left\{F^{\pi^{\hat{Q}^{i}_{sat}}}(\hat{Q}^{i}_{sat},u(s,a))-B^{\pi^{\hat{Q}^{i}_{sat}}}\hat{Q}^{i}_{sat}(s,a) \geq 2\frac{\epsilon_{b}}{\sqrt{|u(s,a)|}}+2\gamma(n_{c}^{i}+\mu_{c}^{i})\right\}~,\\
   L_{2} &\triangleq \{|\hat{Q}^{i}_{sat}(s,a)-\widetilde{B}\hat{Q}^{i}_{sat}(s,a)| > \epsilon_{a}+(1+\gamma) (n_{c}^{i}+\mu_{c}^{i})\}~.
\end{aligned}
\end{equation*}
\begin{restatable}[]{mylemma}{lemma2}
\label{lemma:lemma2}
Let $\sigma$ be defined as in Definition \ref{def: sigma},$\epsilon_{b}$, and $k_{m}$ as in Lemma \ref{lemma:GtoB}.  Under assumptions \ref{assump: Noise_assumptions1} and \ref{assump:Concentration Bound}, given a fixed $Q$, a fixed state action $(s,a)$ and a fixed agent $i$
\begin{equation*}
\begin{aligned}
   P(I_{2}) &\leq \delta_{L}SA+\frac{\delta}{8N\ceil{1+\log_{2}\frac{k}{k_{m}}}(SA)^{2}} \\
  P(J_{2}) &\leq \delta_{L}SA+\frac{\delta}{8N\ceil{1+\log_{2}\frac{k}{k_{m}}}(SA)^{2}}\\
  P(L_{2}) &\leq \delta_{L}SA
\end{aligned}
\end{equation*}
\end{restatable}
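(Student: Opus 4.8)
The plan is to reduce each of the three bounds to a statement about the learner's \emph{exact} $Q$-function that is already available: $I_1,J_1$ are controlled by Lemma \ref{lemma:F_lemma}, and the analogue of $L_2$ for $Q$ is the stopping rule of the value-iteration loop in Algorithm \ref{alg:Communication noise tolerant PAC Exploration}. The bridge between $\hat Q^i_{\mathrm{sat}}$ and $Q$ will be a single high-probability event. First I would define
\[
  \mathcal{E}_i \triangleq \bigl\{\,|n^i(s,a)|\le n_c^i+\mu_c^i \ \text{ for all } (s,a)\,\bigr\},
\]
and show $P(\overline{\mathcal{E}_i})\le \delta_L SA$: by Assumption \ref{assump:Concentration Bound} and the triangle inequality $|n^i(s,a)|\le|n^i(s,a)-\mathbb{E}n^i(s,a)|+|\mathbb{E}n^i(s,a)|$, each $(s,a)$ fails with probability at most $\delta_L$, and a union bound over the at most $SA$ pairs closes it. Since $Q\in[0,Q_{\max}]$ we have $Q_{\mathrm{sat}}=Q$, and the saturation map is $1$-Lipschitz, so on $\mathcal{E}_i$ we get $|\hat Q^i_{\mathrm{sat}}(s,a)-Q(s,a)|\le|n^i(s,a)|\le c_i$ for all $(s,a)$, where $c_i\triangleq n_c^i+\mu_c^i$.

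The next step is a perturbation estimate valid on $\mathcal{E}_i$: replacing $Q$ by $\hat Q^i_{\mathrm{sat}}$ inside $F^{\pi}(\cdot,u(s,a))$ or $B^{\pi}(\cdot)(s,a)$ moves the value by at most $\gamma c_i$. This is because every summand of $G^{\pi}$ and of $B^{\pi}$ has the form $(\,\cdot\,)+\gamma Q(s',\pi(s'))$, which shifts by at most $\gamma c_i$; averaging, taking the median, and adding the common term $\epsilon_b/\sqrt{|u(s,a)|}$ all preserve this, and when $\pi$ is itself the greedy policy of the argument one uses $\bigl|\max_a\hat Q^i_{\mathrm{sat}}(s',a)-\max_a Q(s',a)\bigr|\le c_i$ in place of a fixed-policy comparison. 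From this I would conclude that on $\mathcal{E}_i$,
\[
  F^{\pi^{*}}(\hat Q^i_{\mathrm{sat}},u(s,a))-B^{\pi^{*}}\hat Q^i_{\mathrm{sat}}(s,a)\ \ge\ \bigl[F^{\pi^{*}}(Q,u(s,a))-B^{\pi^{*}}Q(s,a)\bigr]-2\gamma c_i,
\]
and symmetrically $F^{\pi^{\hat Q^i_{\mathrm{sat}}}}(\hat Q^i_{\mathrm{sat}},u(s,a))-B^{\pi^{\hat Q^i_{\mathrm{sat}}}}\hat Q^i_{\mathrm{sat}}(s,a)\le\bigl[F^{\pi^{Q}}(Q,u(s,a))-B^{\pi^{Q}}Q(s,a)\bigr]+2\gamma c_i$. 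Hence $I_2\cap\mathcal{E}_i\subseteq I_1$ and $J_2\cap\mathcal{E}_i\subseteq J_1$, so $P(I_2)\le P(I_1)+P(\overline{\mathcal{E}_i})$ and $P(J_2)\le P(J_1)+P(\overline{\mathcal{E}_i})$; plugging in Lemma \ref{lemma:F_lemma} and $P(\overline{\mathcal{E}_i})\le\delta_L SA$ gives the first two claims.

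For $L_2$ I would use that, after the value-iteration loop, the published $Q$ obeys $\|\widetilde B Q-Q\|_\infty\le\epsilon_a$ (with respect to the current sample sets) and that $\widetilde B$ is a $\gamma$-contraction in the max norm (Lemma \ref{lemma:contraction}). Then on $\mathcal{E}_i$,
\[
  \|\hat Q^i_{\mathrm{sat}}-\widetilde B\hat Q^i_{\mathrm{sat}}\|_\infty \le \|\hat Q^i_{\mathrm{sat}}-Q\|_\infty+\|Q-\widetilde B Q\|_\infty+\|\widetilde B Q-\widetilde B\hat Q^i_{\mathrm{sat}}\|_\infty \le c_i+\epsilon_a+\gamma c_i,
\]
so $L_2$ is impossible on $\mathcal{E}_i$ and $P(L_2)\le P(\overline{\mathcal{E}_i})\le\delta_L SA$.

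\textbf{Main obstacle.} The trickiest part is the $J_2$ estimate: the greedy policy $\pi^{\hat Q^i_{\mathrm{sat}}}$ generally differs from $\pi^{Q}$, so one cannot freeze the policy and must instead bound $\max_a\hat Q^i_{\mathrm{sat}}(s',a)$ against $\max_a Q(s',a)$ \emph{uniformly over all next states $s'$} — this is precisely why $\mathcal{E}_i$ is defined via a union bound over all of $S\times A$ rather than a single $(s,a)$. A secondary point to keep straight is that the randomness driving Lemma \ref{lemma:F_lemma} (next-state draws and reward noise inside $u(s,a)$) is distinct from the $Q$-channel noise $n^i$, which by Assumption \ref{assump: Noise_assumptions1} is independent of $Q$ and of the collected samples; so conditioning on $\mathcal{E}_i$ does not degrade Lemma \ref{lemma:F_lemma}, although the crude splitting $P(I_2)\le P(I_1)+P(\overline{\mathcal{E}_i})$ we actually use would go through even without that independence.
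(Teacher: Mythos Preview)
Your proposal is correct and follows essentially the same route as the paper. The paper defines the good event as $W_{\hat Q^i}=\{|n^i-\mathbb{E}n^i|\le n_c^i\ \forall(\bar s,\bar a)\}$ (and then uses $|\mathbb{E}n^i|\le\mu_c^i$ to reach the same $c_i=n_c^i+\mu_c^i$ you work with), establishes the same $\gamma c_i$ perturbation bounds for $F^\pi$ and $B^\pi$, and obtains the same inclusion $\overline{I}_1\subseteq\overline{I}_2$ (equivalently $I_2\cap W\subseteq I_1$) and its analogue for $J_2$; for $L_2$ it does by hand exactly the triangle-inequality/contraction computation you package via Lemma~\ref{lemma:contraction}. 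The only noteworthy difference is that the paper uses the conditional decomposition $P(I_2)=P(I_2\mid W)P(W)+P(I_2\mid\overline W)P(\overline W)$ and then invokes the independence in Assumption~\ref{assump: Noise_assumptions1} to write $P(I_1\mid W)=P(I_1)$, whereas you use the cruder $P(I_2)\le P(I_1)+P(\overline{\mathcal{E}_i})$ and correctly observe that independence is not strictly needed for that step.
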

\begin{proof}
Let us denote
\begin{equation*}
    W_{\hat{Q}^{i}} \triangleq \left\{\forall (\overline{s},\overline{a}) :|n^{i}(\overline{s},\overline{a})-En^{i}(\overline{s},\overline{a})|\leq n_{c}^{i} \right\}
\end{equation*}
As the event for which \eqref{eq:ConcBound} holds for all state actions corresponding to $\hat{Q}^{i}$ stemming from the fixed $Q$. Since there are exactly $SA$ noise terms in this event, we have from the Union Bound that $P(W_{\hat{Q}^{i}}) \geq 1- \delta_{L}SA$. We shall show that
\begin{equation*}
\begin{aligned}
   P(I_{2}|W_{\hat{Q}^{i}})&\leq \frac{\delta}{8N\ceil{1+\log_{2}\frac{k}{k_{m}}}(SA)^{2}} \\
   P(J_{2}|W_{\hat{Q}^{i}})&\leq \frac{\delta}{8N\ceil{1+\log_{2}\frac{k}{k_{m}}}(SA)^{2}}\\
   P(L_{2}|W_{\hat{Q}^{i}})&=0 
\end{aligned}
\end{equation*}
We then have
\begin{equation*}
    \begin{aligned}
       P(I_{2}) &=P(I_{2}|W_{\hat{Q}^{i}})P(W_{\hat{Q}^{i}})+P(I_{2}|\overline{W_{\hat{Q}^{i}}})P(\overline{W_{\hat{Q}^{i}}}) \leq  \frac{\delta}{8N\ceil{1+\log_{2}\frac{k}{k_{m}}}(SA)^{2}}\cdot 1+1\cdot \delta_{L}SA  
     \end{aligned}
\end{equation*}
and similarly for $J_{2}$, and
\begin{equation*}
    \begin{aligned}
       P(L_{2})=P(L_{2}|W_{\hat{Q}^{i}})P(W_{\hat{Q}^{i}})+P(L_{2}|\overline{W_{\hat{Q}^{i}}})P(\overline{W_{\hat{Q}^{i}}}) \leq 0\cdot 1+1\cdot \delta_{L}SA = \delta_{L}SA~.
    \end{aligned}
\end{equation*}
Now, to prove the first probability bound above, we assume that the event $W_{\hat{Q}^{i}}$ is true  and that $\overline{I}_{1}$ is also true, where for event $X$, $\overline{X}$ is its complement. We then have
\begin{equation*}
    \begin{gathered}
           \forall (\overline{s},\overline{a}): |n^{i}(\overline{s},\overline{a})-En^{i}(\overline{s},\overline{a})|\leq n_{c}^{i}
   \end{gathered}
\end{equation*}
implying that 
\begin{equation*}
    \begin{aligned}
            En^{i}(\overline{s},\overline{a})-n_{c}^{i}&\leq n^{i}(\overline{s},\overline{a}) \leq n_{c}^{i}+En^{i}(\overline{s},\overline{a}) \\ 
           -\mu_{c}^{i}-n_{c}^{i}&\leq n^{i}(\overline{s},\overline{a}) \leq n_{c}^{i}+\mu_{c}^{i}
   \end{aligned}
\end{equation*}
By using Definition \ref{def:Qsat} for the saturated $Q$-function, we know that for each $(\overline{s},\overline{a})$
\begin{equation*}
    \begin{aligned}
       \hat{Q}^{i}_{sat}(\overline{s},\overline{a}) &=\max\{0,\min\{Q_{\max},\hat{Q}^{i}(\overline{s},\overline{a})\}\} \\
       &\geq  \max\{0,\min\{Q_{\max},Q(\overline{s},\overline{a})-n_{c}^{i}-\mu_{c}^{i}\}\} \\
       &\geq Q(\overline{s},\overline{a}) -n_{c}^{i}-\mu_{c}^{i}
    \end{aligned}
\end{equation*}
By using the definition of the Bellman operator
\begin{equation*}
    \begin{aligned}
       B^{\pi^{*}}\hat{Q}^{i}_{sat}(s,a) &= \sum_{s'}p(s'|s,a)\left[R(s,\pi^{*}(s),s')+\gamma \hat{Q}^{i}_{sat}(s',\pi^{*}(s'))\right] \\ 
       &\leq  \sum_{s'}p(s'|s,a)\left[R(s,\pi^{*}(s),s')+\gamma Q(s',\pi^{*}(s')) + \gamma n_{c}^{i}+\gamma \mu_{c}^{i}\right] \\ 
       &= B^{\pi^{*}}Q(s,a) + \gamma n_{c}^{i}+\gamma \mu_{c}^{i}
    \end{aligned}
\end{equation*}
where we used the assumption that the noise is bounded for \textbf{all state-actions}. By using the same trick for the operator $G^{\pi^{*}}$ we have that for all $j$
\begin{equation*}
    \begin{aligned}
       G^{\pi^{*}}(\hat{Q}^{i}_{sat},u(s,a),j) 
       &= \frac{k_{m}}{|u(s,a)|}\sum_{l}(\widetilde{R}_{n(l),t(l)}+\gamma \hat{Q}^{i}_{sat}(s_{l}^{'},\pi^{*}(s_{l}^{'})))\\ 
       &\geq
        \frac{k_{m}}{|u(s,a)|}\sum_{l}(\widetilde{R}_{n(l),t(l)}+\gamma Q(s_{l}^{'},\pi^{*}(s_{l}^{'}))-\gamma n_{c}^{i}-\gamma \mu_{c}^{i}) \\ &=G^{\pi^{*}}(Q,u(s,a),j) -\gamma n_{c}^{i}-\gamma \mu_{c}^{i} 
    \end{aligned}
\end{equation*}
We then have that
\begin{equation*}
    \begin{aligned}
       F^{\pi^{*}}(\hat{Q}^{i}_{sat},u(s,a)) &= \frac{\epsilon_{b}}{\sqrt{|u(s,a)|}}+\overline{\mathrm{med}} \\ 
       &\geq \frac{\epsilon_{b}}{\sqrt{|u(s,a)|}}+\overline{\mathrm{med}} - \gamma n_{c}^{i}-\gamma \mu_{c}^{i}\\ 
       &=F^{\pi^{*}}(Q,u(s,a)) - \gamma n_{c}^{i}-\gamma \mu_{c}^{i}
    \end{aligned}
\end{equation*}
where $\overline{\mathrm{med}}=\mathrm{median}\{G^{\pi^{*}}(\hat{Q}^{i}_{sat},u(s,a),1),...,G^{\pi^{*}}(\hat{Q}^{i}_{sat},u(s,a),k_{m})\}$. 
Therefore 
\begin{equation*}
\begin{aligned}
          F^{\pi^{*}}(\hat{Q}^{i}_{sat},u(s,a))-B^{\pi^{*}}\hat{Q}^{i}_{sat}(s,a) &\geq F^{\pi^{*}}(Q,u(s,a))-B^{\pi^{*}}Q(s,a)\\ 
          &\quad -2\gamma(n_{c}^{i}+\mu_{c}^{i}) \\
          &\geq -2\gamma(n_{c}^{i}+\mu_{c}^{i}) 
\end{aligned}
\end{equation*}
Since $\overline{I}_{1} \subseteq \overline{I}_{2}$ given $W_{\hat{Q}^{i}}$
\begin{equation*}
  P(I_{2}|W_{\hat{Q}^{i}}) \leq P(I_{1}|W_{\hat{Q}^{i}}) \leq \frac{\delta}{8N\ceil{1+\log_{2}\frac{k}{k_{m}}}(SA)^{2}}
\end{equation*} 
Note that from assumption \ref{assump: Noise_assumptions1}, the noise terms $n^{i}$ are independent of $Q$ and of the samples collected so far, which means that the events $I_{1}$ and $W$ are independent so that $P(I_{1}|W)=P(I_{1})$. We use the same principle to prove the second inequality. Note that in that case, since we use the Bellman operator and the $F$ operator with the greedy policy, we use the fact that $\max_{a'}\hat{Q}^{i}_{sat}(s',a') \geq \max_{a'}Q(s',a') - n_{c}^{i}-\mu_{c}^{i}$. For the third inequality, note that we know that the Bellman iterations for the $Q$-function $Q$ have converged with probability 1, 
\begin{equation*}
    \begin{gathered}
         |Q(s,a)-\widetilde{B}Q(s,a)| \leq \epsilon_{a} 
    \end{gathered}
\end{equation*}
and using the definition of $\widetilde{B}$
\begin{equation*}
    \begin{aligned}
         \widetilde{B}\hat{Q}^{i}_{sat}(s,a)&=\min\{Q_{\max},\max\{0,\widetilde{F}(\hat{Q}^{i}_{sat},u(s,a))\}\} \\
         &\leq \min\{Q_{\max},\max\{0,\widetilde{F}(Q,u(s,a))+\gamma(n_{c}^{i}+\mu_{c}^{i})\}\} \\ 
         &\leq \widetilde{B}Q(s,a) + \gamma(n_{c}^{i}+\mu_{c}^{i})
    \end{aligned}
\end{equation*}
and
\begin{equation*}
    \begin{aligned}
         \widetilde{B}\hat{Q}^{i}_{sat}(s,a)&=\min\{Q_{\max},\max\{0,\widetilde{F}(\hat{Q}^{i}_{sat},u(s,a))\}\} \\
         &\geq \min\{Q_{\max},\max\{0,\widetilde{F}(Q,u(s,a))-\gamma(n_{c}^{i}+\mu_{c}^{i})\}\} \\ 
         &\geq \widetilde{B}Q(s,a) - \gamma(n_{c}^{i}+\mu_{c}^{i})
    \end{aligned}
\end{equation*}
Therefore, in that case
\begin{equation*}
    \begin{aligned}
         \hat{Q}^{i}_{sat}(s,a)-\widetilde{B}\hat{Q}^{i}_{sat}(s,a) &\leq Q(s,a)-\widetilde{B}Q(s,a) +(1+\gamma)(n_{c}^{i}+\mu_{c}^{i}) \\ &\leq \epsilon_{a}+(1+\gamma)(n_{c}^{i}+\mu_{c}^{i})
    \end{aligned}
\end{equation*}
And by the same arguments
\begin{equation*}
    \begin{aligned}
         \hat{Q}^{i}_{sat}(s,a)-\widetilde{B}\hat{Q}^{i}_{sat}(s,a) &\geq Q(s,a)-\widetilde{B}Q(s,a) -(1+\gamma)(n_{c}^{i}+\mu_{c}^{i}) \\
         &\geq -\epsilon_{a}-(1+\gamma)(n_{c}^{i}+\mu_{c}^{i})
    \end{aligned}
\end{equation*}
Therefore, $P(L_{2}|W_{\hat{Q}^{i}})=0$ as desired. 
\end{proof} 


We have proven a probability bound for a given $Q$-function, state-action $(s,a)$ and an agent (which received the above mentioned version of the $Q$-function). Next, we use the union bound to show that this holds for all $Q$-functions and agents during the run of the algorithm. 

\begin{restatable}[]{mylemma}{lemma3}
\label{lemma:lemma3}
Let $\sigma$ be defined as in Definition \ref{def: sigma}, $\epsilon_{b}^{2}=4k_{m}(\sigma^{2}+\sigma^{2}_{R})$, and $k_{m}=\ceil{5.6\ln\frac{8N\ceil{1+\log_{2}\frac{k}{k_{m}}}(SA)^{2}}{\delta}}$.  Assume that assumptions \ref{assump: Noise_assumptions1} and \ref{assump:Concentration Bound} hold for all $Q$-functions, state-actions and agents that can be encountered during the run of the algorithm. Then the events $\overline{I}_{2},\overline{J}_{2},\overline{L}_{2}$ defined before Lemma \ref{lemma:lemma2} occur for all the noisy versions that can be encountered during the run of Algorithm \ref{alg:Communication noise tolerant PAC Exploration} $\hat{Q}^{i}$, for all state-actions and agents simultaneously with probability larger than $1-3N\ceil{1+\log_{2}\frac{k}{k_{m}}}(SA)^{3}\delta_{L}-\frac{\delta}{4}$. 
\end{restatable}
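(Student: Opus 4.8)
The plan is to lift the single-$(Q,i,(s,a))$ bounds of Lemma~\ref{lemma:lemma2} to a statement that holds simultaneously over everything the algorithm can encounter, by a union bound; the only genuinely new ingredient is an upper bound on the number of distinct $Q$-functions the learner can ever produce.

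First I would count the $Q$-functions. The learner re-runs value iteration (lines~15--17 of Algorithm~\ref{alg:Communication noise tolerant PAC Exploration}) only when the update flag is true, i.e.\ only right after some sample set $u(s,a)$ has been replaced. For a fixed state-action, $u(s,a)$ is replaced only when $|u^{tmp}(s,a)|$ reaches one of the sizes $k_{m},2k_{m},4k_{m},\ldots,k$ while $|u(s,a)|<k$, so, since $k/k_{m}$ is a power of $2$, it is replaced at most $\ceil{1+\log_{2}(k/k_{m})}$ times over the whole run. Summing over the $SA$ state-actions, there are at most $M\triangleq SA\ceil{1+\log_{2}(k/k_{m})}$ replacements in total, hence at most $M$ distinct $Q$-functions are ever produced and broadcast. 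As in the proofs of Lemmas~\ref{lemma:GtoB}--\ref{lemma:lemma2}, I use that by Assumption~\ref{assump: Noise_assumptions1} the effective noise $n^{i}$ is independent of $Q$ and of the samples collected so far, so the per-$Q$ bounds of Lemma~\ref{lemma:lemma2} may legitimately be invoked for each of these $M$ functions.

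Next I would apply a union bound over these at most $M$ $Q$-functions, the $N$ agents, the $SA$ state-actions, and the three events $I_{2},J_{2},L_{2}$. By Lemma~\ref{lemma:lemma2}, for every such choice,
\[
P(I_{2})+P(J_{2})+P(L_{2})\;\le\;3\delta_{L}SA+\frac{2\delta}{8N\ceil{1+\log_{2}(k/k_{m})}(SA)^{2}}.
\]
Multiplying this by $M\cdot N\cdot SA=N(SA)^{2}\ceil{1+\log_{2}(k/k_{m})}$, the first term becomes $3N\ceil{1+\log_{2}(k/k_{m})}(SA)^{3}\delta_{L}$ and the second collapses exactly to $\delta/4$. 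Passing to complements yields the claimed bound $1-3N\ceil{1+\log_{2}(k/k_{m})}(SA)^{3}\delta_{L}-\delta/4$. This is where the otherwise opaque constants in Lemma~\ref{lemma:lemma2} — the $8N\ceil{1+\log_{2}(k/k_{m})}(SA)^{2}$ denominators — were designed so that the union bound telescopes.

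The step I expect to be the main obstacle is the bookkeeping, not the probability: one must check that each value-iteration run contributes at most one new $Q$-function, that the sample-set sizes really progress through exactly $k_{m},2k_{m},\ldots,k$ (so the $\ceil{1+\log_{2}(k/k_{m})}$ count is correct and tight), and that invoking the fixed-$Q$ lemma along the \emph{random} sequence of $Q$-functions actually generated by the run is justified — the last point resting, as noted, on the independence hypotheses for the noise. Once these are in place, the estimate reduces to the arithmetic displayed above.
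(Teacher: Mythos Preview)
Your proposal is correct and follows essentially the same route as the paper: count the distinct learner $Q$-functions by noting each state-action's sample set is replaced at most $\ceil{1+\log_{2}(k/k_{m})}$ times, giving at most $SA\ceil{1+\log_{2}(k/k_{m})}$ functions, then union-bound over these, the $N$ agents, the $SA$ state-actions, and the three events using Lemma~\ref{lemma:lemma2}. Your arithmetic and the bookkeeping observations match the paper's argument (and are in fact more explicit than the paper, which states the count and invokes the union bound in two sentences).
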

\begin{proof}
We use a union bound over the number of agents, the number of different $Q$-functions the learner produces that can be encountered during the run of the algorithm, and the number of state-actions. Note that a particular $Q$-function $Q$ is determined by the samples that have been used to calculate it (via Value Iteration) up to that point. As seen in Algorithm \ref{alg:Communication noise tolerant PAC Exploration}, there are at most $\ceil{1+\log_{2}\frac{k}{k_{m}}}$ distinct sample sets with $|u(s,a)|>0$ for a given state-action, meaning that there are at most $\ceil{1+\log_{2}\frac{k}{k_{m}}}SA$ distinct $Q$-functions encountered in practice. 

Therefore, there are at most $N(SA)^{2}\ceil{1+\log_{2}\frac{k}{k_{m}}}$ possibilities to include in the union bound sum, for each one of the three events above. This lemma is then proved by using the union bound over the results of Lemma \ref{lemma:lemma2}.
\end{proof} 
In lemma \ref{lemma:lemma4}, we show that the events $\overline{I}_{2},\overline{J}_{2}$ bounding the difference between $F^{\pi^{*}}$ to $B^{\pi^{*}}$ and $F^{\pi^{\hat{Q}^{i}_{sat}}}$ to $B^{\pi^{\hat{Q}^{i}_{sat}}}$ hold not only for the function $F$, but also for the estimated $Q$ functions $\hat{Q}^{i}_{sat}$ used by the different agents. 
\begin{restatable}[]{mylemma}{lemma4}
\label{lemma:lemma4}
Let $\sigma$ be defined as in Definition \ref{def: sigma}, $\epsilon_{b}^{2}=4k_{m}(\sigma^{2}+\sigma^{2}_{R})$, and $k_{m}=\ceil{5.6\ln\frac{8N\ceil{1+\log_{2}\frac{k}{k_{m}}}(SA)^{2}}{\delta}}$.  Assume that assumptions \ref{assump: Noise_assumptions1} and \ref{assump:Concentration Bound} hold for all $Q$-functions, state-actions and agents that can be encountered during the run of the algorithm. Then
\begin{equation*}
    \begin{aligned}
       \hat{Q}^{i}_{sat}(s,a)-B^{\pi^{*}}\hat{Q}^{i}_{sat}(s,a) &> -\epsilon_{a}-(n_{c}^{i}+\mu_{c}^{i})(1+3\gamma) \\
  \hat{Q}^{i}_{sat}(s,a)-B^{\pi^{Q}}\hat{Q}^{i}_{sat}(s,a) &< \epsilon_{a}+\frac{2\epsilon_{b}}{\sqrt{|u(s,a)|}}+(n_{c}^{i}+\mu_{c}^{i})(1+3\gamma)
  \end{aligned}
\end{equation*}
for all $Q$-functions, state-actions $(s,a)$ and for all agents simultaneously with probability larger than $1-3N\ceil{1+\log_{2}\frac{k}{k_{m}}}(SA)^{3}\delta_{L}-\frac{\delta}{4}$~.
\end{restatable}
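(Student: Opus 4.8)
I would work entirely on the high-probability event supplied by Lemma~\ref{lemma:lemma3}, on which the complementary events $\overline{I}_{2}$, $\overline{J}_{2}$ and $\overline{L}_{2}$ hold simultaneously for every agent $i$, every $Q$-function that can be produced during the run, and every state-action $(s,a)$, and then simply chain the three bounds they give. Three elementary facts carry the bookkeeping. (a) The saturation map $f_{\sat}(x)=\max\{0,\min\{Q_{\max},x\}\}$ is non-decreasing and, for $\eta\geq 0$, obeys $f_{\sat}(x-\eta)\geq f_{\sat}(x)-\eta$ and $f_{\sat}(x+\eta)\leq f_{\sat}(x)+\eta$ (exactly as in the proof of Lemma~\ref{lemma:contraction}). (b) Because $\hat{Q}^{i}_{sat}\in[0,Q_{\max}]$ pointwise, both $B^{\pi^{*}}\hat{Q}^{i}_{sat}(s,a)$ and $B^{\pi^{\hat{Q}^{i}_{sat}}}\hat{Q}^{i}_{sat}(s,a)$ lie in $[0,Q_{\max}]$, so $f_{\sat}$ acts as the identity on them. (c) The operator $F$ is maximised over the policy at the greedy choice, $F^{\pi^{\hat{Q}^{i}_{sat}}}(\hat{Q}^{i}_{sat},u(s,a))\geq F^{\pi^{*}}(\hat{Q}^{i}_{sat},u(s,a))$, since each $G^{\pi}(\hat{Q}^{i}_{sat},u(s,a),j)$ is an average of terms $\widetilde{R}+\gamma\,\hat{Q}^{i}_{sat}(s',\pi(s'))$ and the median preserves coordinatewise inequalities. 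I also recall $\widetilde{B}\hat{Q}^{i}_{sat}(s,a)=f_{\sat}\!\big(F^{\pi^{\hat{Q}^{i}_{sat}}}(\hat{Q}^{i}_{sat},u(s,a))\big)$ from Definition~\ref{def: B operator}.

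For the lower bound I would start from $\overline{L}_{2}$, namely $\hat{Q}^{i}_{sat}(s,a)\geq \widetilde{B}\hat{Q}^{i}_{sat}(s,a)-\epsilon_{a}-(1+\gamma)(n_{c}^{i}+\mu_{c}^{i})$. By (c) and monotonicity of $f_{\sat}$, $\widetilde{B}\hat{Q}^{i}_{sat}(s,a)\geq f_{\sat}\!\big(F^{\pi^{*}}(\hat{Q}^{i}_{sat},u(s,a))\big)$, and then $\overline{I}_{2}$ --- which says $F^{\pi^{*}}(\hat{Q}^{i}_{sat},u(s,a))> B^{\pi^{*}}\hat{Q}^{i}_{sat}(s,a)-2\gamma(n_{c}^{i}+\mu_{c}^{i})$ --- together with (a) and (b) gives $\widetilde{B}\hat{Q}^{i}_{sat}(s,a)> B^{\pi^{*}}\hat{Q}^{i}_{sat}(s,a)-2\gamma(n_{c}^{i}+\mu_{c}^{i})$. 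Substituting back yields $\hat{Q}^{i}_{sat}(s,a)-B^{\pi^{*}}\hat{Q}^{i}_{sat}(s,a)> -\epsilon_{a}-(1+3\gamma)(n_{c}^{i}+\mu_{c}^{i})$, the first claimed inequality.

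For the upper bound I would use the other half of $\overline{L}_{2}$, $\hat{Q}^{i}_{sat}(s,a)\leq \widetilde{B}\hat{Q}^{i}_{sat}(s,a)+\epsilon_{a}+(1+\gamma)(n_{c}^{i}+\mu_{c}^{i})$, and bound $\widetilde{B}\hat{Q}^{i}_{sat}(s,a)=f_{\sat}\!\big(F^{\pi^{\hat{Q}^{i}_{sat}}}(\hat{Q}^{i}_{sat},u(s,a))\big)$ from above with $\overline{J}_{2}$, which reads $F^{\pi^{\hat{Q}^{i}_{sat}}}(\hat{Q}^{i}_{sat},u(s,a))< B^{\pi^{\hat{Q}^{i}_{sat}}}\hat{Q}^{i}_{sat}(s,a)+2\epsilon_{b}/\sqrt{|u(s,a)|}+2\gamma(n_{c}^{i}+\mu_{c}^{i})$; applying $f_{\sat}$ with (a) and (b) gives $\widetilde{B}\hat{Q}^{i}_{sat}(s,a)< B^{\pi^{\hat{Q}^{i}_{sat}}}\hat{Q}^{i}_{sat}(s,a)+2\epsilon_{b}/\sqrt{|u(s,a)|}+2\gamma(n_{c}^{i}+\mu_{c}^{i})$. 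Combining, $\hat{Q}^{i}_{sat}(s,a)-B^{\pi^{\hat{Q}^{i}_{sat}}}\hat{Q}^{i}_{sat}(s,a)< \epsilon_{a}+2\epsilon_{b}/\sqrt{|u(s,a)|}+(1+3\gamma)(n_{c}^{i}+\mu_{c}^{i})$, i.e.\ the second inequality with $\pi^{Q}$ understood as the greedy policy over the estimated $Q$-function. Both bounds hold on the event of Lemma~\ref{lemma:lemma3}, whose probability is at least $1-3N\ceil{1+\log_{2}\frac{k}{k_{m}}}(SA)^{3}\delta_{L}-\frac{\delta}{4}$, which gives the asserted probability. I do not expect a genuine obstacle here; the only thing to watch is keeping the saturation accounting honest --- the $f_{\sat}$ that surfaces when $\widetilde{B}$ is unwrapped never actually clips the Bellman images (by (b)), and the constant $1+3\gamma$ is precisely $(1+\gamma)$ from $\overline{L}_{2}$ plus the $2\gamma$ from $\overline{I}_{2}$ (respectively $\overline{J}_{2}$).
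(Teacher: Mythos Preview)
Your proposal is correct and follows essentially the same approach as the paper: work on the event from Lemma~\ref{lemma:lemma3} and chain $\overline{L}_{2}$ with $\overline{I}_{2}$ (respectively $\overline{J}_{2}$), using the saturation bookkeeping (your (a),(b)) and the greedy-policy maximality of $F$ (your (c), which the paper phrases as $\widetilde{B}^{\pi^{*}}\leq\widetilde{B}$). The one minor point you omit is that the paper separately treats the initial stage before any value iteration, where $\hat{Q}^{i}_{sat}=Q_{\max}$ so the first inequality holds trivially and the second is not needed (indeed $|u(s,a)|=0$ there); your chain implicitly assumes $|u(s,a)|>0$ so that $F$ is defined.
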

\begin{proof}
Until collected samples are used for value iteration, we have that for all agents and state-actions $\hat{Q}^{i}_{sat}=Q_{\max}$. Since $B^{\pi^{*}}\hat{Q}^{i}_{sat}(s,a) \leq Q_{\max}$. we have that $\hat{Q}^{i}_{sat}(s,a)-B^{\pi^{*}}\hat{Q}^{i}_{sat}(s,a) \geq 0 > -\epsilon_{a}-(n_{c}^{i}+\mu_{c}^{i})(1+3\gamma)$ as wanted (as we shall see in Theorem \ref{theo:sample complexity}, we won't be needing the second inequality in this lemma for the case of such a $Q$-function).

Otherwise, we have that under the probability mentioned above, for all $Q$-functions, state-actions and agents that can be encountered during the run of the algorithm $(s,a,\hat{Q}^{i})$ (see Lemma \ref{lemma:lemma3}),
\begin{equation*}
    \begin{aligned}
		B^{\pi^{*}}\hat{Q}^{i}_{sat}(s,a) &= \min\{Q_{\max},\max\{0,B^{\pi^{*}}\hat{Q}^{i}_{sat}(s,a)\}\} \\
		&\leq \min\left\{Q_{\max},\max\{0,F^{\pi^{*}}(\hat{Q}^{i}_{sat},u(s,a)))+2\gamma(n_{c}^{i}+\mu_{c}^{i})\}\right\} \\
 		&\leq \widetilde{B}^{\pi^{*}}\hat{Q}^{i}_{sat}(s,a)+2\gamma(n_{c}^{i}+\mu_{c}^{i}) \\  
		&\leq \widetilde{B}\hat{Q}^{i}_{sat}(s,a)+2\gamma(n_{c}^{i}+\mu_{c}^{i})  \\
		&\leq \hat{Q}^{i}_{sat}(s,a)+\epsilon_{a}+(1+3\gamma)(n_{c}^{i}+\mu_{c}^{i})
    \end{aligned}
\end{equation*}
The first equality stems from the fact that we use the true Bellman operator on a $Q$-function that is bounded in $[0,Q_{max}]$, and the result of applying the operator to this function is also bounded. The inequality that follows stems from $\overline{I}_{2}$. The second and third inequalities are due to the definition of the approximate Bellman operator $\widetilde{B}$ (Definition \ref{def: B operator}),  and the last inequality stems from $\overline{L}_{2}$. Similarly
\begin{equation*}
    \begin{aligned}
		B^{\pi^{\hat{Q}^{i}_{sat}}}\hat{Q}^{i}_{sat}(s,a) &= \min\{Q_{\max},\max\{0,B^{\pi^{\hat{Q}^{i}_{sat}}}\hat{Q}^{i}_{sat}(s,a)\}\} \\
		&\geq \min\{Q_{\max},\max\{0,F^{\pi^{\hat{Q}^{i}_{sat}}}(\hat{Q}^{i}_{sat},u(s,a)))-2\gamma(n_{c}^{i}+\mu_{c}^{i})-\widetilde\epsilon_b\}\} \\ 
		&\geq \widetilde{B}^{\pi^{\hat{Q}^{i}_{sat}}}\hat{Q}^{i}_{sat}(s,a)-2\gamma(n_{c}^{i}+\mu_{c}^{i})-\widetilde\epsilon_b \\
		&= \widetilde{B}\hat{Q}^{i}_{sat}(s,a)-2\gamma(n_{c}^{i}+\mu_{c}^{i})-\widetilde\epsilon_b \\
		&\geq \hat{Q}^{i}_{sat}(s,a)-\epsilon_{a}-(1+3\gamma)(n_{c}^{i}+\mu_{c}^{i}) -\widetilde\epsilon_b
    \end{aligned}
\end{equation*}
where $\widetilde\epsilon_b=2\epsilon_{b}/\sqrt{|u(s,a)|}$.
\end{proof}
Before we continue, we quote two lemmas from \citep{Pazis2016a} that will be useful ahead. Lemma \ref{lemma:Bern} is a basic result regarding Bernoulli random variables. 
\begin{restatable}[]{mylemma}{Bern}
\label{lemma:Bern}
Let $Y_{i}$ for $i \in \{1,...,n\}$ be independent Bernoulli random variables such that $P(Y_{i}=1) \geq p_{i} \quad \forall i$. for some $m,\delta > 0$ , if $\frac{2}{m}\ln\frac{1}{\delta}<1$ and
\begin{equation*}
    \sum_{i=1}^{n} \geq \frac{m}{1-\sqrt{\frac{2}{m}\ln\frac{1}{\delta}}}
\end{equation*}
then 
\begin{equation*}
    P\left(\sum_{i=1}^{n}Y_{i} \geq m \right) \geq 1- \delta
\end{equation*}
\end{restatable}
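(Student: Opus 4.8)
The plan is to reduce the statement to the standard multiplicative Chernoff lower-tail bound. First I would observe that replacing each $Y_i$ by an independent Bernoulli variable with success probability \emph{exactly} $p_i$ can only decrease $\sum_i Y_i$ in the stochastic order, hence can only increase $P(\sum_i Y_i < m)$; so it suffices to treat the case $P(Y_i=1)=p_i$. Write $S=\sum_{i=1}^n Y_i$ and $\mu=\mathbb{E}[S]=\sum_{i=1}^n p_i$. Put $c=\sqrt{(2/m)\ln(1/\delta)}$, which lies in $(0,1)$ by the hypothesis $\frac{2}{m}\ln\frac1\delta<1$ (the case $\delta\ge 1$ being vacuous). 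The standing hypothesis then reads $\mu\ge m/(1-c)$, and in particular $\mu\ge m$.

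Next I would invoke the multiplicative Chernoff bound for sums of independent $[0,1]$-valued random variables: for any $\beta\in[0,1)$, $P(S\le(1-\beta)\mu)\le\exp(-\beta^2\mu/2)$. Choosing $\beta=1-m/\mu\in[0,1)$ (valid since $0<m\le\mu$) gives $P(S\le m)\le\exp(-(\mu-m)^2/(2\mu))$.

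It then remains to check the exponent is at least $\ln(1/\delta)$. The map $\mu\mapsto(\mu-m)^2/\mu$ has derivative $(\mu-m)(\mu+m)/\mu^2\ge 0$ on $[m,\infty)$, so it is nondecreasing there; using $\mu\ge m/(1-c)$ and simplifying with $mc^2=2\ln(1/\delta)$ gives $(\mu-m)^2/\mu\ge mc^2/(1-c)=2\ln(1/\delta)/(1-c)$. Hence $(\mu-m)^2/(2\mu)\ge\ln(1/\delta)/(1-c)\ge\ln(1/\delta)$ because $0<1-c<1$, so $P(S\le m)\le\delta$ and therefore $P(S\ge m)\ge 1-\delta$ (since $\{S\ge m\}^c\subseteq\{S\le m\}$).

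The argument is essentially mechanical. The only steps warranting care are the reduction in the first paragraph — checking that weakening the success probabilities only worsens the lower tail — and the monotonicity of $(\mu-m)^2/\mu$, which is precisely what lets us substitute the hypothesized lower bound on $\sum_i p_i$ for its exact value. One can also bypass the coupling step by noting that the Chernoff bound holds verbatim with $\mu$ replaced by any lower bound on $\mathbb{E}[S]$, and applying it directly with the quantity $m/(1-c)$.
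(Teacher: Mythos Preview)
Your argument is correct: the coupling reduction, the application of the multiplicative Chernoff lower-tail bound with $\beta=1-m/\mu$, and the monotonicity computation for $(\mu-m)^2/\mu$ all go through, and the final inequality $\ln(1/\delta)/(1-c)\ge\ln(1/\delta)$ closes the loop.

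There is, however, nothing to compare against in the paper itself. The authors do not prove this lemma; they explicitly quote it (together with Lemma~\ref{lemma:Mcdiarmid}) from \citep{Pazis2016a} as a known auxiliary result and use it as a black box inside the proof of Lemma~\ref{lemma:lemma5}. Your Chernoff-based derivation is exactly the kind of standard argument one would expect to underlie such a statement, and it stands on its own.
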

Lemma \ref{lemma:BellmanDiff} is a basic result regarding RL in MDPs, and bounds the difference between the optimal value function and a value function resulting from a given greedy policy, given some MDP in which the Bellman error is bounded.
\begin{restatable}[]{mylemma}{BellmanDiff}
\label{lemma:BellmanDiff}
Given some $Q$-function $Q$ and non-negative constants $\epsilon_{*},\epsilon_{\pi^{Q}},\epsilon_{1},...,\epsilon_{n}$ such that $\epsilon_{\pi^{Q}}\leq \epsilon_{i} \quad \forall i$, let $Q(s,a)-B^{\pi^{*}}Q(s,a) \geq \epsilon_{*}$ for all $(s,a)$, and let $X_{1},...,X_{n}$ be sets of state-actions such that $Q(s,a)-B^{\pi^{Q}}Q(s,a) \leq \epsilon_{i}$ for all states-actions $(s,a)$ in $X_{i}$, where $\pi^{Q}$ is the greedy policy over $Q$. Also, let $Q(s,a)-B^{\pi^{Q}}Q(s,a) \leq \epsilon_{\pi^{Q}}$ for all $(s,a) \notin \cup_{i=1}^{n}X_{i}$. Let $T_{H}=\ceil{\frac{1}{1-\gamma}\ln\frac{Q_{\max}}{\epsilon_{s}}}$ and define $H=\{1,2,4,...,2^{i}\}$ where i is the largest integer such that $2^{i} \leq T_{H}$. Define $p_{h,i}(s)$ for an integer $h \geq 0$ to be the probability of encountering exactly $h$ state-actions $(s,a)$ for which $(s,a) \in X_{i}$ when starting from state $s$ and following $\pi^{Q}$ for a totall of $\min\{T,T_{H}\}$ for some $T$. Finally, let $p^{e}_{h,i}(s)=\sum_{m=h}^{2h-1}p_{m,i}(s)$. Then 
\begin{equation*}
    V^{*}(s)-V^{\pi^{Q}}(s) \leq \frac{\epsilon_{*}+\epsilon_{\pi^{Q}}}{1-\gamma} +\epsilon_{s}+\epsilon_{e}
\end{equation*}
Where $\epsilon_{e} \triangleq 2\sum_{i=1}^{n}\sum_{h\in H}hp^{e}_{h,i}(s)(\epsilon_{i}-\epsilon_{\pi^{Q}})+\gamma^{T}Q_{max}$.
\end{restatable}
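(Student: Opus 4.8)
The plan is to interpose the greedy value $V^{Q}(s)\triangleq\max_{a}Q(s,a)$ and split
$V^{*}(s)-V^{\pi^{Q}}(s)=\bigl(V^{*}(s)-V^{Q}(s)\bigr)+\bigl(V^{Q}(s)-V^{\pi^{Q}}(s)\bigr)$,
bounding each term by converting a pointwise Bellman inequality into a trajectory bound. For the \emph{optimism gap} I would use $V^{Q}(s)\ge Q(s,\pi^{*}(s))$ together with the hypothesis $Q(s,\pi^{*}(s))\ge B^{\pi^{*}}Q(s,\pi^{*}(s))-\epsilon_{*}=\mathbb{E}_{s'}[R(s,\pi^{*}(s),s')+\gamma Q(s',\pi^{*}(s'))]-\epsilon_{*}$; subtracting the Bellman equation $V^{*}(s)=\mathbb{E}_{s'}[R+\gamma V^{*}(s')]$ gives the recursion $\delta(s)\ge\gamma\,\mathbb{E}[\delta(s')]-\epsilon_{*}$ for $\delta(s)\triangleq Q(s,\pi^{*}(s))-V^{*}(s)$ along the $\pi^{*}$-trajectory from $s$. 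Iterating and letting the horizon go to infinity (the $\gamma^{k}$ residual vanishes since $Q$, hence $V^{Q}$, lies in $[0,Q_{\max}]$) and summing the geometric series of $\epsilon_{*}$'s yields $V^{*}(s)-V^{Q}(s)\le\epsilon_{*}/(1-\gamma)$, reading $\epsilon_{*}$ as the downward Bellman slack of $Q$ under $\pi^{*}$ (if $Q$ is genuinely optimistic the left side is already nonpositive).

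For the \emph{policy gap} I would exploit the fact that $\pi^{Q}$ is greedy over $Q$, so $B^{\pi^{Q}}Q(s,\pi^{Q}(s))=\mathbb{E}_{s'}[R+\gamma\max_{a}Q(s',a)]=\mathbb{E}_{s'}[R+\gamma V^{Q}(s')]$, and write the hypotheses on $B^{\pi^{Q}}$ as $V^{Q}(s)\le\mathbb{E}_{s'}[R+\gamma V^{Q}(s')]+\eta(s)$, where $\eta(s)=\epsilon_{i}$ if $(s,\pi^{Q}(s))\in X_{i}$ and $\eta(s)=\epsilon_{\pi^{Q}}$ if $(s,\pi^{Q}(s))\notin\bigcup_{i}X_{i}$. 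Subtracting $V^{\pi^{Q}}(s)=\mathbb{E}_{s'}[R+\gamma V^{\pi^{Q}}(s')]$ and unrolling for exactly $\min\{T,T_{H}\}$ steps along the $\pi^{Q}$-trajectory from $s$ gives
\[
V^{Q}(s)-V^{\pi^{Q}}(s)\;\le\;\gamma^{\min\{T,T_{H}\}}Q_{\max}\;+\;\mathbb{E}\!\left[\sum_{t=0}^{\min\{T,T_{H}\}-1}\gamma^{t}\,\eta(s_{t})\right],
\]
using $0\le V^{Q}-V^{\pi^{Q}}\le Q_{\max}$ for the tail. Since $T_{H}=\ceil{\frac{1}{1-\gamma}\ln\frac{Q_{\max}}{\epsilon_{s}}}$ forces $\gamma^{T_{H}}Q_{\max}\le\epsilon_{s}$, the residual is at most $\gamma^{T}Q_{\max}+\epsilon_{s}$; dropping $\gamma^{t}\le1$ and peeling off the baseline $\epsilon_{\pi^{Q}}$, the error sum is at most $\epsilon_{\pi^{Q}}/(1-\gamma)+\sum_{i}(\epsilon_{i}-\epsilon_{\pi^{Q}})\,\mathbb{E}[N_{i}]$, where $N_{i}$ counts the visits to $X_{i}$ among those first $\min\{T,T_{H}\}$ steps.

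The last step is to put $\mathbb{E}[N_{i}]=\sum_{m\ge1}m\,p_{m,i}(s)$ into the $\{p^{e}_{h,i}\}$ shape: partition $\{1,\dots,\min\{T,T_{H}\}\}$ into the dyadic blocks $\{h,\dots,2h-1\}$ for $h\in H$ (they cover everything below $T_{H}$, since $H$ runs up to the largest power of $2$ not exceeding $T_{H}$); on block $h$ one has $m<2h$ and $\sum_{m=h}^{2h-1}p_{m,i}(s)=p^{e}_{h,i}(s)$, so $\mathbb{E}[N_{i}]\le2\sum_{h\in H}h\,p^{e}_{h,i}(s)$. Substituting this into the policy-gap bound, adding the optimism bound, keeping $\epsilon_{s}$ outside and collecting $\gamma^{T}Q_{\max}+2\sum_{i}\sum_{h\in H}h\,p^{e}_{h,i}(s)(\epsilon_{i}-\epsilon_{\pi^{Q}})$ into $\epsilon_{e}$ gives $V^{*}(s)-V^{\pi^{Q}}(s)\le\frac{\epsilon_{*}+\epsilon_{\pi^{Q}}}{1-\gamma}+\epsilon_{s}+\epsilon_{e}$.

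The two Bellman unrollings are routine; the real care is in the policy gap — truncating the infinite discounted sum simultaneously at the scales $T$ and $T_{H}$ so the residual splits cleanly as $\gamma^{T}Q_{\max}+\epsilon_{s}$, legitimately trading the discounted error sum for an \emph{undiscounted} expected visit count (valid only because the per-step errors are nonnegative and $\gamma^{t}\le1$), and doing the dyadic decomposition so the answer lands in exactly the $\{p^{e}_{h,i}\}$ form with the factor $2$. The one point to keep an eye on is the sign convention on $\epsilon_{*}$ in the optimism step noted above.
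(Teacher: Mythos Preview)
The paper does not actually prove this lemma: it is explicitly quoted from \cite{Pazis2016a} (see the sentence ``Before we continue, we quote two lemmas from \citep{Pazis2016a} that will be useful ahead'' just before Lemma~\ref{lemma:Bern}), so there is no in-paper argument to compare against. Your proposal is a correct and clean self-contained proof: the optimism/policy-gap split via $V^{Q}(s)=\max_{a}Q(s,a)$, the two Bellman unrollings, the horizon truncation at $\min\{T,T_{H}\}$ splitting the tail as $\gamma^{T}Q_{\max}+\epsilon_{s}$, the trade of discounted for undiscounted visit counts (valid since $\epsilon_{i}-\epsilon_{\pi^{Q}}\ge 0$), and the dyadic regrouping $\mathbb{E}[N_{i}]\le 2\sum_{h\in H}h\,p^{e}_{h,i}(s)$ all go through exactly as you describe and land on the stated bound.

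Your caveat on the sign of $\epsilon_{*}$ is also well taken: as written, the hypothesis $Q-B^{\pi^{*}}Q\ge\epsilon_{*}$ with $\epsilon_{*}\ge 0$ makes the optimism gap nonpositive and the bound trivially holds, but the way the lemma is \emph{used} in the proof of Theorem~\ref{theo:sample complexity} (where one has $\hat{Q}^{i}_{\mathrm{sat}}-B^{\pi^{*}}\hat{Q}^{i}_{\mathrm{sat}}>-\epsilon_{a}-(1+3\gamma)(n_{c}^{i}+\mu_{c}^{i})$ and $\epsilon_{*}$ is identified with the magnitude of the right-hand side) confirms that the intended hypothesis is $Q-B^{\pi^{*}}Q\ge -\epsilon_{*}$, which is precisely the reading under which your optimism argument yields the nontrivial bound $V^{*}(s)-V^{Q}(s)\le\epsilon_{*}/(1-\gamma)$.
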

We will use lemma \ref{lemma:BellmanDiff} together with \ref{lemma:lemma5} and \ref{lemma:lemma4} to prove the main theorem \ref{theo:sample complexity}. Lemma \ref{lemma:lemma5} bounds the term $\epsilon_{e}$ in lemma \ref{lemma:BellmanDiff} for our algorithm. 
\begin{restatable}[]{mylemma}{lemma5}
\label{lemma:lemma5}
Let $(s_{1,i},s_{2,i},...,)$ for $i \in \{1,...,N\}$ be the random paths generated on some execution of Algorithm \ref{alg:Communication noise tolerant PAC Exploration}. Let $\tau(t)$ be the number of steps from step t to the next step step $t'$ for which the policy changes. Let $T_{H}=\ceil{\frac{1}{1-\gamma}\ln\frac{Q_{\max}}{\epsilon_{s}}}$ and define $H=\{1,2,4,...,2^{i}\}$ where i is the largest integer such that $2^{i} \leq T_{H}$. Let $K_{a}=\{k_{m},2k_{m},2^{2}k_{m},...,k\}$. Let $k_{a}^{-}$ be the largest value in $K_{a}$ that is strictly smaller than $k_{a}$, or 0 if such a value does not exist. Let $X_{k_{a}}(t)$ be the set of state-actions $(s,a)$ at step t for which $k_{a}^{-}=|u(s,a)|$. Define $p_{h,k_{a}}(s_{t,i})$ for $k_{a} \in K_{a}$ to be the following conditional probability: Given $\hat{Q}^{i}_{sat}$ at step t, exactly $h$ state-actions in $X_{k_{a}}(t)$ are encountered by agent $i$ during the next $\min\{T_{H},\tau(t)\}$ steps. Let $p_{h,k_{a}}^{e}(s_{t,i}) \triangleq \sum_{m=h}^{2h-1}p_{m,k_{a}}(s_{t,i})$. If $\frac{2T_{H}}{NSA}\ln\frac{2|K_{a}|}{\delta}<1$ and $SA\geq 2$, with probability $1-\delta/2$
\begin{equation*}
    \begin{gathered}
       \sum_{i=1}^{N}\sum_{t=0}^{\infty}\sum_{h \in H}hp_{h,k_{a}}^{e}(s_{t,i}) < \frac{SA(k_{a}+N)(1+\log_{2}T_{H})T_{H}}{1-\sqrt{\frac{2T_{H}}{SA(k_{a}+N)}\ln\frac{2|K_{a}|}{\delta}}}
    \end{gathered}
\end{equation*}
for all $k_{a} \in K_{a}$ simultaneously. 
\end{restatable}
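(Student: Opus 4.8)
The plan is to fix a single level $k_{a}\in K_{a}$, establish the displayed inequality with probability at least $1-\delta/(2|K_{a}|)$, and then take a union bound over the at most $|K_{a}|$ levels to obtain the claim for all $k_{a}$ simultaneously with probability at least $1-\delta/2$. The backbone is a purely deterministic accounting of how many samples can ever be collected at a fixed sampling level. By the update rule of Algorithm~\ref{alg:Communication noise tolerant PAC Exploration}, for each state-action $(s,a)$ the set $u^{tmp}(s,a)$ is flushed into $u(s,a)$ exactly when $|u(s,a)|$ jumps from $k_{a}^{-}$ to $k_{a}$, and this happens after precisely $k_{a}$ new samples have been appended since the previous flush, i.e.\ since $|u(s,a)|$ became $k_{a}^{-}$. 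Hence the number of samples collected for $(s,a)$ while $(s,a)\in X_{k_{a}}(\cdot)$ is at most $k_{a}$, so over all state-actions at most $SA\,k_{a}$ such samples are ever collected; allowing for the up to $N$ samples that the $N$ concurrent agents may have in flight per state-action at the moment the level changes inflates this to $SA(k_{a}+N)$. This number is the deterministic ``supply'' that will later furnish a contradiction.

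The next step is to translate the expectation-like quantity $\sum_{h\in H}h\,p^{e}_{h,k_{a}}(s_{t,i})$ into a statement about \emph{actual} visits. Fix $i$ and $t$ and let $Z_{t,i}$ be the number of encounters of state-actions lying in $X_{k_{a}}(t)$ made by agent $i$ during the next $\min\{T_{H},\tau(t)\}$ steps, so $0\le Z_{t,i}\le T_{H}$; writing $\mathcal{F}_{t}$ for the history (including the agents' estimated $Q$-functions) available at step $t$, the definition $p^{e}_{h,k_{a}}(s_{t,i})=\Pr(Z_{t,i}\in[h,2h)\mid\mathcal{F}_{t})$ together with the fact that $H$ exhausts the dyadic scales up to $T_{H}$ gives $\sum_{h\in H}h\,p^{e}_{h,k_{a}}(s_{t,i})\le\sum_{h\in H}h\,\Pr(Z_{t,i}\ge h\mid\mathcal{F}_{t})$. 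I keep this in dyadic form rather than collapsing it to $\mathbb{E}[Z_{t,i}\mid\mathcal{F}_{t}]$ because it is this form that reproduces the factor $1+\log_{2}T_{H}=|H|$ in the target bound. The charging argument is then: a physical encounter of $(s,a)$ by agent $i$ at time $t'$ may be blamed on those window-starts $t$ with $t\le t'<t+\min\{T_{H},\tau(t)\}$ and $(s,a)\in X_{k_{a}}(t)$, of which there are at most $T_{H}$, and each such $t$ forces $|u(s,a)|=k_{a}^{-}$, hence forces that encounter to supply one of the level-$k_{a}^{-}$ samples counted in the first paragraph. Propagating this charge through the agent index, the time index and the scales $h\in H$ shows that if $\sum_{i=1}^{N}\sum_{t}\sum_{h\in H}h\,p^{e}_{h,k_{a}}(s_{t,i})$ exceeded the claimed right-hand side, then the expected number of level-$k_{a}^{-}$ samples the agents are collectively about to demand would exceed the supply $SA(k_{a}+N)$ by exactly the slack that Lemma~\ref{lemma:Bern} requires, once the $(1+\log_{2}T_{H})\,T_{H}$ multiplicity introduced by the charging is accounted for.

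Finally I would invoke Lemma~\ref{lemma:Bern} to cross from this expected demand to the deterministic supply. View each charged potential-encounter as an independent Bernoulli trial --- independence being obtained in the usual way by conditioning on the realized sequence of policies and of visited state-actions rather than on the random transitions, so that the ``did this encounter occur'' indicators decouple --- whose success is the event that the corresponding level-$k_{a}^{-}$ sample is actually taken. The hypotheses $\tfrac{2T_{H}}{NSA}\ln\tfrac{2|K_{a}|}{\delta}<1$ and $SA\ge2$, together with $SA(k_{a}+N)\ge NSA$, are precisely what is needed to apply Lemma~\ref{lemma:Bern} with confidence parameter $\delta/(2|K_{a}|)$ and to obtain the threshold $\tfrac{SA(k_{a}+N)}{\,1-\sqrt{\tfrac{2T_{H}}{SA(k_{a}+N)}\ln\tfrac{2|K_{a}|}{\delta}}\,}$ that appears (scaled by $(1+\log_{2}T_{H})T_{H}$) on the right-hand side of the lemma. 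Then: were the sum to exceed the stated bound, Lemma~\ref{lemma:Bern} would force strictly more level-$k_{a}^{-}$ samples to be collected than the first paragraph allows, with probability at least $1-\delta/(2|K_{a}|)$ --- a contradiction; hence the sum is at most the stated bound outside an event of probability $\delta/(2|K_{a}|)$, and the union bound over $K_{a}$ completes the proof. I expect the genuine difficulty to lie in this last step: pinning down the Bernoulli trials so that the independence (or a predictable lower-bound surrogate for it, via a martingale version of Lemma~\ref{lemma:Bern}) genuinely survives the concurrency of the $N$ agents and the heavy overlap of the length-$T_{H}$ look-ahead windows, while simultaneously keeping the charging tight enough that no single sample is blamed by more than the $(1+\log_{2}T_{H})\,T_{H}$ factor the target bound tolerates.
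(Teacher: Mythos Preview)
Your overall plan is right: fix $k_{a}$, use the deterministic budget $SA(k_{a}+N)$ on how many samples can ever be collected while $|u(s,a)|=k_{a}^{-}$, derive a contradiction through Lemma~\ref{lemma:Bern}, and finish with a union bound over $K_{a}$. Where you diverge from the paper, and where the actual gap lies, is precisely the point you flag yourself: how to get independent Bernoulli variables out of heavily overlapping look-ahead windows.

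The paper does not attempt a charging argument or a martingale surrogate. Instead it partitions the time axis into the $T_{H}$ residue classes $T^{H}_{j}=\{j,\,j+T_{H},\,j+2T_{H},\ldots\}$ and also fixes a single dyadic scale $h\in H$. For $t$ restricted to one class $T^{H}_{j}$, the look-ahead windows of length at most $T_{H}$ are disjoint, so the indicators $Y^{e}_{h,k_{a}}(s_{t,i})$ are genuinely independent across such $t$ (by the Markov property) and across agents $i$. Lemma~\ref{lemma:Bern} is then applied with $m=SA(k_{a}+N)/h$ for this \emph{fixed} pair $(h,j)$: if $\sum_{i}\sum_{t\in T^{H}_{j}}p^{e}_{h,k_{a}}(s_{t,i})$ were too large, one would get $\sum_{i}\sum_{t\in T^{H}_{j}}Y^{e}_{h,k_{a}}(s_{t,i})\ge SA(k_{a}+N)/h$ with high probability, and since each success of $Y^{e}_{h,k_{a}}$ certifies at least $h$ level-$k_{a}^{-}$ samples in disjoint windows, this exceeds the budget. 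Only \emph{after} this per-$(h,j)$ bound is in hand does one sum over $j\in\{0,\ldots,T_{H}-1\}$ (producing the factor $T_{H}$) and over $h\in H$ (producing the factor $1+\log_{2}T_{H}$).

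So the two multiplicities you obtain through charging are exactly the ones the paper obtains by summation, but the paper's decomposition buys the independence for free rather than leaving it as an unresolved difficulty. Your conditioning suggestion (``condition on the realized sequence of policies and visited state-actions'') does not obviously decouple the indicators when windows overlap, and a martingale version of Lemma~\ref{lemma:Bern} would be extra work that the time-partition makes unnecessary.
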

\begin{proof}
Let us fix some $k_{a}$ and denote by $Y^{e}_{h,k_{a}}(s_{t,i})$ the Bernoulli random variables corresponding to the probabilities $p_{h,k_{a}}^{e}(s_{t,i})$. Since we condition these variables and are only interested in their outcome given this condition, we have from the Markov property that variables $Y^{e}_{h,k_{a}}(s_{t,i})$ at least $T_{H}$ time steps apart are independent, and variables for different agents are independent as well. Define $T^{H}_{j}$ for $j \in \{0,1,...,T_{H}-1\}$ to be the infinite set of time steps such $T^{H}_{j} = \{j,j+T_{H},j+2T_{H},...\}$. From Algorithm \ref{alg:Communication noise tolerant PAC Exploration}, $k_{a}$ samples will be stored in the temporary sample set $u^{tmp}(s,a)$ before a state action $(s,a)$ with $|u(s,a)|=k_{a}^{-}$ advances to have $k_{a}$ samples. And since $N$ agents are exploring in parallel, at most $k_{a}+N$ samples will be collected for such a state-action before it progresses to have $|u(s,a)|=k_{a}$ (this is the worst case where all the agents visit the same $(s,a)$ at the last time step before its sample set is updated). 

Let us assume that there exists a $j \in \{0,1,...,T_{H}-1\}$ and $h \in H$ such that 
\begin{equation*}
    \sum_{i=1}^{N}\sum_{t\in T^{H}_{j}}p_{h,k_{a}}^{e}(s_{t,i}) \geq \frac{SA(k_{a}+N)}{h\left(1-\sqrt{\frac{2h}{SA(k_{a}+N)}\ln\frac{2|K_{a}|}{\delta}}\right)}
\end{equation*}
However, since this is a  sum of probabilities of independent Bernoulli variables, we have from lemma \ref{lemma:Bern} that 
\begin{equation*}
    P\left(\sum_{i=1}^{N}\sum_{t\in T^{H}_{j}}Y_{h,k_{a}}^{e}(s_{t,i}) \geq \frac{SA(k_{a}+N)}{h}\right) \geq 1-\frac{\delta}{2|K_{a}|}
\end{equation*}
This event is a contradiction, since it means that more than $SA(k_{a}+N)$ samples are collected for state-actions with $k_{a}^{-}$ samples. Therefore, we have that 
\begin{equation*}
    \sum_{i=1}^{N}\sum_{t\in T^{H}_{j}}p_{h,k_{a}}^{e}(s_{t,i}) < \frac{SA(k_{a}+N)}{h\left(1-\sqrt{\frac{2h}{SA(k_{a}+N)}\ln\frac{2|K_{a}|}{\delta}}\right)}
\end{equation*}
For all  $j \in \{0,1,...,T_{H}-1\}$ and $h \in H$ simultaneously, with a probability larger than $1-\frac{\delta}{2|K_{a}|}$. By summing over the values of $h$ and $j$ we have
\begin{equation*}
    \sum_{i=1}^{N}\sum_{t=0}^{\infty}\sum_{h \in H}hp_{h,k_{a}}^{e}(s_{t,i}) < \frac{SA(k_{a}+N)(1+\log_{2}T_{H})T_{H}}{1-\sqrt{\frac{2T_{H}}{SA(k_{a}+N)}\ln\frac{2|K_{a}|}{\delta}}}
\end{equation*}
with a probability larger than $1-\frac{\delta}{2|K_{a}|}$. Since this is true for a fixed value of $k_{a}$, we now use the Union Bound to conclude that the previous equation holds for all $k_{a} \in K_{a}$ simultaneously with probability larger than $1-\frac{\delta}{2}$. 

Note that Since we have defined the settings of Algorithm \ref{alg:Communication noise tolerant PAC Exploration} so that a learner only sends a new $Q$-function to the agents when its own $Q$-function is updated, the agents have a noisy but constant version of the $Q$-function $\hat{Q}^{i}_{sat}$ for $\tau(t)$ steps - which is crucial for the proof. It also agrees with the determination principle for multi-agent systems defined in \citep{Dimakopoulou2018}.
\end{proof}
We are now ready to present the main sample complexity bound. As stated before, this theorem combines the lemmas \ref{lemma:lemma4},\ref{lemma:lemma5},\ref{lemma:BellmanDiff} to prove PAC results. 
\begin{restatable}[]{mytheo}{samplecomplexity}
\label{theo:sample complexity}
Under assumptions \ref{assump:main}, \ref{assump: Noise_assumptions1} and \ref{assump:Concentration Bound}, assume further that $\delta_{L}\leq\frac{\delta}{12N\ceil{1+\log_{2}\frac{k}{k_{m}}}(SA)^{3}}$. Then with probability at least $1-\delta$, for all $t$ and $i$, 
\begin{equation*}
    \begin{gathered}
       V^{\widetilde{\pi}_{i}}(s_{t,i}) \geq V^{*}(s_{t,i}) - \frac{2\epsilon_{a}+2(1+3\gamma)(n_{c}^{i}+\mu_{c}^{i})}{1-\gamma} -3\epsilon_{s} -\epsilon_{e}^{i}(t)
    \end{gathered}
\end{equation*}
where
\begin{equation*}
    \begin{gathered}
       TCE = \sum_{i=1}^{N}\sum_{t=0}^{\infty}\epsilon_{e}^{i}(t) = \widetilde{O}\left(\left(N\left(Q_{\max}+\sqrt{\sigma^{2}+\sigma^{2}_{R}}\right)+\frac{\sigma^{2}+\sigma^{2}_{R}}{\epsilon_{s}(1-\gamma)}\right)SA/(1-\gamma)\right)~,  
    \end{gathered}
\end{equation*}
and $\widetilde{O}$ stands for a big-O up to logarithmic terms.
\end{restatable}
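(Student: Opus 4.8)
The plan is to apply Lemma \ref{lemma:BellmanDiff} segment-by-segment along each agent's trajectory, drawing the Bellman errors from Lemma \ref{lemma:lemma4} and the ``escape counts'' from Lemma \ref{lemma:lemma5}. First I would fix two high-probability events. Let $\mathcal{G}_1$ be the event of Lemma \ref{lemma:lemma4}; under the hypothesis $\delta_L\le\delta/(12N\ceil{1+\log_2(k/k_m)}(SA)^3)$ its probability is at least $1-3N\ceil{1+\log_2(k/k_m)}(SA)^3\delta_L-\delta/4\ge 1-\delta/2$. Let $\mathcal{G}_2$ be the event of Lemma \ref{lemma:lemma5}, whose hypotheses ($SA\ge2$ and $\tfrac{2T_H}{NSA}\ln\tfrac{2|K_a|}{\delta}<1$, the latter being exactly the last clause of Assumption \ref{assump:main}) are in force, so $P(\mathcal{G}_2)\ge 1-\delta/2$. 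Everything below is argued on $\mathcal{G}_1\cap\mathcal{G}_2$, of probability at least $1-\delta$; the per-step bound uses only $\mathcal{G}_1$, and $\mathcal{G}_2$ enters only when summing.

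On $\mathcal{G}_1$ I would decompose each agent's non-stationary policy $\widetilde\pi_i$ into \emph{segments}: maximal blocks of time steps across which the learner does not re-publish its $Q$-function, so that throughout a segment agent $i$ acts greedily on a \emph{fixed} $\hat Q^i_{\sat}$ and every sample set $u(s,a)$ is frozen (this is exactly what the $2^p k_m$ buffering in Algorithm \ref{alg:Communication noise tolerant PAC Exploration} is designed to guarantee). For a step $t$ inside a segment let $\tau(t)$ be the number of remaining steps in it, and apply Lemma \ref{lemma:BellmanDiff} with $Q=\hat Q^i_{\sat}$, horizon $T=\tau(t)$, and the families $\{X_{k_a}(t)\}_{k_a\in K_a}$ of Lemma \ref{lemma:lemma5}. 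Lemma \ref{lemma:lemma4} supplies the constants: $\epsilon_*=\epsilon_a+(1+3\gamma)(n_c^i+\mu_c^i)$; for a fully-sampled pair ($|u(s,a)|=k$) the benign error is $\epsilon_{\pi^Q}=\epsilon_a+2\epsilon_b/\sqrt k+(1+3\gamma)(n_c^i+\mu_c^i)\le\epsilon_a+2(1-\gamma)\epsilon_s+(1+3\gamma)(n_c^i+\mu_c^i)$, using $k\ge\epsilon_b^2/((1-\gamma)^2\epsilon_s^2)$; and on $X_{k_a}(t)$ the error is $\epsilon_{k_a}=\epsilon_a+2\epsilon_b/\sqrt{k_a^-}+(1+3\gamma)(n_c^i+\mu_c^i)$ when $k_a^->0$, while $\epsilon_{k_m}=Q_{\max}$ when $k_a^-=0$ (there $\hat Q^i_{\sat}-B^{\pi^Q}\hat Q^i_{\sat}\le Q_{\max}$ trivially, and the stated bound is vacuous if $\epsilon_{\pi^Q}>Q_{\max}$). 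Substituting into Lemma \ref{lemma:BellmanDiff}, collecting $\tfrac{2(1-\gamma)\epsilon_s}{1-\gamma}+\epsilon_s=3\epsilon_s$, and bridging the stationary-policy value to the true value by $|V^{\pi^{\hat Q^i_{\sat}}}(s_{t,i})-V^{\widetilde\pi_i}(s_{t,i})|\le\gamma^{\tau(t)}Q_{\max}$, I obtain the stated per-step inequality with $\epsilon_e^i(t)=2\sum_{k_a\in K_a}(\epsilon_{k_a}-\epsilon_{\pi^Q})\sum_{h\in H}h\,p^e_{h,k_a}(s_{t,i})+2\gamma^{\tau(t)}Q_{\max}$.

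Next I would sum $\epsilon_e^i(t)$ over $i$ and $t$ on $\mathcal{G}_2$. The remainder $\sum_i\sum_t\gamma^{\tau(t)}Q_{\max}$ is finite: infinite segments contribute $0$, and each agent has at most $O((1+\log_2(k/k_m))SA)$ finite segments (the number of distinct published $Q$-functions, as in Lemma \ref{lemma:lemma3}), each contributing at most $\sum_{j\ge1}\gamma^j Q_{\max}=\gamma Q_{\max}/(1-\gamma)$, so this piece is $\widetilde{O}(NQ_{\max}SA/(1-\gamma))$. For the main piece, Lemma \ref{lemma:lemma5} gives $\sum_i\sum_t\sum_{h\in H}h\,p^e_{h,k_a}(s_{t,i})=\widetilde{O}(SA(k_a+N)/(1-\gamma))$ (absorbing $T_H=\widetilde{O}(1/(1-\gamma))$, the log factor, and the denominator, which is positive by Assumption \ref{assump:main}), so the contribution is $\widetilde{O}\!\big(\tfrac{SA}{1-\gamma}\sum_{k_a\in K_a}(\epsilon_{k_a}-\epsilon_{\pi^Q})(k_a+N)\big)$. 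The $k_a=k_m$ bin contributes $\widetilde{O}(NQ_{\max}SA/(1-\gamma))$; for $k_a=2^pk_m$ with $p\ge1$ the identity $\epsilon_b^2=4k_m(\sigma^2+\sigma_R^2)$ gives $\epsilon_{k_a}-\epsilon_{\pi^Q}\le 2\epsilon_b/\sqrt{k_a^-}=O(\sqrt{\sigma^2+\sigma_R^2}\,2^{-p/2})$, and the geometric sum over $p$ is dominated at the top end — using the minimal admissible $k=\Theta(k_m(\sigma^2+\sigma_R^2)/((1-\gamma)^2\epsilon_s^2))$ — by $\widetilde{O}\!\big((\sigma^2+\sigma_R^2)/((1-\gamma)\epsilon_s)\big)$ and at the bottom end by $\widetilde{O}(N\sqrt{\sigma^2+\sigma_R^2})$. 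Adding the pieces yields the claimed $TCE=\widetilde{O}\!\big((N(Q_{\max}+\sqrt{\sigma^2+\sigma_R^2})+\tfrac{\sigma^2+\sigma_R^2}{\epsilon_s(1-\gamma)})SA/(1-\gamma)\big)$.

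The step I expect to be the main obstacle is the segment bookkeeping: making rigorous that inside a segment the policy \emph{and} all sample sets stay frozen, so that the conditional probabilities $p^e_{h,k_a}(s_{t,i})$ in Lemma \ref{lemma:lemma5} really coincide with the trajectory probabilities that Lemma \ref{lemma:BellmanDiff} needs; arranging the horizon truncation so that the $\gamma^{\tau(t)}Q_{\max}$ remainder telescopes rather than diverging over the (possibly unbounded) sequence of segments; and tracking the $\epsilon_s$ bookkeeping and the $(1+3\gamma)$ factor so the leading constants in the per-step bound come out exactly as stated. Once these are in place, the union bound for $\mathcal{G}_1\cap\mathcal{G}_2$ and the geometric summation over $K_a$ are routine.
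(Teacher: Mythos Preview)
Your proposal is correct and follows essentially the same route as the paper: fix the two high-probability events from Lemmas \ref{lemma:lemma4} and \ref{lemma:lemma5}, partition state-actions into the bins $X_{k_a}$, feed the Bellman-error constants from Lemma \ref{lemma:lemma4} into Lemma \ref{lemma:BellmanDiff} segment-by-segment, and then sum the resulting $\epsilon_e^i(t)$ by bounding the $\gamma^{\tau(t)}Q_{\max}$ tail via the $O((1+\log_2(k/k_m))SA)$ count of policy changes and the main term via the geometric sum over $K_a$. The only cosmetic differences are that the paper drops the $-\epsilon_{\pi^Q}$ subtraction in $\epsilon_{k_a}-\epsilon_{\pi^Q}$ (a harmless upper bound) and carries a single $\gamma^{\tau(t)}Q_{\max}$ rather than your $2\gamma^{\tau(t)}Q_{\max}$, neither of which affects the $\widetilde O$ conclusion; the segment-freezing concern you flag is exactly what the buffered-update mechanism in Algorithm \ref{alg:Communication noise tolerant PAC Exploration} is designed to guarantee, and the paper invokes it in the same way.
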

\begin{proof}
In short, the non-stationary policy of each one of the agents can be broken up into fixed-policy segments, in which we follow an estimated $Q$-function greedily. Given that the Bellman error for each segment is acceptably small, lemma \ref{lemma:BellmanDiff} states that the greedy policy for that segment has a bounded error with respect to the optimal policy . We use Lemma \ref{lemma:lemma4} to show that with high probability, the Bellman errors of all $Q$-functions for all agents during the run of the algorithm are bounded, and then combine it with Lemma \ref{lemma:lemma5} which bounds the number of times we can encounter state-actions for which we haven't collected enough samples yet. Combining these three theorems, we get to a sample complexity bound for the algorithm as a whole. \\\\
Corresponding to lemma \ref{lemma:BellmanDiff}, the groups we divide the state-actions to for each agent are $X_{k_{a}}$ as defined in Lemma \ref{lemma:lemma5}. We now have to show that state-actions in each group have the same bound on their Bellman error (while different groups may have different bounds). According to Lemma \ref{lemma:lemma4} we have that with probability larger than $1-\frac{\delta}{2}$, for all state-actions during the run of the algorithm
\begin{equation*}
    \begin{gathered}
              \hat{Q}^{i}_{sat}(s,a)-B^{\pi^{*}}\hat{Q}^{i}_{sat}(s,a) > -\epsilon_{a}-(n_{c}^{i}+\mu_{c}^{i})(1+3\gamma)
    \end{gathered}
\end{equation*}
For the required upper bound on the Bellman error, we will divide the state-actions to groups.
\begin{enumerate}
    \item For the first group $X_{k_{m}}$ note that none of the previous theorems holds, since these are state-actions with no samples in their sample set. Therefore, we simply use the fact that the $Q$-functions are all bounded by our definition:
    \begin{equation*}
        \forall (s,a) \in X_{k_{m}}: \hat{Q}^{i}_{sat}(s,a)-B^{\pi^{\hat{Q}^{i}_{sat}}}\hat{Q}^{i}_{sat}(s,a) \leq Q_{\max}
    \end{equation*}
    \item For all $(s,a) \in X_{k_{a}},k_{a}>0$, we have again from Lemma \ref{lemma:lemma4} that
    \begin{equation*}
        \begin{aligned}
                  \hat{Q}^{i}_{sat}(s,a)-B^{\pi^{Q}}\hat{Q}^{i}_{sat}(s,a) < \epsilon_{a}+(n_{c}^{i}+\mu_{c}^{i})(1+3\gamma)+2\sqrt{\frac{8k_{m}(\sigma^{2}+\sigma^{2}_{R})}{k_{a}}}
        \end{aligned}
    \end{equation*}
    Where we substitute $\epsilon_{b}^{2}=4k_{m}(\sigma^{2}+\sigma_{R}^{2})$, and use the fact that the fact that $k_{a}^{-}\geq\frac{k_{a}}{2}$.
    \item Finally, for state-actions with $k$ samples in the sample set, we can use the assumption $k \geq \ceil{\epsilon_{b}^{2}/((1-\gamma)^2\epsilon_{s}^2)}$ and have
    \begin{equation*}
        \begin{gathered}
                  \hat{Q}^{i}_{sat}(s,a)-B^{\pi^{Q}}\hat{Q}^{i}_{sat}(s,a) < \epsilon_{a}+(n_{c}^{i}+\mu_{c}^{i})(1+3\gamma)+2(1-\gamma)\epsilon_{s}
        \end{gathered}
    \end{equation*}
\end{enumerate}
Now, we know that even though the policy $\widetilde{\pi}_{i}$ is non-stationary, it is comprised of stationary segments. Using the definitions from Lemma \ref{lemma:lemma5}, starting from step t for agent $i$, $\widetilde{\pi}_{i}$ is equal to $\pi^{\hat{Q}^{i}_{sat}}$ for at least $\tau(t)$ steps. Combining the results from Lemma \ref{lemma:lemma5} with the bounds we have listed above, we have that with a probability larger than $1-\delta$, for all time steps and all agents:
\begin{equation*}
    \begin{gathered}
       V^{\widetilde{\pi}_{i}}(s_{t,i}) \geq V^{*}(s_{t,i}) - \frac{2\epsilon_{a}+2(1+3\gamma)(n_{c}^{i}+\mu_{c}^{i})}{1-\gamma} -3\epsilon_{s} -\epsilon_{e}^{i}(t)
    \end{gathered}
\end{equation*}
with
\begin{equation*}
    \begin{aligned}
        \epsilon_{e}^{i}(t) = &\gamma^{\tau(t)}Q_{\max}+2\sum_{h \in H}\left(hp^{e}_{h,k_{m}}(s_{t,i})\right)Q_{\max}+\\
        &\sum_{k_{a} \in \{K_{a}-k_{m}\}}2\sum_{h \in H}\left(hp^{e}_{h,k_{a}}(s_{t,i})\right)2\sqrt{\frac{8k_{m}(\sigma^{2}+\sigma^{2}_{R})}{k_{a}}}
    \end{aligned}
\end{equation*}
We now calculate a bound over the $TCE$. With a probability larger than $1-\delta$ we have
\begin{equation*}
    \begin{aligned}
            \sum_{i=1}^{N}\sum_{t=0}^{\infty}\epsilon_{e}^{i}(t) = &\sum_{i=1}^{N}\sum_{t=0}^{\infty}\left(\gamma^{\tau(t)}Q_{\max}+2\sum_{h \in H}\left(hp^{e}_{h,k_{m}}(s_{t,i})\right)Q_{\max}\right) \\ &+\sum_{i=1}^{N}\sum_{t=0}^{\infty}\sum_{k_{a} \in \{K_{a}-k_{m}\}}2\sum_{h \in H}\left(hp^{e}_{h,k_{a}}(s_{t,i})\right)2\sqrt{\frac{8k_{m}(\sigma^{2}+\sigma^{2}_{R})}{k_{a}}} \\
            &\leq \frac{NSAQ_{max}\ceil{1+\log_{2}\frac{k}{k_{m}}}}{1-\gamma}+\frac{2Q_{max}SA(k_{m}+N)(1+\log_{2}T_{H})T_{H}}{1-\sqrt{\frac{2T_{H}}{SA(k_{m}+N)}\ln\frac{2|K_{a}|}{\delta}}}\\
            &+2\sum_{k_{a} \in \{K_{a}-k_{m}\}}\frac{2\sqrt{\frac{8k_{m}(\sigma^{2}+\sigma^{2}_{R})}{k_{a}}}SA(k_{a}+N)(1+\log_{2}T_{H})T_{H}}{1-\sqrt{\frac{2T_{H}}{SA(k_{a}+N)}\ln\frac{2|K_{a}|}{\delta}}}
    \end{aligned}
\end{equation*}
Here we used the fact there are at most $\ceil{1+\log_{2}\frac{k}{k_{m}}}SA$ policy changes for the first term, and substituted the result from lemma \ref{lemma:lemma5} for the second and third term. Continuing 
\begin{equation*}
    \begin{aligned}
            \sum_{i=1}^{N}\sum_{t=0}^{\infty}\epsilon_{e}^{i}(t) <
            &\frac{Q_{max}SA(2k_{m}+N\ceil{3+\log_{2}\frac{k}{k_{m}}})(1+\log_{2}T_{H})T_{H}}{1-\sqrt{\frac{2T_{H}}{SA(k_{m}+N)}\ln\frac{2|K_{a}|}{\delta}}}\\
            &+\frac{2\sum_{k_{a} \in \{K_{a}-k_{m}\}}2\sqrt{\frac{8k_{m}(\sigma^{2}+\sigma^{2}_{R})}{k_{a}}}SA(k_{a}+N)(1+\log_{2}T_{H})T_{H}}{1-\sqrt{\frac{2T_{H}}{SA(k_{m}+N)}\ln\frac{2|K_{a}|}{\delta}}} \\
            &=\frac{Q_{max}SA(2k_{m}+N\ceil{3+\log_{2}\frac{k}{k_{m}}})(1+\log_{2}T_{H})T_{H}}{1-\sqrt{\frac{2T_{H}}{SA(k_{m}+N)}\ln\frac{2|K_{a}|}{\delta}}}\\
            &+\frac{2\sum_{k_{a} \in \{K_{a}-k_{m}\}}2\sqrt{8k_{m}(\sigma^{2}+\sigma^{2}_{R})}SA(\sqrt{k_{a}}+\frac{N}{\sqrt{k_{a}}})(1+\log_{2}T_{H})T_{H}}{1-\sqrt{\frac{2T_{H}}{SA(k_{m}+N)}\ln\frac{2|K_{a}|}{\delta}}} \\
            &\leq \frac{Q_{max}SA(2k_{m}+N\ceil{3+\log_{2}\frac{k}{k_{m}}})(1+\log_{2}T_{H})T_{H}}{1-\sqrt{\frac{2T_{H}}{SA(k_{m}+N)}\ln\frac{2|K_{a}|}{\delta}}}\\
            &+\frac{\sqrt{8k_{m}(\sigma^{2}+\sigma^{2}_{R})}SA(1+\log_{2}T_{H})T_{H}}{1-\sqrt{\frac{2T_{H}}{SA(k_{m}+N)}\ln\frac{2|K_{a}|}{\delta}}}\\
            &\times\left(4\sqrt{k}\left(1+\sum_{l=0}^{\infty}\left(\frac{1}{2^{l}}+\frac{1}{2^{l}\sqrt{2}}\right)\right)+4N\left(\sum_{l=1}^{\infty}\left(\frac{1}{2^{l}}+\frac{1}{2^{l}\sqrt{2}}\right)\right)\right)
    \end{aligned}
\end{equation*}
Here we used the fact that the number of possible values for $k_{a}$ is bounded. By bounding the sums in the last equation we get to a simpler expression 
\begin{equation*}
    \begin{aligned}
            \sum_{i=1}^{N}\sum_{t=0}^{\infty}\epsilon_{e}^{i}(t) <
            &\frac{Q_{max}SA(2k_{m}+N\ceil{3+\log_{2}\frac{k}{k_{m}}})(1+\log_{2}T_{H})T_{H}}{1-\sqrt{\frac{2T_{H}}{SA(k_{m}+N)}\ln\frac{2|K_{a}|}{\delta}}}\\
            &+\frac{\sqrt{8k_{m}(\sigma^{2}+\sigma^{2}_{R})}SA(1+\log_{2}T_{H})T_{H}}{1-\sqrt{\frac{2T_{H}}{SA(k_{m}+N)}\ln\frac{2|K_{a}|}{\delta}}}(18\sqrt{k}+10N) \\
            &=\frac{SA(1+\log_{2}T_{H})T_{H}}{1-\sqrt{\frac{2T_{H}}{SA(k_{m}+N)}\ln\frac{2|K_{a}|}{\delta}}}\\
            &\times\left(Q_{max}(2k_{m}+N\ceil{3+\log_{2}\frac{k}{k_{m}}})+(18\sqrt{k}+10N)\sqrt{8k_{m}(\sigma^{2}+\sigma^{2}_{R})}\right)
    \end{aligned}
\end{equation*}
We now assume that for a state-action space of realistic size, the denominator is approximately 1 and write
\begin{equation*}
    \begin{aligned}
            \sum_{i=1}^{N}\sum_{t=0}^{\infty}\epsilon_{e}^{i}(t) 
            &< \left(SA(1+\log_{2}T_{H})\ceil{\frac{1}{1-\gamma}\ln\frac{Q_{max}}{\epsilon_{s}}}\right)\\
            &\times \left(Q_{max}(2k_{m}+N\ceil{3+\log_{2}\frac{k}{k_{m}}})+(18\sqrt{k}+10N)\sqrt{8k_{m}(\sigma^{2}+\sigma^{2}_{R})}\right)
    \end{aligned}
\end{equation*}
In order to get to a big $O$ notation, we write the various terms in the previous equation explicitly and ignore logarithmic terms. Note that $k_{m}$ is assume to be a logarithm term, and we can assume that $k = C\ceil{\epsilon_{b}^{2}/((1-\gamma)^2\epsilon_{s}^2)}$ for some constant $C$ (this does not violate assumption \ref{assump:main}). Therefore
\begin{equation*}
    \begin{aligned}
            \sum_{i=1}^{N}\sum_{t=0}^{\infty}\epsilon_{e}^{i}(t) = \widetilde{O}\left(\left(N\left(Q_{\max}+\sqrt{\sigma^{2}+\sigma^{2}_{R}}\right)+\frac{\sigma^{2}+\sigma^{2}_{R}}{\epsilon_{s}(1-\gamma)}\right)SA/(1-\gamma)\right)
    \end{aligned}
\end{equation*}
\end{proof}
Using Theorem \ref{theo:sample complexity}, the following theorem abandons the concentration bound assumption \ref{assump:Concentration Bound} over the learner-agent noise terms, and instead assumes the noise to be sub-Gaussian with a bounded mean and bounded parameter. We note that we can actually prove such a theorem for the more general case of noise with a bounded mean and variance. However, in such a case we replace a logarithmic term with a non-logarithmic one which worsens the bound. This is to be expected, as we have less information regarding the tails of a general distribution.  
\samplecomplexityvar*
\begin{proof}
From the definition of the noise in \eqref{eq: noise decomposition} and from assumption \ref{assump: Noise_assumptions2}, we have that $b^{i}(s,a)$ is subgaussian with
\begin{equation*}
    \begin{gathered}
              P(|b^{i}(\overline{s},\overline{a})|\geq n_{c}^{i}) \leq 2e^{-\frac{n^{2}_{c}}{2\sigma^{2}_{c}}}
    \end{gathered}
\end{equation*}
Setting
\begin{equation*}
    \begin{gathered}
              2e^{-\frac{n^{2}_{c}}{2\sigma^{2}_{c}}} = \frac{\delta}{12N\ceil{1+\log_{2}\frac{k}{k_{m}}}(SA)^{3}}
    \end{gathered}
\end{equation*}
results in $n_{c}^{i}=\sigma^{i}_{c}\sqrt{2\ln\left(\frac{24N\ceil{1+\log_{2}\frac{k}{k_{m}}}(SA)^{3}}{\delta}\right)}$. Now given the event that $|b^{i}(\overline{s},\overline{a})|\leq n_{c}^{i}$, we know that $|b^{i}(\overline{s},\overline{a})+m^{i}(\overline{s},\overline{a})|\leq n_{c}^{i}+\Delta Q^{i}_{c}$. Therefore
\begin{equation*}
    \begin{aligned}
              P\left\{|b^{i}(\overline{s},\overline{a})+m^{i}(\overline{s},\overline{a})|\geq n_{c}^{i}+\Delta Q^{i}_{c}\right\} \leq
              P(|b^{i}(\overline{s},\overline{a})|\geq n_{c}^{i}) \leq
              \frac{\delta}{12N\ceil{1+\log_{2}\frac{k}{k_{m}}}(SA)^{3}}
    \end{aligned}
\end{equation*}
so that the total equivalent noise $n=b+m$ is of mean $0$ and has the right probability bound. Substituting this back in the bound for Theorem \ref{theo:sample complexity} concludes our proof. 
\end{proof}
Compared to Theorem \ref{theo:sample complexity}, Theorem \ref{theo:sample complexity2} requires information about nothing but the first and second moments of the noise, enables a bounded noise term $m^{i}(s,a)$ to be present, and suggests that the sample complexity bound is proportional to a bound on the the sub-Gaussian parameter of the noise, and on the bound on $m^{i}(s,a)$.
\section{Estimation methods proofs and properties}
\label{sec:Appendix B: Estimation methods proofs and properties}
In this section, we provide proofs and further information about the optimal linear estimation theorems from Section \ref{sec:Estimation methods}. The proofs all consist of finding the parameter of the equivalent sub-Gaussian noise resulting from the weighted sum \eqref{eq:ApproxSet}, and minimizing it with regard to the estimation weights. For simplicity and without loss of generalization, we will present the proofs for agent $N$ as the receiving agent, and denote $\sigma^{N}_{L} \triangleq \sigma_{L},\sigma^{j,N}_{A} \triangleq \sigma_{A,j}, \sigma_{c}^{N} \triangleq \sigma_{c}$. 
\subsection{General additive noise model}
\label{General additive noise model}
\nonidenticaladditivenoise*
\begin{proof}
We know that a linear sum of independent sub-Gaussian random variables is a sub-Gaussian with the sum of their collective parameters. Furthermore, since the weights all sum up to $1$, by \eqref{eq: noise decomposition},\eqref{eq: learner and agent noise} \eqref{eq:ApproxSet} we can write
\begin{equation*}
\begin{aligned}
        \hat{Q}^{N}(s,a) &= Q(s,a)+n^{N}(s,a)\\
        n^{N}(s,a) &= w_{NN}n^{N}_{L}(s,a) + \sum_{j=1}^{d_{N}}w_{jN}n^{j,N}_{A}(s,a)
\end{aligned}
\end{equation*}
with $En^{N}(s,a)=0$ and parameter
\begin{equation*}
\begin{aligned}
        \sigma_{c}^{2}(\mathbf{w}) = \sum_{j=1}^{d_{N}}w_{jN}^{2}\left(\sigma_{L}^{2}+\sigma_{A,j}^{2}\right)+\left(1-\sum_{j=1}^{d_{N}}w_{jN}\right)^{2}\sigma_{L}^{2}~.
\end{aligned}
\end{equation*}
This means the sample complexity bound of Theorem \ref{theo:sample complexity} holds. Since the term $\sigma_{c}(\mathbf{w})$ is the only one depending on the weights in the above-mentioned bound, minimizing this term over the weights results in optimizing the sample complexity bound. 

Calculating the derivative, we get that 
\begin{equation*}
\frac{\partial \sigma_{c}^{2}(\mathbf{w})}{\partial \mathbf{w}} = A\mathbf{w}-\boldsymbol{\sigma}_{L}~,
\end{equation*}
meaning that the original function can be written in the following form
\begin{equation*}
\sigma_{c}^{2}(\mathbf{w}) = \frac{1}{2}\left(\mathbf{w}-A^{-1}\boldsymbol{\sigma}_{L}\right)^{T}A\left(\mathbf{w}-A^{-1}\boldsymbol{\sigma}_{L}\right)+C
\end{equation*}
where C is some constant. We can use the Matrix determinant lemma from \citep{detlemma} (where $X$ is a matrix and $\mathbf{u},\mathbf{v}$ are some vectors) $\mathrm{det}\left(X+\mathbf{u}\mathbf{v}^{T}\right)= \left(1+\mathbf{v}^{T}X^{-1}\mathbf{u}\right)\mathrm{det}(X)$ to show that the principle minors of A (which are all of the same form) are all positive, meaning that A is positive definite by Sylvester's criterion. This means that $\sigma_{c}(\mathbf{w})$ has a single global minimum at
\begin{equation*}
\label{optimal w]}
\mathbf{w}^{*}=A^{-1}\boldsymbol{\sigma}_{L}~. 
\end{equation*}
Now, by substituting $\sigma_{c}^{2}(\mathbf{0})=\sigma_{L}^{2}=\frac{1}{2}\left(A^{-1}\boldsymbol{\sigma}_{L}\right)^{T}A\left(A^{-1}\boldsymbol{\sigma}_{L}\right)+C$ we can easily see that  
\begin{equation*}
\sigma_{c}^{*} \triangleq \sigma_{c}(\mathbf{w}^{*}) = \sigma_{L}\sqrt{w_{NN}^{*}}~.
\end{equation*}
\end{proof}
\paragraph{Properties of the optimal solution}
\label{properties of the optimal solution}
We list some properties exhibited by the general optimal solution.
\begin{itemize}
	\item We have that $\sigma_{c}(\mathbf{w}) \geq 0 $ by definition, and also that $\sigma_{c}(\mathbf{w}^{*}) \leq \sigma_{c}(\mathbf{0})=\sigma_{L}^{2}$, and therefore $0 \leq w_{NN}^{*} \leq 1$. This also means that the optimal solution ensures that we get a lower sample complexity compared to the case without communication, where the variance is $\sigma_{L}^{2}$. 
	\item The optimal solution results in $\sigma_{c}^{*}$ smaller than the case of a uniform average.
	\item We can furthermore show that 
	\begin{equation*}
	\forall j \in\{1,...,d_{N}\}: w_{jN}^{*} \in [0,1] 
	\end{equation*}
	which is intuitive. 
	\item From the mathematical form of $\sigma_{c}(\mathbf{w})$, it is easy to deduce that the larger the variance $\sigma_{A,j}^{2}$ is, the smaller the corresponding optimal weight $w_{jN}^{*}$ gets. 
	\item In the case where $\sigma_{A,j}=\sigma_{A,l}$ for some $j$ and $l$, we have from the symmetry property of $\sigma_{c}(\mathbf{w})$ that $w_{jN}^{*}=w_{lN}^{*}$.
\end{itemize}
Let us now also consider two extreme cases and see whether our solution corresponds to intuition.
\begin{itemize}
	\item In the case where $\sigma_{A,j} \gg \sigma_{L}$ for some $j \in\{1,...,d_{N}\}$, it is evident from both intuition and the explicit solution that $w_{jN}^{*} \ll 1$. 
	\item In the case where $\sigma_{L} \gg \sigma_{A,j}$ for all $j \in\{1,...,d_{N}\}$, we have that \begin{equation*}
	\sigma_{c}(\mathbf{w}) = \sum_{j=1}^{d_{N}}w_{jN}^{2}\sigma_{L}^{2}+\left(1-\sum_{j=1}^{d_{N}}w_{jN}\right)^{2}\sigma_{L}^{2}
	\end{equation*} 
	which is a case where all the weights are identical. By intuition and by substituting in the explicit solution, we have that $w_{jN}^{*}=1/(d_{N}+1)$ for all $j \in\{1,...,d_{N}\}$.
\end{itemize}
\paragraph{Special examples}
\label{special examples}
We exemplify the properties introduced previously over simple cases.
\begin{enumerate}
	\item First, let us consider the case of three fully-connected agents. By calculating the optimal solution we can learn about the effects difference noise variances have on the solution. We can see in Figure \ref{fig:two_agents_optimal_solution} that, as expected, the self optimal weight $w^{*}_{33}$ (proportional to $\sigma_{c}^{*}$ itself) decreases as the inter-agent noise parameters $\frac{\sigma_{A,1}^{2}}{\sigma_{L}^{2}},\frac{\sigma_{A,2}^{2}}{\sigma_{L}^{2}}$ decrease, meaning that it is beneficial to not only use the less noisy $Q$-function received from the learner. The optimal weight for the case of no inter-agent noise is $w^{*}_{33}=\frac{1}{3}$, but the larger the noise between the agents is, the better it is to rely more on the less-noisy $Q$-function. We can also see that the larger the relation $\frac{\sigma_{A,1}^{2}}{\sigma_{A,2}^{2}}$ gets, the smaller $\frac{w_{13}^{*}}{w_{23}^{*}}$ becomes, as explained earlier.
\item In order to describe the dependence of the optimal weights on the number of agents, we examine a case in which there are 2 groups of agents of size $N_{A}$, the first one contains agents whose noise parameters are all $\sigma_{A,1}^{2}$, and the second one with noise parameters all equal to $\sigma_{A,2}^{2}$ (all the agents are fully connected). This is an extension of the previous case in which $N_{A}=1$. Besides these two groups, agent number $N$ on which we are focusing is also present, so there are overall $N=2N_{A}+1$ agents. 
In Figure \ref{fig:N_agents_optimal_solution}A, we can see that the more agents there are - the smaller $w_{33}^{*}$ becomes, meaning that it is beneficial to rely on more agents. Since we keep $\sigma_{A,2}=\sigma_{L}$ constant in this case, we see that as the ratio $\frac{\sigma_{A,1}^{2}}{\sigma_{A,2}^{2}}$ increases, $w_{33}^{*}$ increases as well as explained before. In Figure \ref{fig:N_agents_optimal_solution}B we can see that the ratio $\frac{w^{*}_{13}}{w^{*}_{23}}$ remains constant as long as  $\frac{\sigma_{A,1}^{2}}{\sigma_{A,2}^{2}}$ does not change, even for a varying number of agents. 
\end{enumerate}
\begin{figure}[h]
    \includegraphics[width=0.9\textwidth]{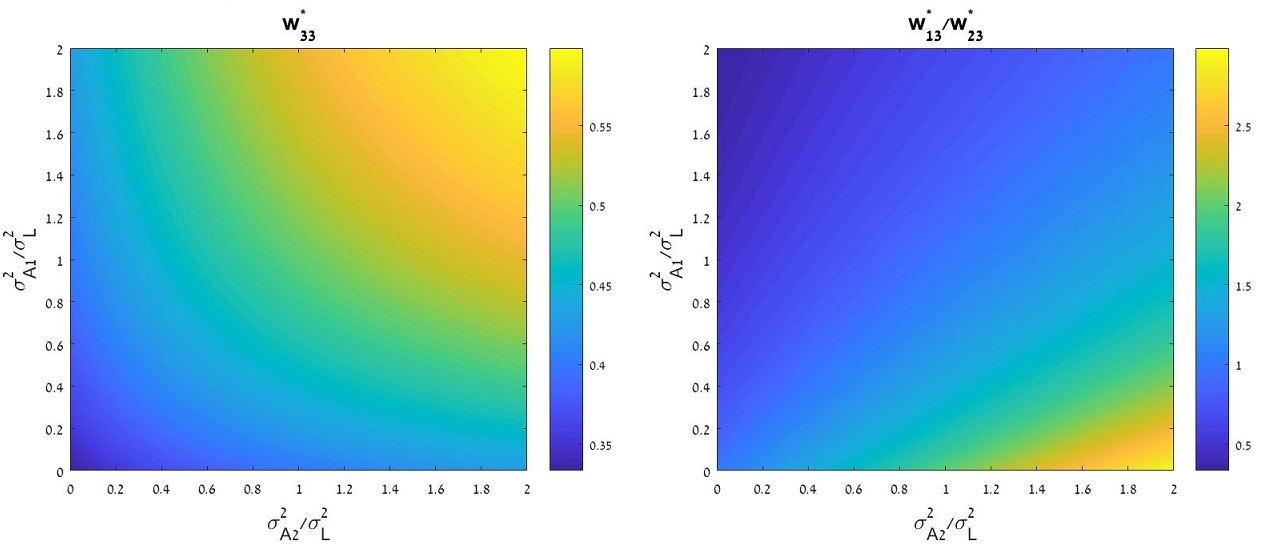}
      \centering
      \caption{\label{fig:two_agents_optimal_solution}Optimal weights for the fully connected 3-agent scenario, as a function of the environment noise parameters. We set $\sigma_{L}=1$ here. \textbf{A:} Optimal weight $w^{*}_{33}$, which is proportional to $\sigma_{c}^{*}$, as a function of $\frac{\sigma_{A,2}^{2}}{\sigma_{L}^{2}},\frac{\sigma_{A,1}^{2}}{\sigma_{L}^{2}}$ \textbf{B:} the ratio $\frac{w^{*}_{13}}{w^{*}_{23}}$ as a function of $\frac{\sigma_{A,2}^{2}}{\sigma_{L}^{2}},\frac{\sigma_{A,1}^{2}}{\sigma_{L}^{2}}$.}
\end{figure}
\begin{figure}[h]
    \includegraphics[width=0.9\textwidth]{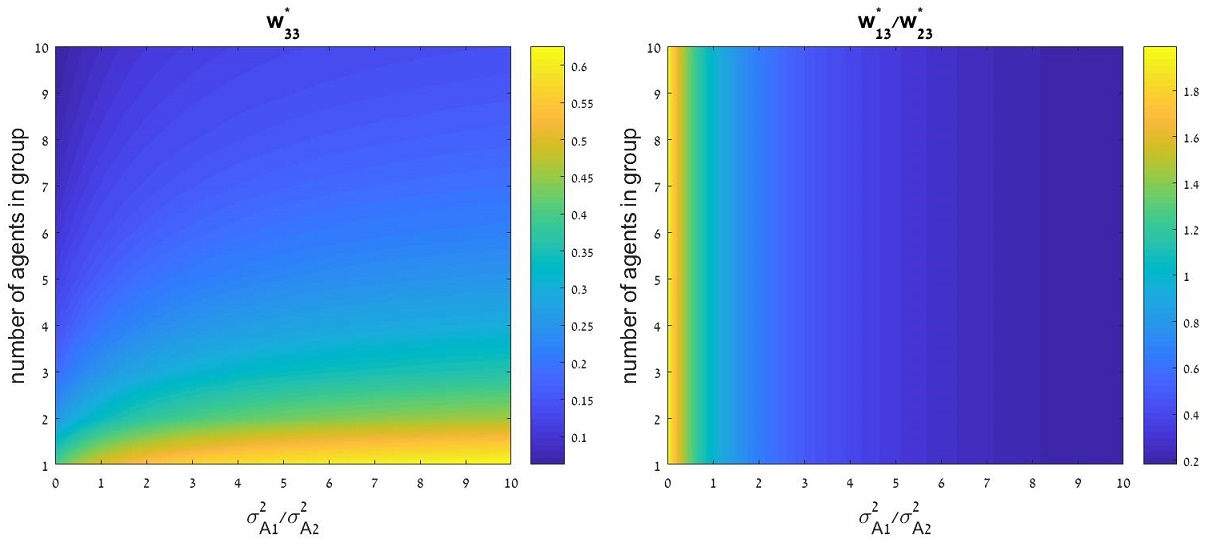}
      \centering
      \caption{\label{fig:N_agents_optimal_solution}Optimal weights for the fully-connected 2-groups scenario, as a function of the environment noise parameters and the number of agents in a group. We set $\sigma_{L}=\sigma_{A,2}=1$ here. \textbf{A:} Optimal weight $w^{*}_{33}$, which is proportional to $\sigma_{c}^{*}$, as a function of $\frac{\sigma_{A,1}^{2}}{\sigma_{A,2}^{2}}$ and $N_{A}$, the number of agents in a group. \textbf{B:} the ratio $\frac{w^{*}_{13}}{w^{*}_{23}}$ as a function of $\frac{\sigma_{A,1}^{2}}{\sigma_{A,2}^{2}}$ and $N_{A}$.}
\end{figure}
\subsection{Same parameters additive noise model}
\label{Same parameters additive noise model}
We present a special case of equal noise parameters. Theorem \ref{theo: identical additive noise} is a direct consequence of Theorem \ref{theo: nonidentical additive noise}. 
\identicaladditivenoise*
\begin{proof}
By substituting $\forall i,j: \sigma^{j}_{L}=\sigma_{L},\sigma^{j,i}_{A}=\sigma_{A}$ for the optimal weights of Theorem \ref{theo: nonidentical additive noise}, we get 
\begin{equation*}
    \begin{gathered}
      w_{ii}^{*} = \frac{\sigma^{2}_{L}+\sigma^{2}_{A}}{(d+1)\sigma^{2}_{L}+\sigma^{2}_{A}}=\frac{1}{1+\frac{d}{1+\sigma^{2}_{A}/\sigma^{2}_{L}}} \quad ; \quad 
       w^{*}_{ji} =\frac{1-w_{ii}^{*}}{d} \quad \forall j \in\{1,...,d_{i}\}
    \end{gathered}
\end{equation*}
where weights of $Q$-functions with the same noise parameter all have the same weight from symmetry. 

For the second part of the proof, by substituting $\forall j \in\{1,...,d_{i}\}:w_{ji}=\frac{1}{d+1}$ and $w_{ii}=\frac{1}{d+1}$ in $\sigma_{c}(\mathbf{w})$ from the proof of Theorem \ref{theo: nonidentical additive noise}, we get
\begin{equation*}
\begin{gathered}
        \widetilde{\sigma_{c}}^{2} = \frac{\sigma_{L}^{2}}{d+1}+\frac{d}{(d+1)^{2}}\sigma_{A}^{2}
\end{gathered}
\end{equation*}
And we can see that using a uniform average is preferable to using the less noisy $Q$-function from the learner alone, if and only if $\widetilde{\sigma_{c}}^{2} \leq \sigma^{2}_{L}$ which leads to $d+1 \geq \frac{\sigma^{2}_{A}}{\sigma^{2}_{L}}$. 
\end{proof}

\subsection{Quantization noise model}
\label{Quantization noise model}
We now prove Theorem \ref{theo: quantization noise}. 
\quannoise*
\begin{proof}
Without the loss of generality, we will again prove the theorem for agent $N$.
\paragraph{First part:}
We  partition the noise in the weighted $Q$-function $\hat{Q}^{N}$ to two terms:
\begin{equation*}
\begin{aligned}
        \hat{Q}^{N}(s,a) &= Q(s,a)+b^{N}(s,a)+m^{N}(s,a)\\
        b^{N}(s,a) &\triangleq w_{NN}b_{L}^{N}(s,a) + \sum_{j = 1}^{d_{N}}w_{jN}(b_{L}^{j}+b^{j,N}_{A}(s,a))\\
        m^{N}(s,a) &\triangleq w_{NN}m_{L}^{N}(s,a) + \sum_{j = 1}^{d_{N}}w_{jN}(m_{L}^{j}+m^{j,N}_{A}(s,a))
\end{aligned}
\end{equation*}
We know that a linear sum of independent sub-Gaussian random variables is a sub-Gaussian with the sum of their collective parameters. Therefore, $b^{N}(s,a)$ is a mean $0$ sub-Gaussian with some parameter $\sigma_{c}^{2}$ as in Theorem \ref{theo: nonidentical additive noise}. Regarding the quantization noise, we have 
\begin{equation*}
\begin{gathered}
        |m^{N}(s,a)| \leq \Delta Q^{N}|w_{NN}|+ \sum_{j = 1}^{d_{N}}\left(\Delta Q^{j}+\Delta Q^{j,N}\right)|w_{jN}| \triangleq \Delta Q_{c}
\end{gathered}
\end{equation*}
Therefore, the conditions of Theorem \ref{theo:sample complexity} hold.
\paragraph{Second Part:}
In the case of equal noise properties, due to the symmetry of the problem, we know that the optimal weights will be such that the $Q$-function that agent $N$ receives from the learner has some weight $w \triangleq w_{NN}$, and the rest have weights of identical values $\frac{1-w}{d}$. Therefore, $b^{N}(s,a)$ is a mean $0$ sub-Gaussian with parameter $\sigma_{c}^{2} \leq w^{2}\sigma_{L}^{2}+\frac{(1-w)^{2}}{d}(\sigma_{L}^{2}+\sigma_{A}^{2})=\sigma_{L}^{2}\left(w^{2}+2\frac{(1-w)^{2}}{d}\right)$. And regarding the quantization noise:
\begin{equation*}
\begin{gathered}
        |m^{N}(s,a)| \leq \Delta Q \left(|w|+\sum_{j = 1}^{d}|\frac{1-w}{d}|(1+1)\right)=\Delta Q \left(|w|+2|1-w|\right)
\end{gathered}
\end{equation*}
In order to find the optimal weight $w^{*}$ that minimizes the sample complexity bound, define
\begin{equation*}
    \begin{gathered}
            g(w) \triangleq \sigma_{L}f\sqrt{w^{2}+2\frac{(1-w)^{2}}{d}} +\Delta Q \left(|w|+2|1-w|\right)
    \end{gathered}
\end{equation*}
Where the function $f$ is defined in Theorem \ref{theo:sample complexity}. From the sample complexity bound
\begin{equation*}
    \begin{gathered}
       V^{\widetilde{\pi}_{i}}(s_{t,i}) \geq V^{*}(s_{t,i}) - \frac{2\epsilon_{a}+2(1+3\gamma)g(w)}{1-\gamma}-3\epsilon_{s} -\epsilon_{e}^{i}(t)
    \end{gathered}
\end{equation*}
It is evident that the function $g(w)$ is the only term depending on the weight $w$, such that minimizing $g(w)$ over $w$ will lead to maximization of the bound. 

Viewing $g(w)$, we can learn a few properties regarding the optimal solution. Notice that the first term is a parabola-like function with a minimum value at $w=\frac{2}{d+2}$ as we have seen in Theorem \ref{theo: identical additive noise}, and the second term has a few cases of discontinuous change. 
\begin{itemize}
	\item For $w<\frac{2}{d+2}$ we have a sum of two decreasing function, therefore $g(w)$ itself decreases. 
	\item For $w>1$, we have a sum of two increasing functions - and therefore $g(w)$ is increasing. 
	\item Therefore, there is a single minimal value of $g(w)$ in the interval $w \in \left[\frac{2}{d+2},1\right]$.   
\end{itemize}
By calculating the derivative of $f$ in $[0,1]$ and finding the value of $w$ for which it vanishes, we have that there is a single solution in this interval, satisfying
\begin{equation}
    \begin{gathered}
    \label{eq:optimal w}
            w=\frac{2}{d+2}\left(1+\frac{d}{\sqrt{(d+2)(2\sigma_{L}^{2}f^{2}/\Delta Q)^{2}-2d}}\right)
    \end{gathered}
\end{equation}
Under the condition that $\frac{f\sigma_{L}}{\Delta Q} \geq \frac{d}{\sqrt{d+2}}$. Otherwise, there is no minimal value in this interval. Furthermore, we can see that the expression for the solution is decreasing with $f$, and that substituting $f\sigma_{L}= \Delta Q$ results in the minimal value being $w=1$.
Therefore, we conclude that:
\begin{itemize}
	\item For $f\sigma_{L}\leq \Delta Q$ the optimal solution is $w^{*}=1$ (either the minimal value is outside of $[0,1]$ and thus contradicts our initial assumption, or there is no minimal value at all and $g(w)$ is decreasing in $[0,1]$).
	\item For $f\sigma_{L}\geq \Delta Q$, the minimal value of $g(w)$ is achieved at $w$ from \eqref{eq:optimal w}, which is indeed inside the interval $\left[\frac{2}{2+d},1\right]$, and gets closer to $0$ as $d$ grows larger.
\end{itemize}
\end{proof}
\section{Computational results}
\label{sec:Appendix C: Computational results}
In this section we give more details about the computational results shown in Figure \ref{fig:simulation}.
In this simulation, 4 agents are randomly initiated in an unknown warp-around $5\times 5$ grid world, and get a reward of $1$ whenever they land on the top-right corner. All other grid locations contain a reward of $0$. When an agent reaches the top-right corner, it is re-initiated at a random location in the grid world. $Q$-functions are noisy with an additive Gaussian noise, under the conditions of Theorem \ref{theo: identical additive noise}, and we assume no reward noise. The agents communicate via a fully-connected communication network $\Gamma$. The results are shown for $150$ parallel experiments, $10$ episodes and 50 steps per episode, where at the beginning of an episode - each agent is re-initiated at a random location in the grid world. We calculate the average accumulated reward and standard deviation for an episode per agent, where the average is calculated over parallel experiments and over all agents (continuous lines). We also use a simulated agent that can move greedily in the grid world using the exact non-noisy $Q$-function (dashed lines). This is done to separate the effects of the noise from those of exploration. In Figure \ref{fig:simulation}, the learner-agent communication noise variance is fixed at $\sigma^{2}_{L}=0.1$, and each one of the figures A,B,C represents a different agent-agent noise variance $\sigma^{2}_{A}$, in order to show the tradeoff between the $\sigma^{2}_{A}/\sigma^{2}_{L}$ ratio and the estimation weighting scheme. The red lines represent a weight $w=1$ for the $Q$-function received from the learner, meaning there is no averaging. The blue lines represent a uniform averaging $w=0.25$, and the black lines represent the optimal weighting scheme suggested by Theorem \ref{theo: identical additive noise}. We note that for an algorithm to be able to perform these estimation method, prior knowledge over the noise parameters is needed. Given such knowledge, each agent can decide on its optimal weights before the exploration begins and use them throughout the whole loop. The green line represents the more realistic case where the algorithm has no information about the parameters of the environmental noise, and estimates the variance on the go in order to use it at the optimal weighting scheme of Theorem \ref{theo: identical additive noise}. We use an estimation scheme similar to that suggested by \citep{VarEst}, such that for agent $i$
\begin{equation*}
    \begin{aligned}
            \hat{w}_{ii}(t) &= \frac{(\hat{\sigma}^{i}_{LA}(t-1))^{2}}{(\hat{\sigma}^{i}_{LA}(t-1))^{2}+(\hat{\sigma}^{i}_{L}(t-1))^{2}} \\ 
            (\hat{\sigma}^{i}_{L}(t))^{2} &= (\hat{\sigma}^{i}_{L}(t-1))^{2}+\frac{\frac{1}{SA-1}\sum_{s,a}\left(\widetilde{Q}^{i}_{L}(s,a)-Q(s,a)\right)^{2}-(\hat{\sigma}^{i}_{L}(t-1))^{2}}{t}\\
            (\hat{\sigma}^{i}_{LA}(t))^{2} &= (\hat{\sigma}^{i}_{LA}(t-1))^{2}+\frac{\frac{1}{SA(N-1)-1}\sum_{s,a,N}\left(\widetilde{Q}^{j,i}_{A}(s,a)-Q(s,a)\right)^{2}-(\hat{\sigma}^{i}_{LA}(t-1))^{2}}{t}
    \end{aligned}
\end{equation*}
$(\hat{\sigma}^{i}_{L}(t))^{2}$ represents the estimation of the variance of the learner-agent noise  ($\sigma^{2}_{L}$) at step $t$, and $(\hat{\sigma}^{i}_{LA}(t))^{2}$ represents the estimation of the agent-agent communication noise variance ($\sigma^{2}_{A}+\sigma^{2}_{L}$) at step $t$, such that the formula for $\hat{w}_{ii}(t)$ is an adaptive version of the optimal weight. The estimation of $(\hat{\sigma}^{i}_{L}(t))^{2}$ in each step is done by using a weighted average of the current estimate with the past one. The estimate given the current $Q$-function is an empiric average over all values in $\widetilde{Q}^{i}_{L}$. $(\hat{\sigma}^{i}_{LA}(t))^{2}$ is calculated similarly, but uses all of the $N-1$ values in $\{\widetilde{Q}^{j,i}_{A}\}$. 
\begin{figure}[h]
    \includegraphics[width=0.5\textwidth]{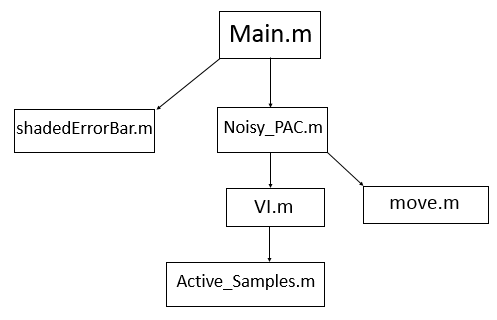}
      \centering
      \caption{\label{fig:code flow}This figure illustrates the code dependencies, for the code producing Figure \ref{fig:simulation}}.
\end{figure}
We use a version of Algorithm \ref{alg:Communication noise tolerant PAC Exploration} that accumulates samples for each state-action instead of replacing them with new ones whenever the number of active samples increases. We use $\epsilon_{a}=10^{-7}, \epsilon_{b}=0.1, k=9, k_{m}=3, \gamma=0.98,Q_{\max}=\frac{R_{max}}{1-\gamma}$ and $30$ Bellman iterations at most each time we perform them. 

MATLAB Code is available at \url{https://github.com/ravehor92/ALT2020-Raveh}, and contains the following sub-codes, while the dependencies are shown in Figure \ref{fig:code flow}.
\begin{itemize}
    \item \emph{main.m:} The main code generating Figure \ref{fig:simulation}.
    \item \emph{Noisy\_PAC.m:} performs a parallel experiments of agents exploring in a grid world, for the same parameters. 
    \item \emph{VI.m:} performs Value Iterations over a given Q function. 
    \item \emph{move:.m} given a state and actions, progress to the next state and receive a reward.
    \item \emph{Active\_Samples.m:} Number of active samples for a given state-action (samples that are currently being used for value iteration). 
    \item \emph{shadedErrorBar.m:} Creates a shaded error bar.
\end{itemize}
\end{sloppy}
\end{document}